\documentclass{article}

\PassOptionsToPackage{numbers, compress, comma}{natbib}

\usepackage[normalem]{ulem}

\usepackage{amsmath,amsfonts,bm}

\def\eqref#1{equation~\ref{#1}}

\def\1{\bm{1}}

\def\eps{{\epsilon}}

\def\rmA{{\mathbf{A}}}

\def\rmH{{\mathbf{H}}}
\def\rmI{{\mathbf{I}}}

\def\rmM{{\mathbf{M}}}

\def\rmW{{\mathbf{W}}}
\def\rmX{{\mathbf{X}}}

\def\rms{{\mathbf{s}}}

\def\vx{{\bm{x}}}

\def\vgamma{{\bm{\gamma}}}
\def\vbeta{{\bm{\beta}}}

\DeclareMathAlphabet{\mathsfit}{\encodingdefault}{\sfdefault}{m}{sl}
\SetMathAlphabet{\mathsfit}{bold}{\encodingdefault}{\sfdefault}{bx}{n}

\usepackage[final]{neurips_2024}

\newcommand{\norm}{\text{Norm}}
\newcommand{\sigmaf}{\Sigma_{\text{F}}}
\newcommand{\sigmai}{\Sigma_{\text{I}}}

\newcommand\kurt[1]{\text{Kurt}(#1)}

\newcommand\actnorm{\rms}
\newcommand\maxmed[1]{\text{MMR}(#1)}
\newcommand\DeltaW{\Delta^{\hspace{-.2em}\rmW}}

\newcommand\DeltaX{\Delta^{\hspace{-.2em}\rmX}}

\newcommand{\defeq}{\overset{\text{\tiny def}}{=}}
\newcommand{\sumjab}{\sum_{j=1}^d\hspace{-0.1em}\sum_{\alpha,\beta=1}^n \hspace{-0.3em}}
\newcommand{\sumja}{\sum_{j=1}^d\hspace{-0.1em}\sum_{\alpha=1}^n \hspace{0em}}
\usepackage[dvipsnames]{xcolor}
\usepackage[most]{tcolorbox}

\newcommand\sectionconc[2]{\begin{tcolorbox}[size=small,colback=ForestGreen!5!white,colframe=ForestGreen!75!black,title=#2,top=1pt,bottom=0pt]
  #1
\end{tcolorbox}}

\usepackage[utf8]{inputenc} %
\usepackage[T1]{fontenc}    %
\usepackage{hyperref}       %
\usepackage{url}            %
\usepackage{booktabs}       %
\usepackage{amsfonts}       %
\usepackage{nicefrac}       %
\usepackage{microtype}      %
\usepackage{wrapfig}
\usepackage{graphicx}
\usepackage[capitalise,compress]{cleveref}
\usepackage{enumitem}
\usepackage{subfigure}
\usepackage{amsmath}
\usepackage{amssymb}
\usepackage{mathtools}
\usepackage{amsthm}
\usepackage{thm-restate}
\usepackage{svg}
\usepackage{multirow}
\usepackage{minitoc}

\theoremstyle{plain}
\newtheorem{theorem}{Theorem}[section]
\newtheorem{proposition}[theorem]{Proposition}
\crefname{assumption}{Assumption}{Assumptions}
\crefname{proposition}{Prop}{Props}
\crefname{corollary}{Corollary}{Corollaries}
\crefname{algorithm}{Alg.}{Algs.}
\crefname{theorem}{Theorem}{Theorems}
\crefname{lemma}{Lemma}{Lemma}
\crefname{figure}{Fig}{Figs}
\crefname{table}{Tab}{Tabs}
\crefname{equation}{Eq}{Eqs}
\crefname{appendix}{App}{Apps}
\crefname{section}{Sec}{Secs}
\crefformat{footnote}{#2\footnotemark[#1]#3}

\title{Understanding and Minimising \\Outlier Features in Transformer Training}

\author{%
  Bobby He$^1$\thanks{Correspondence to \texttt{bobby.he@inf.ethz.ch}} \hspace{1em}  Lorenzo Noci$^1$ \hspace{1em} Daniele Paliotta$^2$ \hspace{1em} Imanol Schlag$^1$\thanks{ETH AI Center} \hspace{1em} Thomas Hofmann$^1$ \\
    $^1$Department of Computer Science, ETH Zürich\\ $^2$Machine Learning Group, University of Geneva\\
}

\begin{document}

\maketitle

\vspace{-1em}
\begin{abstract}
Outlier Features (OFs) are neurons whose activation magnitudes significantly exceed the average over a neural network's (NN) width. They are well known to emerge during standard transformer training and have the undesirable effect of hindering quantisation in afflicted models. Despite their practical importance, little is known behind \textit{why OFs emerge during training}, nor \textit{how one can minimise them}.

Our work focuses on the above questions, first identifying several quantitative metrics, such as the kurtosis over neuron activation norms, to measure OFs. With these metrics, we study how architectural and optimisation choices influence OFs, and provide practical insights to minimise OFs during training. As highlights, we introduce a novel unnormalised transformer block, the \textit{Outlier Protected} block, and present a previously unknown benefit of non-diagonal preconditioning optimisers, finding both approaches to significantly reduce OFs and improve quantisation without compromising convergence speed, at scales of up to 7B parameters. Notably, our combination of OP block and non-diagonal preconditioner (SOAP) achieves 14.87 int8 weight-and-activation perplexity (from 14.71 in standard precision), compared to 63.4 int8 perplexity (from 16.00) with a default OF-prone combination of Pre-Norm model and Adam, when quantising OPT-125m models post-training.
Overall, our findings shed new light on our understanding of, our ability to prevent, and the complexity of this important aspect of NN training dynamics.
\end{abstract}
\vspace{-1.em}
\section{Introduction}
\vspace{-.5em}
Despite their widespread use, our understanding of deep neural networks (NNs) and their training dynamics is very much incomplete. This, in part, reflects the complexity of traversing high-dimensional non-convex loss landscapes but is also symptomatic of the myriad design choices, such as NN architecture and optimiser hyperparameters, that a practitioner must take before training. While standard choices of architecture and optimiser exist, it is often unclear how these choices affect model performance or the emergence of various empirically observed phenomena during NN training.

Outlier Features (OF) are one such training phenomenon. OFs are neurons whose activation magnitudes are significantly larger than average in the same layer, i.e. across NN width \cite{kovaleva2021bert,timkey-van-schijndel-2021-bark,bondarenko2021understanding}. They have been widely observed in the popular transformer NN architecture \cite{devlin2018bert,radford2019language,zhang2022opt}, as we verify in \cref{fig:pythia_main}, and are of practical interest because their existence hinders quantisation \cite{bondarenko2021understanding,wei2022outlier,dettmers2022gpt3,zeng2023glmb,wortsman2023stable,ashkboos2024quarot,nrusimha2024mitigating}. In particular, OFs cause large dynamic ranges in activations across NN width, leading to high quantisation errors in low precision matrix multiplications. As such, Outlier Feature Emergence (OFE) during training hinders low-precision training and inference, and minimising OFE could yield significant efficiency gains.

In this paper, we tackle OFE from two related angles: by (1) proposing interventions to minimise OFE without affecting model convergence or training stability, using insights motivated through (2) enhancing our understanding of why OFs appear during training. We argue that it is important to first understand why OFs appear in transformer training dynamics in order to help identify which design choices influence OFE, and how. 
Though progress has been made \cite{kovaleva2021bert,puccetti2022outliers,elhage2023privileged,wortsman2023stable,bondarenko2023quantizable}, the mechanisms behind OFE remain largely unknown.

\begin{figure}[t]
    \centering
    \includegraphics[width=.99\linewidth]{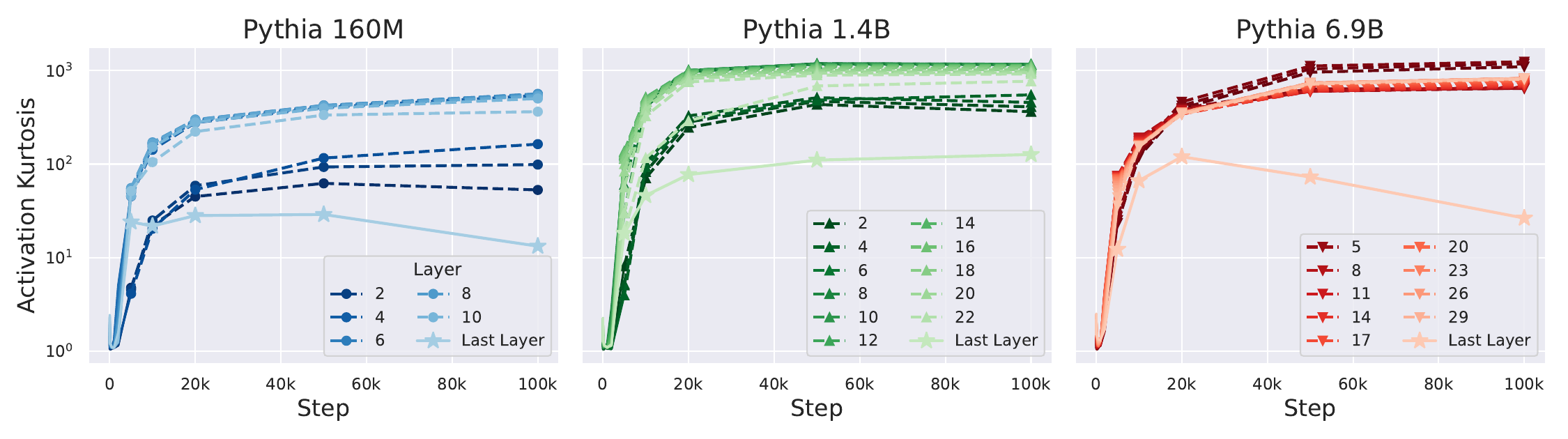}
    \vspace{-1.em}
    \caption{Outlier Features appear in open-source transformers \cite{biderman2023pythia} during training, as measured by our Kurtosis metric \cref{eq:kurt}. Our work investigates the design choices that influence their emergence.}
    \vspace{-1.5em}
    \label{fig:pythia_main}
\end{figure}
Alongside the practical motivation of model quantisation, we believe understanding OFs, and their causes during training, to be an interesting research question for several reasons. The emergence of Outlier Features in standard transformer training regimes raises the question if OFs are simply an artifact of certain design choices or a fundamental property of transformer training, essential for best performance. Understanding (the causes of) OFE better helps us to better understand NN training dynamics in general, and the roles played by different design choices. Moreover, we shed light on the differences between models at initialisation compared to during, or after, training. While NN initialisation is more commonly studied owing to analytic tractability \cite{poole_exp,schoenholz2016deep,alexander2018matthews,lee2018deep,yangtp1,hron2020infinite,hayou2021stable,noci2021precise,martens2021rapid,li2022neural,noci2022signal,he2023deep,noci2023shaped,he2024simplifying}, understanding trained NNs is arguably more important as they exhibit rich feature learning behaviour \cite{chizat2019lazy,cohen2020gradient,yang2020feature,bordelon2022self}, like OFs, that arise during training. In the case of OFs, this has potentially wide-reaching implications, including for NN interpretability, which often focuses on the roles of individual neurons \cite{olah2017feature}.

\vspace{-1em}
\paragraph{Our contributions} Overall, we show that OFE can be mitigated relative to standard practices, and highlight key design choices to do so. We start by introducing OFs, and in particular quantitative metrics to measure OFs in \cref{sec:setup}. In \cref{sec:norm_layers}, we study the role of normalisation layers for OFE, and find that existing hypotheses do not fully capture the OF phenomenon. We proceed to show that removing normalisation through our \textit{Outlier Protected} transformer block minimises OFs, without loss of convergence speed or training stability compared to standard transformer blocks. In \cref{subsec:sig_prop}, we consolidate our findings by identifying signal propagation as an important object that can predict OFs during training, and that choices that improve signal propagation during training also minimise OFE. In \cref{sec:opt_hypers}, we consider optimisation hyperparameters, and highlight the importance of large diagonal adaptive learning rates for OFE. Finally, in \cref{sec:add_exps} we demonstrate the performance of our proposals to minimise OFs both at larger scales, up to 7B parameters, and in terms of improved quantisation performance. In the interests of space, in \cref{app:related} we discuss additional related work.
\vspace{-1em}
\section{Problem Setting}\label{sec:setup}
\vspace{-.5em}
Consider an activation matrix $\rmX\in\mathbb{R}^{n\times d}$ obtained from some neural network layer, where $n$ is the number of batch inputs/sequence positions, and $d$ is the number of neurons across NN width. In a typical NN layer, we matrix multiply $\rmX$ by a weight matrix $\rmW\in \mathbb{R}^{d\times d}$ to give $\rmX\rmW\in\mathbb{R}^{n\times d}$, with $(\alpha,j)^{\text{th}}$ element: $\sum_{k=1}^d \rmX_{\alpha,k}\rmW_{k,j}$. This fundamental operation is central to NN computation and can be seen as a sum over $d$ terms, one for each neuron.

Several works have established that if the magnitudes of the summands $\{\rmX_{\alpha,k}\rmW_{k,j}\}_{k=1}^d$ have large variations, then it becomes difficult to compute their sum in low precision, thereby precluding potential efficiency gains from ``vector-wise'' quantised training or inference (though significant progress has been made on the latter, \cite{dettmers2022gpt3,xiao2023smoothquant,ashkboos2024quarot}). These works have shown that (pre-)trained transformer \cite{vaswani2017attention} models possess such a deficiency, which is attributed to the existence of \textit{Outlier Features} (OFs) whose activations are much larger in magnitude compared to the other $d-1$ neurons.

\vspace{-.5em}
\paragraph{Measuring OFs}Existing works have measured OFs in architecture-specific ways \cite{kovaleva2021bert} or using activation scales $\lVert\rmX\rVert^2_F \defeq \sum_{\alpha\leq n,j\leq d} {\rmX_{\alpha,j}^2} $ \cite{dettmers2022gpt3}. We argue that measuring OFs should be independent of architecture/activation scale: barring exploding/vanishing activation scales, the relative difference in summands is what causes issues for vector-wise quantisation. We use two metrics to measure OFs:

\begin{enumerate} 
\item \textbf{Kurtosis of neuron activation RMS}: Let $\actnorm\in\mathbb{R}^d$, such that $\actnorm_j = \sqrt{\frac{1}{n}\sum_{\alpha=1}^n \rmX_{\alpha,j}^2}$, be the vector of root mean-squared activations across inputs.\footnote{We do not centre $\rmX$ in $s_j$ for simplicity. \cref{fig:kurt_all_layers_codeparrot_centred} shows centring makes no qualitative difference for Kurt($\rmX$).} Then, let $\kurt{\rmX}$ be the ratio of the fourth moment $m_4$ to the squared second moment $m_2$ over the empirical distribution of $\actnorm$:
\vspace{-.5em}
\begin{align}\label{eq:kurt}
    \kurt{\rmX} = \frac{m_4(\rmX)}{m_2(\rmX)^2} \defeq \frac{\frac{1}{d}\sum_{j=1}^d \actnorm_j^4}{\big(\frac{1}{d}\sum_{j=1}^d \actnorm_j^2\big)^2}
\end{align}
\vspace{-1.em}

We see that $\text{min}(\kurt{\actnorm})=1$ when all $\actnorm_j$ are equal and no outlier features exist, and $\text{max}(\kurt{\rmX})=d$, which is the limit when ${d-1}$ neurons have activation magnitudes dominated by a single outlier feature.

\item \textbf{Max-Median Ratio} (across neurons): A metric for OFs more aligned with the original motivation of studying variation in summand magnitudes. %
Specifically, we compute:
\vspace{-.3em}
\begin{align}
    \maxmed{\rmX} \defeq \text{Aggregate}_\alpha \Biggl( \frac{\text{max}_j{|\rmX_{\alpha,j}|}}{\text{median}_j{|\rmX_{\alpha,j}|}} \Biggr),
\end{align}
\vspace{-.8em}

or in words, the max neuron divided by the median absolute neuron, aggregated in some permutation invariant way across inputs. We typically use the mean to aggregate over inputs, but could also take e.g. median or max. MMR takes a minimum value $1$ when all activations are identical in magnitude, and is unbounded when a dominant outlier feature exists.
\end{enumerate}
Variants of $\kurt{\rmX}$ have previously been proposed \cite{elhage2023privileged,bondarenko2021understanding}, but our formulation in \cref{eq:kurt} aggregates activations over inputs first, which allows us to link OFs and signal propagation in \cref{subsec:sig_prop}. Though we focus our analysis on $\kurt{\rmX}$, \cref{fig:pythia_mmr,fig:mmr_base_comp_xl,fig:mmr_all_layers_codeparrot} show that both our OF metrics are highly correlated. In this work, we measure OFs on the residual stream across different layers/blocks. For example, for Pre-Norm or Post-Norm models this is $\rmX_{\text{out}}$ in the notation of \cref{app:transformerblock}.\footnote{As $\maxmed{\rmX}$ is invariant to normalisation layers like RMSNorm without trainable parameters \cite{qin2023scaling}, there is some redundancy here in where exactly we take OF measurements.}
\vspace{-0.5em}

\paragraph{Experimental Setup} Throughout this work, we train transformers on the next-token language modelling task, and study OFs, on a range of datasets, including: 1) CodeParrot,\footnote{\footnotesize \url{https://huggingface.co/datasets/transformersbook/codeparrot-train}.} 2) Languini Books \cite{stanic2023languini}, 3) BookCorpus \cite{zhu2015aligning} and English Wikipedia,\footnote{\footnotesize \url{https://huggingface.co/datasets/google/wiki40b}, using the same setup as \cite{bondarenko2023quantizable}.} and 4) FineWeb-Edu \cite{penedo2024fineweb}. Unless stated otherwise our experimental results are conducted on CodeParrot, but importantly our conclusions regarding OFs are consistent throughout across language modelling datasets. In \cref{app:images}, we also explore OFs in image classification settings with other architectures like Vision Transformers \cite{dosovitskiy2020image} and MLPs.

 In terms of architecture and optimiser, our default choices are the Pre-Norm transformer (\cref{app:transformerblock}) and AdamW  \cite{loshchilov2017decoupled} respectively, which are known to be prone to OFs, e.g. \cref{fig:pythia_main}. Our default architecture scale has width $d=768$ and $6$ layers, giving around 130M parameters, but we demonstrate our findings continue to hold at larger scales (up to 7B parameters) in \cref{sec:norm_layers,sec:add_exps}. In \cref{sec:norm_layers,subsec:sig_prop} we examine alternatives to the Pre-Norm architecture and their effects on OFs, keeping AdamW as optimiser, while from \cref{sec:opt_hypers} onwards we additionally consider modifications to AdamW. Further experimental details and results beyond the main paper can be found in \cref{app:exp,app:further_exps} respectively.

\vspace{-.5em}
\section{Normalisation Layers and Outlier Features}\label{sec:norm_layers}
\vspace{-.5em}

Several works have highlighted the architectural choice of \textit{Layer Normalisation} (LN) \cite{ba2016layer} as a cause of OFE \cite{kovaleva2021bert,wei2022outlier,bondarenko2023quantizable}. LN belongs to a family of normalisation (Norm) layers commonly used in sequence models, which normalise a representation vector $\vx\in\mathbb{R}^d$ across the width dimension independently for different sequence positions. In general, for a centring scalar $c\in\{0,1\}$, a Norm layer maps $\vx$ to:
\begin{equation}\label{eq:norm}
    \norm\left(\vx\right) = \frac{\vx - c\mu(\vx)}{\sigma(\vx)}\odot \vgamma + \vbeta, \;\;\;\; \mu(\vx) = \frac{1}{d}\sum_{i=1}^d \vx_i, \;\;\;\; \sigma(\vx)^2 = \frac{1}{d}\sum_{i=1}^d (\vx_i-c\mu(\vx))^2 
\end{equation}
LN is when $c=1$, with a trainable scale $\vgamma$ and bias $\vbeta$ vectors initialised to all $1$s and $0$s respectively. 

Previous works have attributed OFE to the $\vgamma, \vbeta$ parameters of LN incurring outliers during training \cite{kovaleva2021bert,wei2022outlier}. It is therefore natural to ask if simpler Norms with different formulations of \cref{eq:norm} remove OFE. In particular, \textit{Root Mean Square Normalisation} (RMSNorm) \cite{zhang2019root} is a commonly used Norm known to be as performant as LN in Transformer training \cite{rae2021scaling,touvron2023llama}. Compared to LN, RMSNorm fixes the bias $\vbeta=0$ and removes the centring by setting $c=0$, which highlights that centring is not a crucial operation in modern sequence modelling practices. One step further would be to remove trainable parameters entirely by fixing $\vgamma=1$, thus simply projecting $\vx$ to the hypersphere of norm $\sqrt{d}$. This is dubbed \textit{Simple RMSNorm} (SRMSNorm) by \citet{qin2023scaling}, who find that SRMSNorm has minimal performance degradation but is more computationally efficient than LN and RMSNorm.

\begin{wrapfigure}[11]{rt}{0.37\textwidth}
\centering
\vspace{-2.em}
\includegraphics[width=0.95\linewidth]{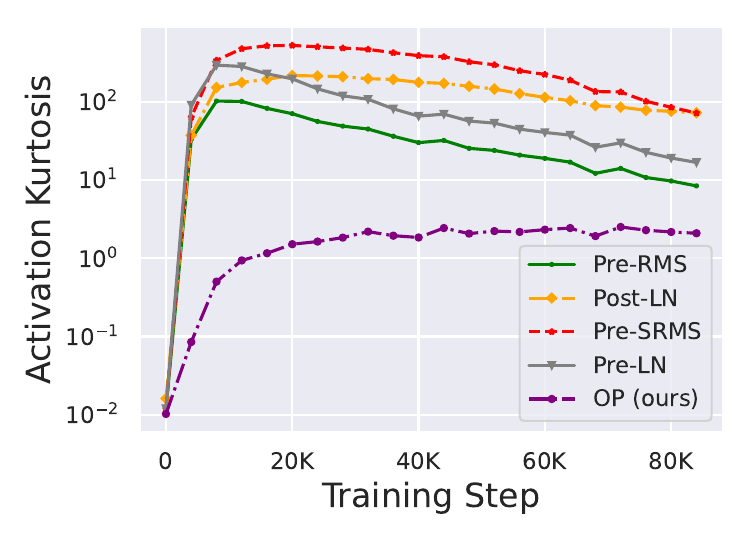}
\vspace{-1.4em}
\caption{\small Kurtosis becomes large (i.e. OFE) with different Norms at 130M scale. We plot the residual stream entering the 2nd of 6 blocks. Other layers in \cref{fig:kurt_all_layers_codeparrot}.}\label{fig:kurt_comp_diff_blocks}
\end{wrapfigure}

We compare these different Norms in \cref{fig:kurt_comp_diff_blocks}, where we see that independent of Norm choice, all Pre-Norm transformers incur OFE: peak kurtosis during training across Norms is over 4 orders of magnitude larger than initialisation. \citet{elhage2023privileged} performed a similar experiment, and drew the same conclusion regarding Norm choice and OFs. We also show OFE not only in Pre-Norm \cite{baevski2018adaptive,child2019generating} but also Post-Norm \cite{vaswani2017attention} blocks (more details on transformer blocks in \cref{app:transformerblock}), highlighting OFE occurs independent of where Norms are commonly placed. 
In \cref{fig:kurt_comp_diff_blocks}, the Pre-SRMSNorm model has highest Kurtosis, despite its lack of trainable Norm weights.

\vspace{.6em}

Having established that removing trainable weights in Norms still results in OFE, the next question we ask is: \textit{how does removing standard Norms entirely influence Outlier Feature emergence}?

\vspace{-1.em}

\paragraph{Recovering training benefits in unnormalised Transformers} This is a challenging question to answer, not least because comparing OFE in architectures that converge at different speeds is not a fair comparison: Norms are well known to be an important component in most NN architectures, providing various benefits for initialisation, convergence speed, and training stability. Thus, to answer the above question, we must first review different hypotheses for the benefits of Norms in transformer training dynamics in order to motivate a novel transformer block that matches the Pre-Norm block in convergence speed, while eschewing standard Norm layers.

\begin{wrapfigure}[20]{r}{0.33\textwidth}
\centering
\vspace{-2.7em}
\includegraphics[width=0.85\linewidth]{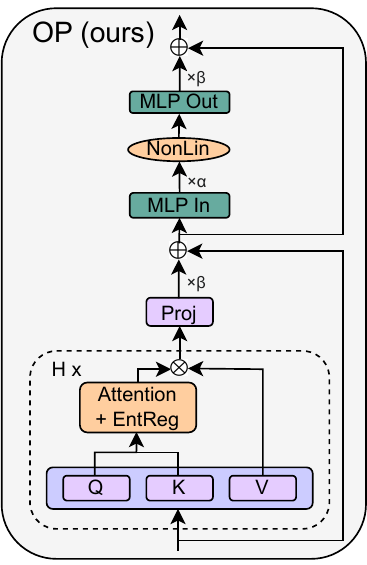}
\vspace{-.5em}
    \caption{\small The Outlier Protected Transformer Block. We remove Pre-Norms and replace them with an Entropy Regulation mechanism to prevent entropy collapse, as well as downscaling residuals with $\beta<1$.}\label{fig:op_block}
\end{wrapfigure}

Several works \cite{zhang2018fixup,sam_soham_batch,pmlr-v119-huang20f,pmlr-v139-brock21a,bachlechner2021rezero,touvron2021going,hayou2021stable,noci2022signal,he2023deep,he2024simplifying} have observed that the initialisation benefits of Pre-Norm architectures can be recovered in unnormalised residual models using downweighted residual branches, through a theory known as Signal Propagation (Signal Prop) \cite{poole_exp,schoenholz2016deep,pmlr-v97-hayou19a}. Notably, \citet{pmlr-v139-brock21a} achieve state of the art performance on the ImageNet benchmark using unnormalised convolutional architectures. However, it has been observed that fixing Signal Prop at initialisation is not sufficient to fully capture the benefits of Norms for training dynamics in unnormalised  transformers \cite{he2023deep,he2024simplifying}, which implies that Norms have training benefits specific to the self-attention based transformer model.

At the same time, \citet{pmlr-v202-zhai23a} show \textit{Entropy Collapse}, where the Stochastic attention matrix has rows with low entropy and each sequence position attends to only one position instead of many, to be a key transformer training instability (see \cref{eq:entropy_collapse}). Entropy collapse occurs because large attention logits saturate the softmax, and several \textit{Entropy Regulation} (EntReg) mechanisms have been proposed to control the attention logits and thus prevent entropy collapse. Existing entropy regulating methods include QK-Norm \cite{henry2020query,dehghani2023scaling}, \textit{tanh} thresholding (\href{https://github.com/xai-org/grok-1/blob/be76c959faa3ee0a6b5fa6770b793ab6e7c9abab/model.py#L865}{Grok-1}), $\sigma$Reparam \cite{pmlr-v202-zhai23a} and clamping the QK logits (\href{https://github.com/databricks/dbrx/blob/8c8ff969117c6e83a2ddeba4ceaeef500b50e789/model/modeling_dbrx.py#L320}{DBRX}). In standard Pre/Post-Norm attention blocks, a Norm layer appears before Query and Key weights and implicitly regulates attention entropy, to an extent.

Our key insight is to combine ideas from Signal Propagation and Entropy Collapse prevention to remove Normalisation layers while keeping their training benefits. This brings us to our \textit{Outlier Protected} (OP) Block, \cref{fig:op_block}, which replaces the Pre-Norm block by removing its normalisation layers in both Attention and MLP sub-blocks, and making three additional changes: 1) downweighting residual branches with some $\beta=O(1/\sqrt{\text{depth}})<1$ to recover Signal Prop benefits of Pre-Norms \cite{sam_soham_batch,hayou2021stable,noci2022signal,he2024simplifying}, 2) adding an Entropy Regulation mechanism to prevent Entropy Collapse; we mainly use QK-Norm as it is relatively simple and performed well in all of our settings, but present experiments with tanh in \cref{app:ablation}, and 3) (optionally) scaling the inputs before the MLP nonlinearity by a scalar $\alpha$ to ensure the nonlinearity inputs are of order 1, as derived by \citet{pmlr-v139-brock21a} using straightforward Signal Prop arguments. \cref{app:OP_block_maths} presents a mathematical description of the OP block.

\begin{wraptable}[12]{rt}{0.26\textwidth}
\setlength{\tabcolsep}{4pt}
    \centering
    \small
    \vspace{-2em}
    \caption{\small OP matches Pre-LN performance at scales up to 1.2B params, on Languini Books \cite{stanic2023languini}.\cref{note1}
    }
    \label{tab:speed_comp}
\resizebox{0.26\textwidth}{!}{\begin{tabular}{lll}
\toprule
Params & Block & Eval PPL  \\
\midrule
100M & Pre-LN &19.1 \\
 & OP & 18.9 \\
 \midrule
320M & Pre-LN   &16.2\\

 & OP & 16.2\\
 \midrule
1.2B & Pre-LN  & 13.9 \\
 & OP  & 13.9 \\
\bottomrule
\end{tabular}}
\end{wraptable}

In \cref{tab:speed_comp}, we show that our Outlier Protected block matches the standard Pre-LN block in terms of convergence speed at scales up to 1.2B parameters when trained with next token prediction on the Languini books dataset \cite{stanic2023languini} for nearly 4.5B tokens.\footnote{\label{note1}We train for 4.2B tokens at 1.2B scale as this took 24 hours on 4 A100 80GB GPUs; we were unable to train for longer due to compute constraints. Scales under 1B were trained on a single A5000 or RTX-2080Ti GPU, taking around 2 days for 3.3B tokens (or equivalently, 50K steps at batch size 128 and sequence length 512).} In \cref{app:ablation}, we ablate our OP block and show that the lack of an entropy regulation mechanism without normalisation layers causes training instabilities. 
This demonstrates that preventing entropy collapse is necessary to match training stability and convergence speed in unnormalised Transformers.

We note that independent of OFs, the OP block is interesting in its own right because it shows that the initialisation-time Signal Prop and Entropy Collapse benefits of Norms in Transformers can be disentangled, and also reveals what was missing in previous methods that used Signal Prop arguments to correct initialisation defects in simplified unnormalised Transformers \cite{he2023deep,he2024simplifying}. %
However, we now focus on the benefits of the Outlier Protected block in reducing outlier features.

\begin{figure}[t]
    \centering
    \includegraphics[width=0.99\linewidth]{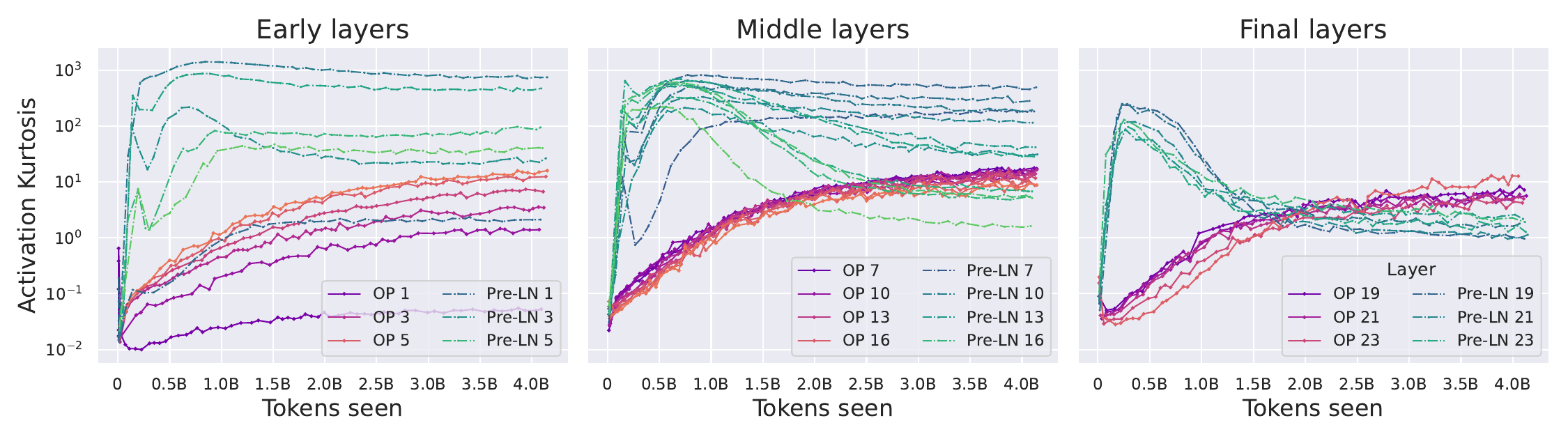}
    \vspace{-1.3em}
    \caption{Our OP block mitigates OFE. We plot activation kurtosis of the residual stream across layers. Experiments are at 1.2B scale on Languini Books using a max AdamW learning rate of $0.001$ with linear warmup for the first 1.5\% steps and linear decay thereafter. Notice the shared log-scaled y-axis: activation kurtosis is consistently (up to 4 orders of magnitude) lower in OP block, particularly in earlier layers. Also, peak kurtosis during training is always higher in Pre-LN. The OP model also removes the final LN before unembedding; the effect of the final LN on OFE is shown in \cref{fig:op_sigprop}.}
    \label{fig:kurt_base_comp_xl}
    \vspace{-2em}
\end{figure}

\vspace{-1em}
\paragraph{Removing Norms mitigates Outlier Features} In \cref{fig:kurt_comp_diff_blocks} we see that the Outlier Protected (OP) block greatly reduces OFE compared to standard blocks. \cref{fig:kurt_base_comp_xl} presents the corresponding plots at 1.2B scale using AdamW optimiser on Languini Books, for different layers. We draw several consistent conclusions: 1) peak kurtosis across the course of training is consistently higher in Pre-LN, sometimes by over 2 orders of magnitude, across different layers; 2) kurtosis across training is usually higher in Pre-LN (up to 4 orders of magnitude here), especially at early training times and in earlier layers; 3) OFE (measured via our metrics) does not need to be monotonic in training time. Together, these findings suggest that the OP block will lead to more quantisable models compared to standard Pre-Norm, as we will show in \cref{sec:add_exps}. \cref{tab:ablate_norm} ablates the effect of Norm positioning on OFE.

Nevertheless, we observe in \cref{fig:kurt_base_comp_xl} that kurtosis still slightly increases in our OP blocks (to relatively modest values, maximum around 20), usually monotonically throughout training. Moreover, the question of why normalisation layers cause outlier features is still unanswered despite the clear evidence that removing them mitigates OF prevalence. We investigate these questions next.

\sectionconc{\noindent
\begin{itemize}[leftmargin=*]
    \item OFE still occurs for weight-less or uncentred Norms, \& both Pre/Post-Norm (\cref{fig:kurt_comp_diff_blocks,fig:kurt_all_layers_codeparrot,fig:mmr_all_layers_codeparrot}).
    \item The OP Block (\cref{fig:op_block}) matches Pre-LN training speed/stability (\cref{tab:speed_comp,tab:op_ppl_ablation}), without standard Norms. It does so through an Entropy Regulation method to prevent attention entropy collapse.
    \item The OP Block greatly reduces OFE compared to standard blocks (\cref{fig:kurt_comp_diff_blocks,fig:kurt_base_comp_xl,fig:mmr_base_comp_xl}).
\end{itemize}}{\cref{sec:norm_layers} key takeaways: normalisation layers and OFE.}
\section{Signal Propagation and Outlier Features}\label{subsec:sig_prop}
\vspace{-.3em}

To better understand why  OFs still appear (albeit greatly reduced) in the OP block, and why normalisation layers cause OFs, we examine \textit{Signal Propagation} behaviour during training and its effect on OFE. This will also clarify why modifications that improve Signal Propagation reduce OFE \cite{wortsman2023stable}. Signal Propagation \cite{poole_exp,schoenholz2016deep,pmlr-v97-hayou19a,hayou2021stable,martens2021rapid,noci2022signal,he2023deep} studies the \textit{input-wise} Gram matrix $\sigmai=\rmX\rmX^\top\in\mathbb{R}^{n\times n}$, and how $\sigmai$ evolves in deep NNs for different layer features $\rmX\in\mathbb{R}^{n\times d}$. 

On the other hand, as we will see below, our kurtosis metric is related to the \textit{feature-wise} Gram matrix $\sigmaf\defeq\rmX^\top\rmX\in\mathbb{R}^{d\times d}$. Recall our kurtosis is a normalised 4$^\text{th}$ moment of ${\rmX\in\mathbb{R}}$, normalised by the square of the second moment $m_2(\rmX) = \frac{1}{nd}\sum_{\alpha\leq n,j\leq d} {\rmX_{\alpha,j}^2} = \frac{1}{nd}\lVert \rmX\rVert_F^2$. Because kurtosis is scale-invariant we can consider the setting where $m_2(\rmX)=1$ and the average squared activation is $1$ without loss of generality.\footnote{In all experiments concerning signal propagation (i.e. input-wise correlations or equivalently, the elements of $\sigmai$), we first scale $\rmX$ down by $\sqrt{m_2(\rmX)}$ to give $m_2(\rmX)=1$ and make $\rmX$ scale invariant.} In this case, $\text{Tr}(\sigmai)=\text{Tr}(\sigmaf) = nd$ by the cyclic trace property. 

Then, our kurtosis, \cref{eq:kurt}, is $\text{Kurt}(\rmX)= \frac{1}{d}\sum_{j=1}^d \big({\frac{1}{n}\sum_{\alpha=1}^n \rmX_{\alpha,j}^2}\big)^2 = \frac{1}{d}\sum_{j=1}^d (\frac{1}{n}\sigmaf)_{j,j}^2$, which is simply a second moment (or average squared value) of diagonal entries of the feature-wise Gram matrix $\sigmaf$. At the same time, again by the cyclic property of the trace, we have:
\begin{align}\label{eq:cyclic_trace}
 \text{Tr}(\sigmaf \sigmaf) &= \text{Tr}(\rmX^\top \rmX \rmX^\top \rmX) =     \text{Tr}(\rmX \rmX^\top \rmX \rmX^\top) = \text{Tr}(\Sigma_I \Sigma_I) \\
 \implies n^2d \cdot \text{Kurt}(\rmX) &+ \sum_{i, j\leq d; i\neq j} \big(\sigmaf\big)_{i,j}^2 = \sum_{\alpha, \beta\leq n} \big(\sigmai\big)_{\alpha,\beta}^2 \label{eq:kurt_plus_cov_eq_signal_prop}
\end{align}
In words, \cref{eq:cyclic_trace} tells us that the sum of squared elements of $\sigmaf$ is equal to the sum of squared elements of $\sigmai$. On the left of \cref{eq:kurt_plus_cov_eq_signal_prop} we decompose \cref{eq:cyclic_trace} into our feature-wise kurtosis (\cref{eq:kurt}, of interest for OFE), plus the sum of squared off-diagonal elements of $\sigmaf$, equal to the sum of squared elements of $\sigmai$ on the right.
Hence, it is clear that Signal Propagation is relevant for OFE. Contrary to most existing works in Signal Propagaton, \cref{eq:kurt_plus_cov_eq_signal_prop} is true throughout training, not only at initialisation.

In particular, we see that the right-hand side of \cref{eq:kurt_plus_cov_eq_signal_prop} captures both the (normalised) activation norms across inputs $\sum_{\alpha\leq n} \big(\sigmai\big)_{\alpha,\alpha}^2$ from the diagonal terms, and inner products between inputs $\sum_{\alpha, \beta\leq n; \alpha\neq \beta} \big(\sigmai\big)_{\alpha,\beta}^2 $ in the off-diagonals. If $\rmX$ is the output of a Norm layer, then $\frac{1}{d}\sigmai$ becomes a cosine similarity matrix with diagonals equal to 1. Deep NNs, and Transformers in particular, are well known to be susceptible to a particular Signal Prop defect called \textit{rank collapse} \cite{dong2021attention, martens2021rapid} where this cosine similarity matrix $\frac{1}{d}\sigmai$ degenerates to the all ones matrix and all inputs look identical to a deep layer. \citet{noci2022signal} and \citet{he2023deep} demonstrate that, at least at initialisation, the off-diagonals of $\sigmai$ are positive and increase monotonically with depth in deep Transformers towards rank collapse, even with Signal Prop inspired modifications that ensure a non-degenerate deep limit exists. 

\vspace{-.7em}
\paragraph{Bad Signal Prop encourages OFE} For OFE, the upshot of these observations is that poor Signal Propagation (in terms of large off-diagonal values of $\sigmai$, close to rank collapse) will make the right-hand side of \cref{eq:kurt_plus_cov_eq_signal_prop} large (the rank collapsed limit has RHS $n^2d^2$, compared to $nd^2$ when the inputs are orthogonal and $\sigmai$ is diagonal). In turn, this puts pressure on the LHS, which contains the feature kurtosis, to be large, hence OFE. This argument is not fully rigorous because the off-diagonals $\sum_{i, j\leq d, i\neq j} \big(\sigmaf\big)_{i,j}^2 $, which captures correlations between different neuron features, could increase on the LHS to allow the kurtosis to remain low.\footnote{Indeed, \cref{fig:sig_prop_precond} shows the link between OFs and signal propagation depends on the diagonality of optimiser.} Theoretically predicting this behaviour deep into modern NN training is outside the scope of this work; we note that while it is possible to write down training trajectories in feature learning regimes \cite{yang2020feature,bordelon2022self}, most works interpreting feature learning in NNs focus only on a single gradient step \cite{yang2021tuning,ba2022highdimensional,bordelon2024depthwise,yang2024tensor}. Having said that, we formalise the intuition of bad Signal Prop leading to larger feature kurtosis in the context of Gaussian features in \cref{prop:sigprop_ofe}.

\begin{figure}[h]
    \centering    
    \subfigure{\includegraphics[width=0.49\textwidth]{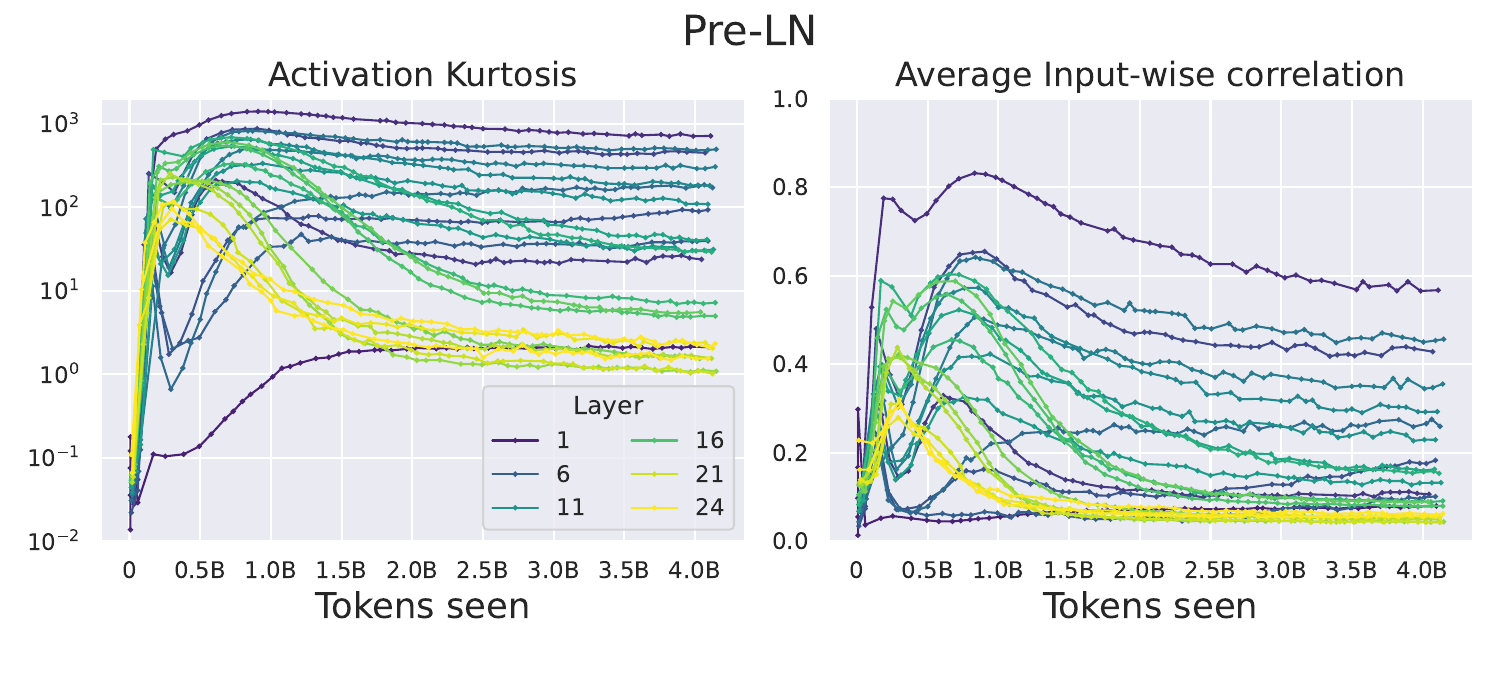}}
    \subfigure{\includegraphics[width=0.49\textwidth]{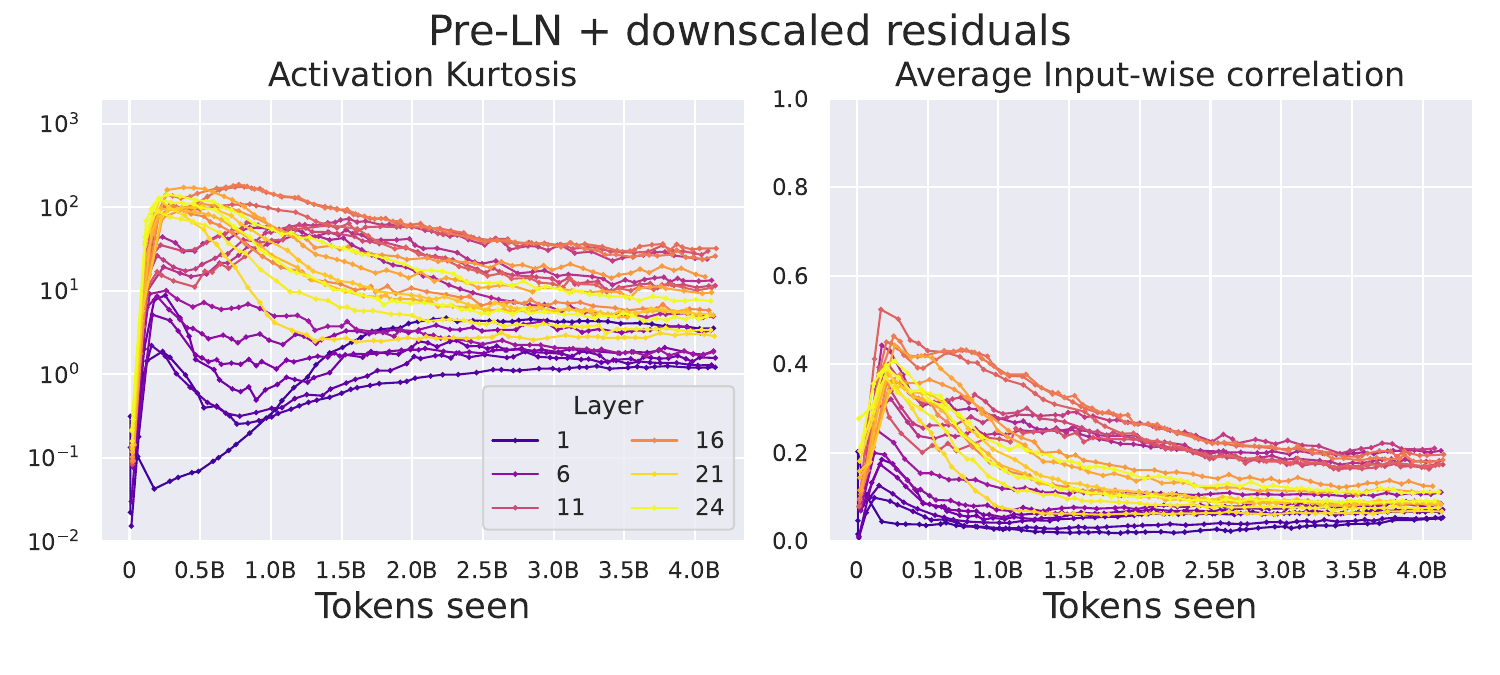}}
    \vspace{-1.6em}
    \caption{Adam-trained Pre-LN layers at 1.2B scale with extreme OFE (left) are those with bad Signal Prop close to rank collapse during training (centre left). (\textbf{Right vs. left two plots}) Downweighting residual branches improves signal propagation during training and results in smaller OFs, particularly in early layers. Respective plots for OP (with \& without final LN before unembedding) in \cref{fig:op_sigprop}.}
    \label{fig:preln_sigprop}
    \vspace{-.7em}
\end{figure}

In any case, we can empirically study the link between bad signal propagation and OFEs, which we do in \cref{fig:preln_sigprop,fig:op_sigprop} for Pre-LN \& OP blocks trained with AdamW at 1.2B scale on Languini Books. We plot both the layerwise evolution of the kurtosis on the left and the average off-diagonal entry of $\frac{1}{d}\sigmai = \frac{1}{d}\rmX\rmX^\top$ (i.e. the average input-wise correlation) on the right, normalised so that $m_2(\rmX)=1$. 

As suggested by \cref{eq:kurt_plus_cov_eq_signal_prop}, we see a strong association between kurtosis and Signal Propagation: the layers with larger kurtosis tend to be the ones with larger input correlations, and vice versa. In particular, in \cref{fig:preln_sigprop}, we see that the Pre-LN layer (2 in this case) with the most extreme OFE (kurtosis peaking over 1000) is precisely the one with the worst Signal Propagation closest to rank collapse (average input correlation peaking over 0.8) during training. Moreover, the trajectory of kurtosis closely tracks the trajectory of input correlations throughout training, with their peaks appearing at similar training steps, across layers. \cref{fig:pythia_sig_prop} shows that the Adam-trained Pythia models \cite{biderman2023pythia} are very close to rank collapse, which partially explains their large OFs in \cref{fig:pythia_main}. 

Given that Signal Propagation characteristics during training depict how a model creates structure (through increasing or decreasing the inner product for different inputs) in its layer representations to best learn the task at hand, our results suggest that OFs occur partly due to the inherent nature of the task that the model is trained on, particularly in architectures that are less prone to OFs, such as our OP block. In Transformers, this appears most apparent in the inputs to the final unembedding layer,  which are linearly projected to the predictions: they tend to have similar kurtosis levels in both OP and Pre-Norm blocks, and the most extreme OFE rarely occurs in the final layers, (\cref{fig:pythia_main,fig:kurt_base_comp_xl,fig:kurt_all_layers_codeparrot,fig:sig_prop_all_layers_codeparrot}). We hypothesise this is because extreme OFE in late layers would imply high kurtosis which could imply representations close to rank collapse by \cref{eq:kurt_plus_cov_eq_signal_prop}, from which it may be hard to learn useful linear predictions with optimisers like Adam.

The correlation we identify between OFE and Signal Propagation also allows us to observe that interventions that worsen Signal Propagation \textit{during training} cause increased OFE. Likewise, methods improving Signal Propagation throughout training help to mitigate OFE. This can be seen in \cref{fig:preln_sigprop} for downscaled
residual connections, $h(x) = x + \beta f(x)$ with some $\beta<1$, which \citet{wortsman2023stable} show improve quantisation on vision-language models. We explore this link further in terms of normalisation layers and other architectural choices inspired by Signal Prop in \cref{app:signal_prop}. 

\vspace{-.5em}
\section{Optimisation Choices and Outlier Features}\label{sec:opt_hypers}
\vspace{-.5em}

\begin{wrapfigure}[20]{r}{0.36\textwidth}
\centering
\vspace{-2.3em}
\includegraphics[width=0.99\linewidth]{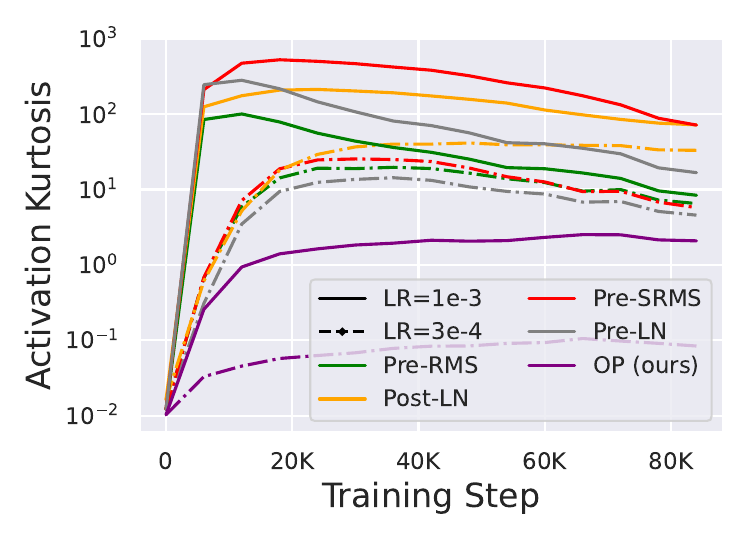}
\vspace{-2.5em}
\caption{Smaller LRs lead to smaller OFs across different blocks.}\label{fig:small_lr_single_kurt}

\includegraphics[width=0.99\linewidth]{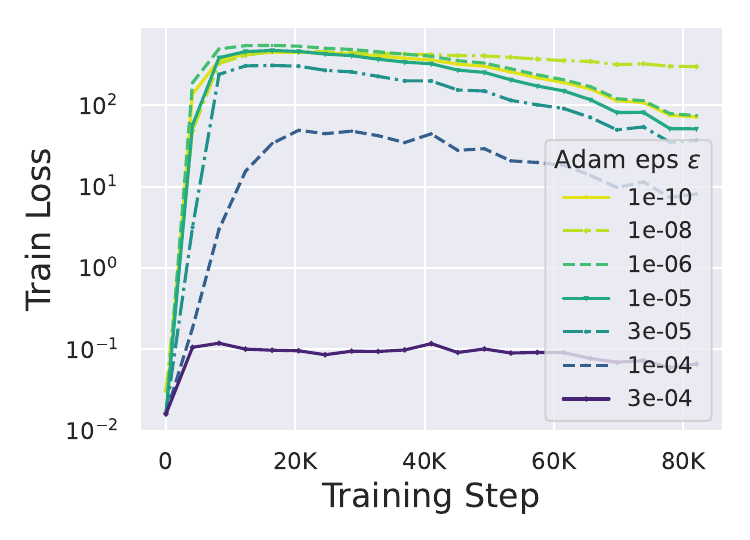}
\vspace{-2.5em}
\caption{Larger Adam $\epsilon$ reduces OFs in 130M Pre-LN transformers.}\label{fig:kurt_eps_single}
\end{wrapfigure}

So far, we have focused on the impact of architecture for OFE. As such, up until now all experiments have used AdamW \citep{loshchilov2017decoupled}, which is an adaptive diagonal preconditioner, with default hyperparameters e.g. $\beta_1=0.9, \beta_2=0.999, \eps=10^{-8}$. As OFE is a training phenomenon, it is important to also consider the role of optimsation choices, which we now explore.
\vspace{-1em}
\paragraph{Learning Rate}Perhaps unsurprisingly, we find that using smaller LRs leads to reduced OFE (\cref{fig:small_lr_single_kurt,fig:smalllr_xl,fig:smalllr_codeparrot}), across different models. In these cases, slightly reducing the max LR in our scheduler (e.g. $0.001\rightarrow 0.0003$ in \cref{fig:small_lr_single_kurt}) did not lead to a loss in convergence (\cref{fig:smalllr_eval_codeparrot}), highlighting that one should use a smaller LR to avoid OFs, if convergence is not affected.
\vspace{-1em}
\paragraph{Adaptivity}Having seen the importance of large LRs in OFE, we now assess the impact of adaptive LRs through the $\epsilon$ hyperparameter in Adam. Recall, the Adam update is $-\eta m_t/(\sqrt{v_t}+\epsilon)$, where $\eta$ is global LR, and $m_t$ and $v_t$ denote first and second-moments of each parameter's gradient, respectively. $\epsilon$ dampens adaptive preconditioning, with larger $\epsilon$ reducing adaptivity for parameters with smaller $v_t$.
In \cref{fig:kurt_eps_single,fig:kurt_eps_ablation,fig:scaling,tab:quantisation} we show that increasing $\epsilon$ also reduces OFE, which implies that optimiser adaptivity plays a crucial role in OFE. Practically speaking, this finding suggests one should increase $\epsilon$ to reduce OFE, if convergence is not impacted (like \cref{fig:train_loss_eps_ablation}). 

\begin{wrapfigure}[11]{r}{0.35\textwidth}
\centering
\vspace{-1.2em}
\includegraphics[width=0.99\linewidth]{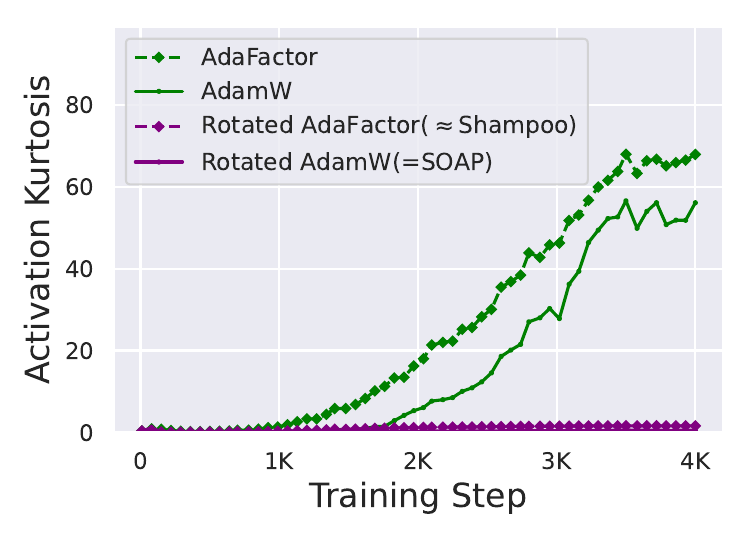}
\vspace{-2.5em}
\caption{Diagonal optimisation on rotated parameters reduces OFs.}\label{fig:kurt_precond}
\end{wrapfigure}

\vspace{-1em}
\paragraph{Non-Diagonal Preconditioners} To push the question of adaptivity to the extreme we consider the effect of diagonal adaptivity and OFE. First-order optimisers like AdamW \cite{loshchilov2017decoupled} or AdaFactor \cite{shazeer2018adafactor} are the de-facto optimisers in deep learning, acting as \textit{diagonal} preconditioners where each parameter has its own adaptive learning rate to scale its gradient. On the other hand, popular second-order optimisers like K-FAC \cite{martens2015optimizing} or Shampoo \cite{gupta2018shampoo, anil2020scalable} are \textit{non-diagonal} preconditioners acting on the full gradient. Second-order optimisers are known to converge faster \textit{per-update} than first-order methods, but first-order optimisers are much more widespread due to the additional overheads of non-diagonal preconditioning. We provide background on NN optimisers in \cref{app:optimiser_background}.

Recently, \citet{vyas2024soap} established a precise connection between first and second-order optimisers, namely: Shampoo \cite{gupta2018shampoo} can be seen as running the diagonal AdaFactor \cite{shazeer2018adafactor} method after first rotating into the eigenbasis of Shampoo's preconditioner, before rotating back. This insight disentangles two effects of non-diagonal preconditioners: 1) transforming the parameter space in which one optimises via a rotation, and 2) using a diagonal optimisation method in the rotated space. \citet{vyas2024soap} use this insight to propose the SOAP optimiser, which applies AdamW in the rotated parameter space obtained from Shampoo's eigenbasis. SOAP is shown to converge slightly faster per step than Shampoo, which itself converges faster per step than AdamW/AdaFactor (verified in \cref{fig:train_loss_precond}).

In \cref{fig:kurt_precond}, we compare OFE in popular diagonal optimisers, 1) AdamW and 2) AdaFactor, to their rotated non-diagonal versions, 3) SOAP and 4) AdaFactor in Shampoo's eigenbasis (akin to Shampoo \citet{vyas2024soap}), trained using 130M Pre-Norm transformers on CodeParrot. We clear see that rotating the parameter space in which one optimises, as done in SOAP/Shampoo, mitigates OFEs. This effect is independent of the diagonal preconditioner used and occurs even in spite of the OF-prone Pre-Norm layers.\footnote{This, coupled with our findings with the OP block in \cref{sec:norm_layers}, suggests that extreme OFs found in LLMs are due to an \textit{interaction} of the architectural choice of Pre-Norm layers and the optimiser choice of diagonal adaptive preconditioners. Note we use Pre-SRMSNorm in \cref{fig:kurt_precond}, which doesn't have trainable Norm parameters.} We provide more evidence for this conclusion, with different batch sizes and LR schedulers, in \cref{fig:precond_bs_adam_kurt,fig:precond_bs_adafactor_kurt}. To  highlight the importance of diagonal adaptivity, in \cref{app:images} we compare SGD to Adam on a setting where both converge at similar speeds: image classification with an MLP. There, we again see the non-adaptive SGD suffers less from OFs compared to the diagonal Adam.

\citet{elhage2023privileged} show a similar result as \cref{fig:kurt_precond} using random rotations, while QuaRot \cite{ashkboos2024quarot} applies random Hadamard rotations to remove OFs for post-training quantisation (PTQ) using computational invariance. SpinQuant \cite{liu2024spinquant} extends QuaRot using learnt rotations but again only considers inference time, after OFs have emerged during training. Both QuaRot and SpinQuant incur additional overheads to apply rotations in the forward pass at inference time. In contrast, non-diagonal preconditioners optimise using ``learnt'' rotations that adapt during training, which leaves the forward pass intact post training and enables the improvement in convergence speed per step that Shampoo/SOAP enjoy relative to AdamW/AdaFactor (seen in \cref{fig:train_loss_precond,tab:quantisation}). In sum, our results reveal an additional appeal of second-order optimisers: not only are they faster to converge per step, but they also lead to models that are not as prone to OFs and are thus easier to quantise, as we will see next in \cref{sec:add_exps}.

\vspace{-1em}
\paragraph{Breaking down kurtosis updates} The findings in this section point to the importance of large diagonal adaptive LRs for OFE. This motivates us to break down the updates to kurtosis into terms of different powers in the learning rate $\eta$, in \cref{app:higher_order_updates}. We find that sub-leading order updates (in terms of LR) are the key driver in increasing kurtosis, providing a consistent mechanism for OFE that encapsulates our different observations concerning the roles of optimiser and architecture. 
\sectionconc{\noindent
\begin{itemize}[leftmargin=*]
    \item Large diagonal adaptive LRs can lead to Outlier Features during training (\cref{fig:small_lr_single_kurt,fig:smalllr_xl,fig:smalllr_codeparrot,fig:kurt_precond}).
    \item Reducing AdamW adaptivity via increased $\epsilon$ hyperparameter reduces OFs (\cref{fig:kurt_eps_single,fig:kurt_eps_ablation,fig:kurt_cifar_eps}).
    \item The non-diagonal preconditioning of second-order optimisers greatly minimises OFE (\cref{fig:kurt_precond,fig:precond_bs_adam_kurt,fig:precond_bs_adafactor_kurt}, and \cref{tab:quantisation}), even with OF-prone architectures like Pre-Norm.
    \end{itemize}}{\cref{sec:opt_hypers} key takeaways: Optimisation Choices and OFE.}

\vspace{-1.5em}
\section{Additional Experiments}\label{sec:add_exps}
\vspace{-.5em}
We conclude our study with additional experiments regarding scaling and quantisation properties of our suggestions to minimise OFs.  Further experimental details can be found in \cref{app:exp}.

\begin{figure}[t]
    \centering    
    \vspace{-1.5em}
    \subfigure{\includegraphics[width=0.245\linewidth]{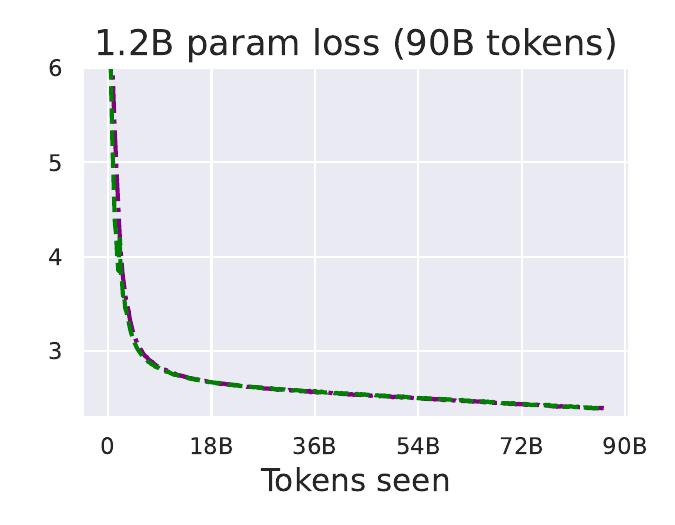}} \hspace{-.5em}
    \subfigure{\includegraphics[width=0.245\linewidth]{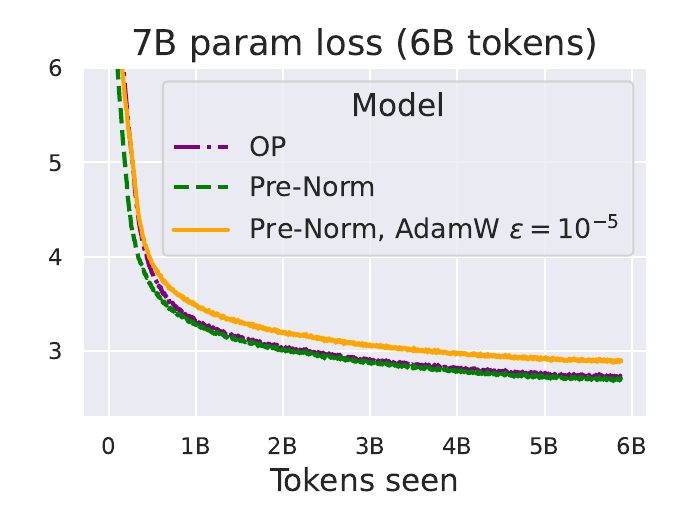}} \hspace{-.5em}
    \subfigure{\includegraphics[width=0.245\linewidth]{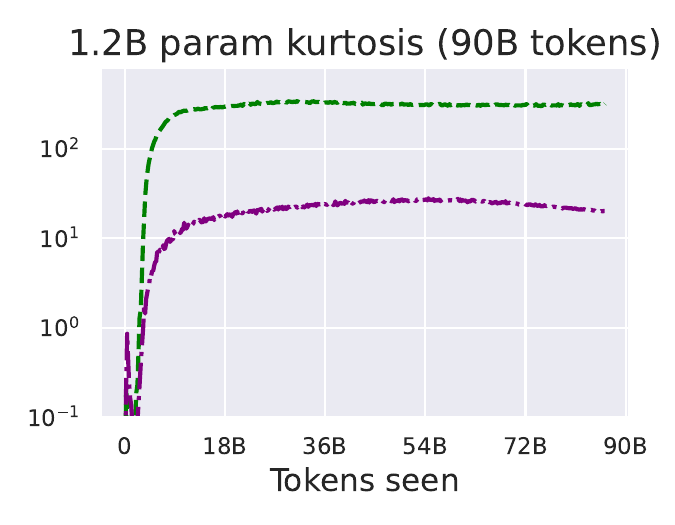}} \hspace{-.5em}
    \subfigure{\includegraphics[width=0.245\linewidth]{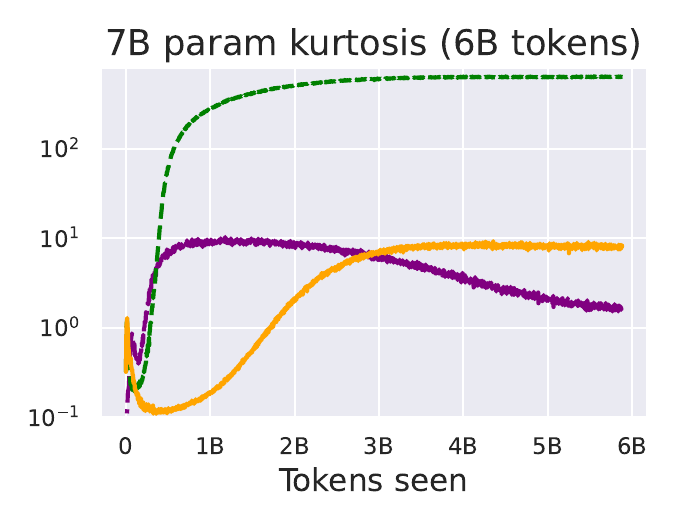}} \hspace{-.5em}
    \vspace{-1.4em}
    \caption{Our conclusions on OFs continue to hold when scaling both token count (1.2B parameters and 90B tokens) and model scale (7B parameters and 6B tokens). The left two plots are loss trajectories and the right two plots are kurtosis trajectories for different models trained with AdamW.}\label{fig:scaling}
    \vspace{-.9em}
\end{figure}

\vspace{-0.5em}
 \paragraph{Scale}Up until now, we have studied OFE and how we can minimise it at scales up to 1.2B parameters and 5B tokens. We now consider scaling both model size and training length in terms of loss performance and OFE. We compare scaling the OP and Pre-Norm blocks keeping AdamW as optimiser; it would be interesting to considering scaling experiments using second-order optimisers (in distributed settings e.g. \cite{anil2020scalable,shi2023distributed}) in future work.  The dataset is FineWeb-Edu \cite{penedo2024fineweb}. We warmup the LR for 5\% of training steps to a maximum value ($0.001$ and $0.0003$ for 1.2B and 7B respectively), before cosine decay. Due to computational cost, very little hyperparameter tuning was performed with these experiments and all default hyperparameters were optimised for the Pre-Norm baseline.
 
 So far we have studied OFE in training token counts that are relatively small compared to the ``compute-optimal'' Chinchilla recipe \cite{hoffmann2022training}. In \cref{fig:scaling} (first \& third subplots), we scale the number of training tokens at 1.2B parameter scale to around 90B, which is beyond the Chinchilla prescription. We see that the OP block is indeed able to closely match Pre-Norm loss at longer token counts, but still benefits from significantly reduced OFs (at least an order of magnitude lower kurtosis throughout training) despite the aggressive AdamW LR and long training horizon. 

In \cref{fig:scaling} (second and fourth subplots), we additionally scale the model size to 7B parameters, a scale that has previously been hypothesised to be a sharp cutoff above which ``systematic'' OFs emerge \cite{dettmers2022gpt3}. Due to cost, we only train for 6B tokens, which is more than enough for extreme OFs to emerge (over 600 kurtosis averaged across residual layers) with Pre-Norm layers. On the other hand, the OP block has peak kurtosis under 10 and yet still matches Pre-Norm loss performance at 7B scale. Although increasing AdamW $\epsilon$ from $10^{-8}$ to $10^{-5}$ also reduces peak kurtosis to under 10 with the OF-prone Pre-Norm model, it leads to a significant decrease in convergence speed in this setting.

\vspace{-0.7em}
\paragraph{Quantisation} Returning to our original motivation, we investigate the effect of our suggested architectural and optimisation choices to minimise OFs in terms of quantisation. In \cref{tab:quantisation}, we take the OPT-125m  \cite{zhang2022opt} setting of \citet{bondarenko2023quantizable}, training models using AdamW in standard mixed FP16/FP32 precision on BookCorpus+Wikipedia for around 12B tokens. Post training, we quantise (PTQ) to int8 weight-and-activations, using the same quantisation recipe as \cite{bondarenko2023quantizable}. We report standard deviation over 3 seeds for PTQ, as random subsets of data are used to estimate the quantiser range.

\cref{tab:quantisation} compares both the standard precision perplexity (FP16/32) and also 8-bit quantised perplexity (W8A8) across architecture and optimiser choices. We additionally present our kurtosis metric, \cref{eq:kurt}, calculated after training and averaged across layers. We compare 3 different transformer blocks: a) standard Pre-LN, b) the Gated Attention block proposed by \citet{bondarenko2023quantizable} to reduce OFs, and c) our OP block, as well 5 different optimisation setups that are added one after another: 1) the default hyperparameters of \cite{bondarenko2023quantizable}, 2) removing dropout regularisation, 3) increasing the maximum LR from $4\times 10^{-4} \rightarrow 10^{-3}$, 4) increasing AdamW $\epsilon$ from $10^{-8} \rightarrow 10^{-5}$, and 5) changing AdamW to SOAP optimiser (keeping $\epsilon=10^{-8}$). Optimiser choices 2) and 3) were designed to improve standard precision performance, albeit potentially at the detriment of quantisation performance due to increased OFs. Optimiser choices 4) and 5) were chosen to reduce OFs, from our findings in \cref{sec:opt_hypers}.

As seen in \cref{tab:quantisation}, our findings throughout the rest of our paper are validated. Firstly, our kurtosis metric to measure OFs is indeed highly correlated with \textit{quantisation error}, which we define as the increase in perplexity from FP16/FP32 to W8A8. For example, the Pre-LN model without SOAP has consistently high kurtosis (over 25), and also consistently poor performance at W8A8 (over 45 quantisation error across all AdamW optimiser settings). Secondly, our OP block has consistently low kurtosis (below 12) compared to the other models, and this directly translates to low quantisation error (below 0.73 across all optimiser settings). This low kurtosis/quantisation error with OP block holds true even for aggressive optimiser choices, like large diagonal adaptive LRs, that improve standard precision perplexity but also increase kurtosis. Moreover, the baseline Gated Attention model of \cite{bondarenko2023quantizable} (which is still Pre-Norm but uses a modified attention sub-block) struggles with OFs when dropout is removed and a large learning rate is used, leading to increased quantisation error (2-4 perplexity increase from FP16/32 to W8A8), but increasing AdamW $\epsilon$ as suggested in \cref{sec:opt_hypers} reduces kurtosis (from 29 down to 16) and quantisation error to 0.76, whilst also improving FP16/32 perplexity.

Finally, changing AdamW to SOAP optimiser either dramatically reduces kurtosis (in the case of Pre-LN and OP) or matches the kurtosis reduction of increasing Adam $\epsilon$ (for Gated Attention), while also improving mixed precision performance for all models.\footnote{We keep the batch size 196 \& context length 512 same as \cite{bondarenko2023quantizable} for fair comparison. It is likely the FP16/FP32 gains of non-diagonal preconditioners e.g. SOAP would increase with higher token counts per step \cite{zhang2019algorithmic}.} This leads to the only non-catastrophic W8A8 perplexity with Pre-LN (16.43), and the best overall W8A8 model when combining SOAP optimiser with our OP architecture (14.87 perplexity, with only 0.16 degradation from standard precision). This result highlights the combination of our architectural and optimiser suggestions for minimising OFs as a promising approach to training fast-converging and easily quantisable models.

\begin{table}[t]
\centering
\caption{Average kurtosis across layers, plus standard precision (FP16/32) and quantised int8 (W8A8) perplexity of various 125m OPT models \cite{zhang2022opt} trained on BookCorpus+Wikipedia \cite{bondarenko2023quantizable}. Kurtosis strongly correlates with int8 error across settings, and the best int8 setup combines our architectural (OP) and optimiser (SOAP) suggestions, with only 1.2 kurtosis and 0.16 perplexity increase.}\label{tab:quantisation}
\footnotesize
\vspace{-0.5em}
\begin{tabular}{lllll}
\toprule
Architecture & Optimiser Hyperparameters & \multicolumn{1}{l}{Kurtosis} & FP16/32 ($\downarrow$) & \multicolumn{1}{l}{W8A8 ($\downarrow$)} \\
\midrule

  Pre-LN         & Default from \cite{bondarenko2023quantizable} & 25.6  & 16.00 & 63.4$_{\pm50.1}$  \\
           & $-$Dropout Regularisation   & 46.7  & 15.53 & 105.9$_{\pm25.0}$ \\
           & $+$Big LR ($4\times10^{-4} {\rightarrow} 10^{-3}$)         & 61.4 & 15.40 & 59.0$_{\pm10.7}$  \\
           & $+$Big Adam $\epsilon$ ($10^{-8} {\rightarrow} 10^{-5}$)  & 43.6&15.19&216.2$_{\pm87.5}$\\
           & $+$Non-diag Precond (SOAP)  & 5.1&14.80&16.43$_{\pm0.12}$\\
\midrule

    Gated Attention \cite{bondarenko2023quantizable}   & Default from \cite{bondarenko2023quantizable} & 4.5  & 15.63 & 16.2$_{\pm0.09}$ \\
           & $-$Dropout Regularisation   & 28.8  & 15.08 & 18.7$_{\pm0.09}$ \\
           & $+$Big LR ($4\times10^{-4} {\rightarrow} 10^{-3}$)         & 28.8 & 14.99 & 17.04$_{\pm0.09}$ \\
           & $+$Big Adam $\epsilon$  ($10^{-8} {\rightarrow} 10^{-5}$)         & 16.0  & 14.78 & 15.54$_{\pm0.01}$  \\
           & $+$Non-diag Precond (SOAP)  & 16.7&14.65&15.64$_{\pm0.01}$\\
\midrule

     OP (ours)    & Default from \cite{bondarenko2023quantizable} & 3.6     & 15.64     & 16.01$_{\pm0.01}$    \\
           & $-$Dropout Regularisation   & 7.1   & 15.15 & 15.78$_{\pm0.03}$ \\
           & $+$Big LR  ($4\times10^{-4} {\rightarrow} 10^{-3}$)         & 12.0  & 14.96 & 15.60$_{\pm0.01}$ \\
           & $+$Big Adam $\epsilon$  ($10^{-8} {\rightarrow} 10^{-5}$)          & 6.0  & 14.89 & 15.62$_{\pm0.02}$ \\
           & $+$Non-diag Precond (SOAP)  & \textbf{1.2}&14.71&\textbf{14.87$_{\pm0.01}$}\\
\bottomrule
\end{tabular}
\vspace{-1.5em}
\end{table}

\vspace{-.5em}
\section{Discussion}
\vspace{-.5em}

The goal of this work was to understand the emergence of Outlier Features during standard NN training, and propose architectural and optimisation interventions that minimise their prevalence. On the architectural side, we have shown that normalisation layers (Norms) can have unwanted effects on OFs during training. Removing standard Norms through our Outlier Protected transformer block minimises OFs during training without loss of convergence speed or training stability. On the optimisation side, we highlight that large diagonal adaptive learning rates are crucial for OFs to occur, and non-diagonal preconditioners like SOAP \cite{vyas2024soap} and Shampoo \cite{vyas2024soap} offer an appealing combination of reduced OFs and improved convergence speed compared to diagonal preconditioners, like Adam. We demonstrate our approaches for minimising OFs are effective at scales of up to 7B parameters, and also directly translate to improved int8 weight-and-activation post-training quantisation performance for OPT-125m models. Overall, our results reveal the complex interactions between standard architecture and optimiser choices that appear culpable for the OFs widely observed in LLMs, and offer practical solutions to avoid them. One direction for future work would be to develop a mathematical theory that predicts feature learning behaviour like OFE and Signal Propagation dynamics during practical NN training. Our work highlights the architectural and optimiser dependencies such a theory would need to capture, in order to be complete. On the practical side, it would be interesting to apply our methods to minimise OFs for post-training quantisation performance at larger scales or lower precisions than 8-bit, and also low-precision training. 

\clearpage
\section*{Acknowledgements} We would like to thank Sam Smith for helpful suggestions at the beginning of this project, Tiago Pimentel for constructive feedback on an early version of this manuscript, and Saleh Ashkboos for insightful discussions around rotations and quantisation. We are also grateful for the mostly constructive feedback we received from anonymous reviewers. Finally, we would also like to thank the Swiss National Supercomputing Centre for access to GPU resources to perform some of the experiments in this work, via a grant under project ID a06 on Alps as part of the Swiss AI Initiative. We thank Alex Hägele for help setting up our experiments on FineWeb-Edu.

\section*{Reproducibility Statement}
Our code for experiments on the CodeParrot dataset can be found at \url{https://github.com/bobby-he/simplified\_transformers}.

\bibliography{references}
\bibliographystyle{unsrtnat}

\doparttoc %
\faketableofcontents %

\newpage
\appendix
\addcontentsline{toc}{section}{Appendix} %
\part{Appendix} %
\parttoc %
\section{Additional Background Knowledge}\label{app:background}
\subsection{Mathematical Description of Transformer Blocks}\label{app:transformerblock} A Transformer architecture \cite{vaswani2017attention} is formed by sequentially composing Transformer blocks. The two most popular Transformer blocks are Pre-Norm and Post-Norm. The Pre-Norm Transformer block \cite{baevski2018adaptive,child2019generating} is nowadays more widespread than the original Post-Norm block \cite{vaswani2017attention} due to advantageous depth-scaling properties \cite{xiong2020layer,sam_soham_batch}.

For an input sequence representation $\rmX_{\text{in}}\in\mathbb{R}^{T\times d}$, with $T$ tokens and  dimension $d$, the output $\rmX_{\text{out}}$ of a \textbf{Pre-Norm} Transformer block is:
\begin{equation}
\begin{aligned}\label{eq:preln_block}
    \rmX_{\text{out}} = \,\hat \rmX + \,\text{MLP}(\text{Norm}_2(\hat{\rmX})), \hspace{1em} \text{where} \hspace{.4em}
    \hat{\rmX} = \,\rmX_{\text{in}} + \,\text{MHA}(\text{Norm}_1(\rmX_{\text{in}})).
\end{aligned}
\end{equation}

On the other hand, the \textbf{Post-Norm} Transformer block can be expressed as:
\begin{equation}
\begin{aligned}\label{eq:postln_block}
    \rmX_{\text{out}} = \text{Norm}_2(\hat \rmX + \,\text{MLP}(\hat{\rmX})), \hspace{1em} \text{where} \hspace{.4em}
    \hat{\rmX} = \text{Norm}_1(\rmX_{\text{in}} + \,\text{MHA}(\rmX_{\text{in}})).
\end{aligned}
\end{equation}

Here, ``MHA'' stands for Multi-Head Attention (detailed below), and ``Norm'' denotes a normalisation layer like LayerNorm \citep{ba2016layer} or RMSNorm \citep{zhang2019root}. In words, we see that the Pre-Norm transformer block consists of two sequential sub-blocks (one attention and one MLP), with normalisation layers and residual connections for both sub-blocks. Crucially the Norm layers in Pre-Norm blocks are placed within the residual branch, whereas in Post-Norm blocks the Norm layers are placed after the residual connection. 

When we say ``Pre-LN'' we mean that the block is Pre-Norm and the Norm is LayerNorm \cite{ba2016layer}, and likewise ``Post-RMSNorm'' would mean that the block is Post-Norm and the Norm is RMSNorm, and so on in \cref{fig:kurt_comp_diff_blocks}. There is no notion of ``Pre'' or ``Post'' with our OP block because there is no difference between Pre-Norm and Post-Norm if one removes the Norms.

In our work, the MLP architecture is single hidden-layer with hidden dimension that is $4d$ (as in \citep{vaswani2017attention}), and acts on each token in the sequence independently.

The MHA sub-block uses softmax self-attention to share information between tokens. For a given input sequence $\rmX$, the softmax self-attention mechanism computes:
\begin{align}\label{eq:standard_attn}
    \text{Attn}(\rmX) &= \rmA(\rmX)\rmX\rmW^V, \hspace{1em}\text{where} \hspace{.4em}
    \rmA(\rmX) = \text{Softmax}\left(\frac{1}{\sqrt{d_k}} s(\rmX) + \rmM\right) ,
\end{align}
where $s(\rmX) = \rmX \rmW^Q {\rmW^K}^\top \rmX^\top$ are the pre-softmax attention scores/logits, and $\rmW^Q,\rmW^K \in\mathbb{R}^{d\times d_k}$ and $\rmW^V \in\mathbb{R}^{d\times d_v}$ are trainable query, key and value parameters respectively.

The attention matrix $\rmA(\rmX)\in\mathbb{R}^{T\times T}$ allows different tokens to ``attend'' to each other, with mask $\rmM\in\mathbb{R}^{T\times T}$ determining which tokens any given token is allowed to ``attend'' to. For causal auto-regressive transformers like GPT, $\rmM_{i,j} = 0 \,\text{if} \, {i\geq j}$ and $-\infty$ else, which prevents a token from obtaining information from future tokens.

The Multi-Head Attention name arises due to the fact that in practice it is typical to apply self-attention on $H$ different ``heads'' with $d_v = d_k = \frac{d}{H}$ before concatenating the heads, as follows:
\begin{align}\label{eq:mha}
    \text{MHA}(\rmX) &= \text{Concat}\big(\text{Attn}_1(\rmX), \dots, \text{Attn}_H(\rmX)\big) \rmW^P ,
\end{align}
where $\rmW^P\in\mathbb{R}^{d\times d}$ denotes a trainable matrix that combines different attention heads via a projection. 

\subsection{Background on NN optimisers}\label{app:optimiser_background}
 In this subsection we provide additional background on different deep learning optimisers to accompany \cref{sec:opt_hypers} for completeness.
 
 Consider a weight matrix $W\in\mathbb{R}^{ l \times r}$ with scalar loss function $L(W)$, and corresponding gradient $G = \nabla_W L \in\mathbb{R}^{l\times r}$. We denote $\eta$ as a scalar learning rate and $\epsilon$ as a scalar ``damping'' hyperparameter that prevents numerical instabilities. We outline the three families of optimisers that we explore at various points in our work, in terms of their effect on outlier features: 1) SGD (with momentum), 2) diagonal preconditioners, and 3) non-diagonal preconditioners.

 \paragraph{SGD with Momentum}\hspace{-.5em}keeps an (exponential) moving average of gradients $G$, $M\in\mathbb{R}^{l\times r}$, and updates $W$ as:
 $$W \leftarrow W - \eta M.$$

 \paragraph{Diagonal preconditioners}\hspace{-.5em}like Adam \cite{kingma2014adam} are arguably the most popular deep learning optimiser family due to their combination of improved convergence on modern architectures like transformers, compared to SGD, and computational efficiency, compared to non-diagonal preconditioners. 
 
 Adam works by maintaining not only an exponential moving average of gradients $M\in\mathbb{R}^{l\times r}$, like SGD, but also an element-wise second moment of gradients $G\odot G$, $V\in\mathbb{R}^{l \times r}$. 
 
 Then, the Adam update rule is:
 $$W \leftarrow W - \eta \frac{M}{\sqrt{V} + \epsilon},$$

 where all operations (e.g. division or square root) are element-wise. This element-wise nature leads us to describe Adam as \textit{diagonal} (as it can easily be written out with matrix multiplication using diagonal matrices).  Note that one can view the elements of $\frac{\eta}{\sqrt{V} + \epsilon}$ as per parameter adaptive learning rates, compared to SGD which just has a global shared LR $\eta$.
 
 AdamW \cite{loshchilov2017decoupled} is a popular variant of Adam that decouples weight decay. AdaFactor \cite{shazeer2018adafactor} is another variant of Adam, which replaces $V$ by a rank-1 approximation, $V'\in\mathbb{R}^{l\times r}$, in the interests of memory. Despite the reduced rank of $V'$, AdaFactor is similarly diagonal like Adam in the sense that there is a unique element of $V'$ for each element of $W$, and all operations are element-wise. 

 \paragraph{Non-diagonal preconditioners}\hspace{-.5em}obtain their update rule by instead applying a full non-diagonal (inverse) matrix preconditioner $P\in\mathbb{R}^{lr\times lr}$ to the (flattened) first order moment $\text{flat}(M)\in\mathbb{R}^{lr}$ to give update $-\eta P^{-1}\text{flat}(M)$. 
 
 Due to the exorbitant cost of storing, updating, inverting, and applying $lr\times lr$ matrices, a popular approach introduced by \citet{martens2015optimizing} is to use a kronecker factored preconditioner
$P=L \otimes R$, where $L\in\mathbb{R}^{l\times l}$ and $R\in\mathbb{R}^{r\times r}$ to give update rule (with dampening):

$$W \leftarrow W - \eta (L + \epsilon I_l)^{-1} M (R + \epsilon I_r)^{-1}$$

In Shampoo \cite{gupta2018shampoo}, $L$ is a moving average of $GG^{\top}$ and $R$ is a moving average of $G^{\top}G$, although different choices of exponents (e.g. $-1/4$ or $-1/2$)  are more common besides $-1$. For example, the Shampoo update rule is $W \leftarrow W - \eta (L + \epsilon I_l)^{-1/4} M (R + \epsilon I_r)^{-1/4}$ for exponent $-1/4$. 

As we discuss in \cref{sec:opt_hypers}, \citet{vyas2024soap} identify a connection between Shampoo with exponent $-1/2$ and AdaFactor, and this insight gives rise to the SOAP optimiser. We refer the reader to \citet{vyas2024soap} for additional background on different NN optimisers and their connections.

\subsection{Additional Related Work and Background}\label{app:related}
\paragraph{Understanding Outlier Features} \citet{kovaleva2021bert,timkey-van-schijndel-2021-bark} first identified Outlier Features in trained Transformers and demonstrated that OFs are critical for representational quality and performance. \citet{puccetti2022outliers} highlight the importance of token frequency \cite{kunstner2024heavy} for OFs in transformers trained on language data, which is related to the representation degeneration phenomenon of \citet{gao2019representation}, and certain ``vertical'' structures appearing in attention matrices during training. \citet{bondarenko2023quantizable} term this vertical structure ``no-op'' behaviour, where uninformative tokens are given high attention weights, and show that modifying attention to encourage no-op behaviour can mitigate OFs. \citet{dettmers2022gpt3} show that the effect of OFs is more pronounced at larger parameter scales, and \citet{wortsman2023stable} suggest that OFs are related to increasing activation scales during training, motivating their use of downweighted residuals. 

\citet{kovaleva2021bert,wei2022outlier} attribute OFs to the trainable parameters in Layer Normalisation. \citet{elhage2023privileged} show that removing trainable parameters in normalisation layers still leads to OFs, as verified in \cref{fig:kurt_comp_diff_blocks}, but conclude that normalisation layers are not at fault for OFs and that their results suggest the blame likely lies with the diagonally adaptive Adam. On the other hand, we find that removing pre-normalisation layers with the unnormalised OP block significantly reduces OFs (in \cref{sec:norm_layers}) even with Adam, which highlights the importance of normalisation layers (and where they are placed) for OFE. Consistent with \citet{elhage2023privileged}, we establish that diagonal adaptivity is also partly to blame given that non-diagonal preconditioners suffer much less from OFs, even with pre-normalisation layers (in \cref{sec:opt_hypers}). \cref{tab:quantisation} reveals that combining our proposed OP architecture with our suggestion of non-diagonal preconditioning optimisers further reduces OFE, compared to each approach in isolation, without compromising standard precision convergence speed.

\citet{nrusimha2024mitigating} show that OFs occur early in training, and are stronger in residual stream layers. \citet{sun2024massive} demonstrate the existence of ``massive activations'' and show they act as bias terms in transformers. \citet{darcet2024vision} show that outlier tokens with large activation norms lead to non-smooth attention maps in vision transformers, and propose additional ``register'' tokens in order to concentrate the outliers and yield smoother attention maps.  \citet{allen2020towards,he2022feature} study a theoretical framework where sparse activations naturally appear and grow with gradient descent, owing to certain ``lucky'' neurons being correlated with useful features at initialisation, in order to study ensembling and knowledge distillation in two-layer convolutional NNs.

\paragraph{Outlier Features and Quantisation} \citet{wei2022outlier,bondarenko2021understanding} identified Outlier Features as an issue for quantised NNs. Most work in this area has focused on (weight) quantisation of already trained transformers \cite{dettmers2022gpt3,xiao2023smoothquant,ashkboos2024quarot}, for efficiency gains at inference time. \citet{dettmers2022gpt3} keep outlier features in full precision to avoid their quantisation errors, while \citet{xiao2023smoothquant} propose to migrate the quantisation difficulty of outlier features to their corresponding weights using some scaling factors. \citet{chee2023quip} introduce the idea of ``incoherence processing'', where pre-trained weights are rotated with random orthogonal matrices to remove outliers, which is provably and empirically shown to make quantisation easier with their method QuIP. \citet{tseng2024quip} extend QuIP to use random Hadamard matrices (among other changes), which are more efficient and have better theoretical properties than random orthogonal matrices. \citet{ashkboos2024quarot} combine incoherence processing with ``computational invariance''
to rotate the feature vectors in addition to pre-trained weights whilst preserving the forward pass, thereby removing OFs in the rotated features and achieving state of the art performance in weight-and-activation quantisation at inference time. 

In terms of quantised training, \citet{bondarenko2023quantizable} show that encouraging ``no-op'' behaviour can mitigate OFs and enable low-precision training, while \citet{wortsman2023stable} employ downweighted residuals (among other techniques) for quantised CLIP training. We discuss how our findings relate and extend these insights in the main text. \citet{hu2024outlierefficient} propose outlier-efficient Hopfield Layers as an alternative to traditional attention mechanisms to improve post-training quantisation. \citet{nrusimha2024mitigating} propose to regularise the kurtosis of the outputs of a linear layer for low-precision training, which the authors argue prevents migrating quantisation difficulty to the weights. We employ kurtosis to measure OFs, but focus on the kurtosis of the inputs to a linear layer.

\paragraph{Normalisation Layers} Normalisation Layers have been near ever-present in NNs since their introduction \cite{ioffe2015batch,ba2016layer}, owing to their training benefits. Many works since have considered removing Normalisation layers, by finding alternative mechanisms that keep their benefits. \citet{sam_soham_batch} identify a beneficial implicit effect of Normalisation layers in Pre-Norm \cref{eq:preln_block} architectures is to downweight residual branches, and that explicit recreating this effect enables training deep NNs without Normalisation. \citet{hayou2021stable} show this theoretically using Signal Propagation theory, and propose downweighting residuals with a scale factor $O(1/\sqrt{\text{depth}})$ to do so, which \citet{noci2022signal} corroborate in the transformer setting. \citet{martens2021rapid,zhang2022deep} demonstrate how to remove residual connections alongside normalisation layers in convolutional NNs using ``transformed'' activations, which \citet{he2023deep} extend to the Transformer architecture by making attention more identity-like (see also ``shaped'' attention, \citet{noci2023shaped}).
\citet{pmlr-v139-brock21a,smith2023convnets} propose NFNets, and achieve state of the art performance on the ImageNet benchmark in an unnormalised residual convolution architecture, highlighting that Normalisation layers are not necessary for best performance in convolutional models.  NFNets employ downweighted residual branches to fix Signal Propagation at initialisation, among other techniques including adaptive gradient clipping. However, \citet{he2023deep,he2024simplifying} find that removing Normalisation Layers, even with Signal Propagation modifications like downweighting residuals, leads to a loss of performance in simplified Transformer blocks, implying that transformer training has different instabilities to convolutional models, and Normalisation layers have other training benefits in transformers.

\paragraph{Entropy Collapse and QK-Norm} \citet{pmlr-v202-zhai23a} identify entropy collapse as a key training instability in transformers, where attention logits grow large during training. This causes the rows of the post-softmax attention matrix to become one-hot vectors and the attention weights are non-zero on only a single sequence position. 

Mathematically, given a stochastic attention matrix $\rmA(\rmX)\in\mathbb{R}^{T\times T}$ (in the notation of \cref{eq:standard_attn}), we can compute the entropy $\rmH(\rmA)$, averaged over sequence locations, as: 
\begin{align}\label{eq:entropy_collapse}
    \rmH(\rmA)=- \frac{1}{T} \sum_{s,t=1}^T  \rmA_{s,t} \cdot \text{log}(\rmA_{s,t}),
\end{align}
and define \textit{Entropy Collapse} to be the situation where $\rmH(\rmA)$ tends to 0 during training. This occurs when the rows $\rmA_{s,:}$ become one-hot vectors and token $s$ only attends to one other token, for all $s$. We treat $0 \cdot \text{log}(0)$ to be 0.

To remedy entropy collapse, it is important to control the logits entering softmax from growing too large, and \citet{pmlr-v202-zhai23a} propose $\sigma$Reparam which regularises the spectrum of  Query-Key weights in order to do so. 

As an alternative, \textit{Query-Key Normalisation} \cite{henry-etal-2020-query}, where the Queries and Keys are normalised using e.g. LayerNorm or RMSNorm after the Query/Key weight matrix has seen growing popularity, particularly in ViT-22B \cite{dehghani2023scaling} where it was crucial for stable training. 

Mathematically, instead of standard attention logits $s(\rmX) = \rmX \rmW^Q (\rmX {\rmW^K})^\top$ (following the notation of \cref{eq:standard_attn}), QK-Norm first normalises the queries and keys across the $d_k$ dimension:
\begin{align}
    s(\rmX) = \text{Norm}(\rmX\rmW^{Q})\text{Norm}(\rmX\rmW^{K})^{\top},
\end{align}
and is most commonly used as an addition on top of the Pre-Norm block \cref{eq:preln_block} (without removing other normalisation layers like in our OP block).

Other ``entropy regulating'' mechanisms include tanh thresholding (Grok-1) and clamping attention logits (DBRX). The training stability benefits of controlling attention entropy through QK-Norm were shown at smaller scales in \citet{wortsman2023stable}, who argue that the quadratic dependence in the attention logits (on the queries and keys), causes large attention logits to appear during training, hence entropy collapse. This is as opposed to convolutional/MLP models which depend linearly on their inputs. \citet{tian2024joma} propose joint MLP/Attention dynamics to predict attention entropy during training. We note that the ``vertical'' or ``no-op'' attention structures discussed in previous OF works \cite{puccetti2022outliers,bondarenko2023quantizable} have collapsed attention entropy, and can be thus be seen as undesirable from the perspective of other existing works.

\paragraph{Signal Propagation } Signal Propagation studies how different inputs evolve through a deep NN, and how their feature representation magnitudes and cosine similarities evolve with depth. Our work focuses on forward signal propagation (which studies the forward-pass activations), as opposed to backward signal propagation (which studies the backward-pass activation derivatives), in line with the study of Outlier Features.

For an input activation matrix $\rmX\in\mathbb{R}^{n\times d}$ of $n$ inputs and width $d$, mapped to an activation matrix $\rmX_l \in\mathbb{R}^{n\times d}$ at layer $l$, signal propagation theory studies the evolution of the input-wise Gram matrix $\sigmai^l = \rmX_l \rmX_l^{\top}\in\mathbb{R}^{n\times n}$ for increasing depths $l$. This is a key object in an NN, as it tracks the ``geometric information'' that is conveyed in a deep layer, through inner products between different inputs. The diagonal elements of $\sigmai^l$ indicate the activation norms, and the off-diagonal elements indicates how similar a deep layer views two inputs to be.

At initialisation, $\sigmai^l$ can be tracked through its large $d$ limits \citep{lee2018deep,alexander2018matthews,yangtp1}. By studying $\sigmai^l$, one can see several issues that will afflict badly designed NNs \cite{schoenholz2016deep,pmlr-v97-hayou19a,yang2018a,dong2021attention,martens2021rapid}, that affect either the diagonal elements, the off-diagonal elements or both at large depths. For example, the diagonal elements of $\sigmai$ could blow up, which indicates exploding activation norms. For transformers, a particular degeneracy, known as rank collapse \cite{dong2021attention}, can appear where the off-diagonals of $\sigmai^l$ become positive and large, and $\sigmai^l$ becomes proportional to the all ones matrix if activation norms are constant. Rank collapse is also possible in MLPs/CNNs \citet{schoenholz2016deep,pmlr-v97-hayou19a,xiao2020disentangling,martens2021rapid}, and is equivalent to the over-smoothing phenomenon in graph NNs \cite{oono2019graph}. \citet{martens2021rapid} argue that rank collapse will lead to vanishing gradients, which \citet{noci2022signal} show specifically for query and key parameters in transformers. As a result, when we refer to bad signal propagation, we mean that the off-diagonals of $\sigmai$ are large and positive, close to rank collapse. This can be either through the RMS of input correlations, $\sqrt{\frac{1}{n(n-1)}\sum^n_{\alpha\neq \beta} \big(\frac{1}{d}\sigmai\big)_{\alpha,\beta}^2 }$, as we show in the appendix, or the mean, $\frac{1}{n(n-1)}\sum^n_{\alpha\neq \beta} \big(\frac{1}{d}\sigmai\big)_{\alpha,\beta}$ as we show in \cref{fig:preln_sigprop,fig:op_sigprop}.

By applying Signal Propagation theory at initialisation, it is possible to design modifications to NN architectures and initialisations that correct potential degeneracies and/or yield simpler and/or more scalable architectures \cite{xiao2018dynamical,hayou2021stable,martens2021rapid,zhang2022deep,noci2022signal,he2023deep,he2024simplifying}. But the vast majority of existing works in the literature do not theoretically study training beyond initialisation, and those that do are usually restricted to the NTK \cite{jacot2018neural} regime \cite{hayou2021stable,martens2021rapid}, which precludes feature learning, and OFs. \citet{lou2022feature} suggest that the feature alignment \cite{pmlr-v130-baratin21a} phenomenon during training is correlated to the rate at which signal propagation converges to its limit in a deep NN. Even at initialization, the distribution of the neurons becomes more heavy-tailed with depth \cite{vladimirova2019understanding}, thus making outliers more likely. \citet{noci2021precise} gives a precise description of the kurtosis for ReLU networks, showing that it grows exponentially with depth. Together with the results presented in this work, there is empirical and theoretical evidence that depth has the double effect of increasing both the correlations and making large activations more likely, which we observe to be detrimental to outliers. However, the theoretical treatment of the emergence of outliers during training is still an open question.

\section{Mathematical description of OP block}\label{app:OP_block_maths}

For completeness, in \cref{eq:op_block,eq:qknorm_attn} we present a mathematical description of the OP block (\cref{fig:op_block}), based on the notation in \cref{app:transformerblock}. Recall the OP block replaces the Pre-Norm block by removing its normalisation layers in both Attention and MLP sub-blocks, and making three additional changes: 1) downweighting residual branches with some $\beta_{\text{MLP}},\beta_{\text{Attn}}=O(1/\sqrt{\text{depth}})<1$ to recover Signal Prop benefits of Pre-Norms \cite{sam_soham_batch,hayou2021stable,noci2022signal,he2024simplifying}, 2) adding an Entropy Regulation mechanism to prevent Entropy Collapse e.g. QK-Norm or tanh-softcapping, and 3) (optionally) scaling the inputs before the MLP nonlinearity by a scalar $\alpha_{\text{MLP}}$ to ensure the nonlinearity inputs are of order 1, following \citet{pmlr-v139-brock21a}.

Mathematically, this can be expressed as:

\begin{equation}
\begin{aligned}\label{eq:op_block}
    \rmX_{\text{out}} = \,\hat \rmX + \,\beta_{\text{MLP}} \text{MLP}(\alpha_{\text{MLP}} \hat{\rmX}), \hspace{1em} \text{where} \hspace{.4em}
    \hat{\rmX} = \,\rmX_{\text{in}} + \,\beta_{\text{Attn}} \text{MHA}(\rmX_{\text{in}}).
\end{aligned}
\end{equation}

 If QK-Norm is entropy regulation mechanism, then $\text{MHA}$ is multihead attention using QK-Norm in the attention heads:
\begin{align}\label{eq:qknorm_attn}
    \text{Attn}(\rmX) &= \rmA(\rmX)\rmX\rmW^V, \hspace{1em}\text{where} \hspace{.4em}
    \rmA(\rmX) = \text{Softmax}\left(\frac{1}{\sqrt{d_k}} \text{Norm}(\rmX\rmW^{Q})\text{Norm}(\rmX\rmW^{K})^{\top}+ \rmM\right) ,
\end{align}

\vspace{-.5em}
\section{Modifications That Affect Signal Prop During Training Affect OFE}\label{app:signal_prop}

\begin{figure}[h]
    \centering
    \vspace{-2.em}

    \subfigure{\includegraphics[width=0.49\textwidth]{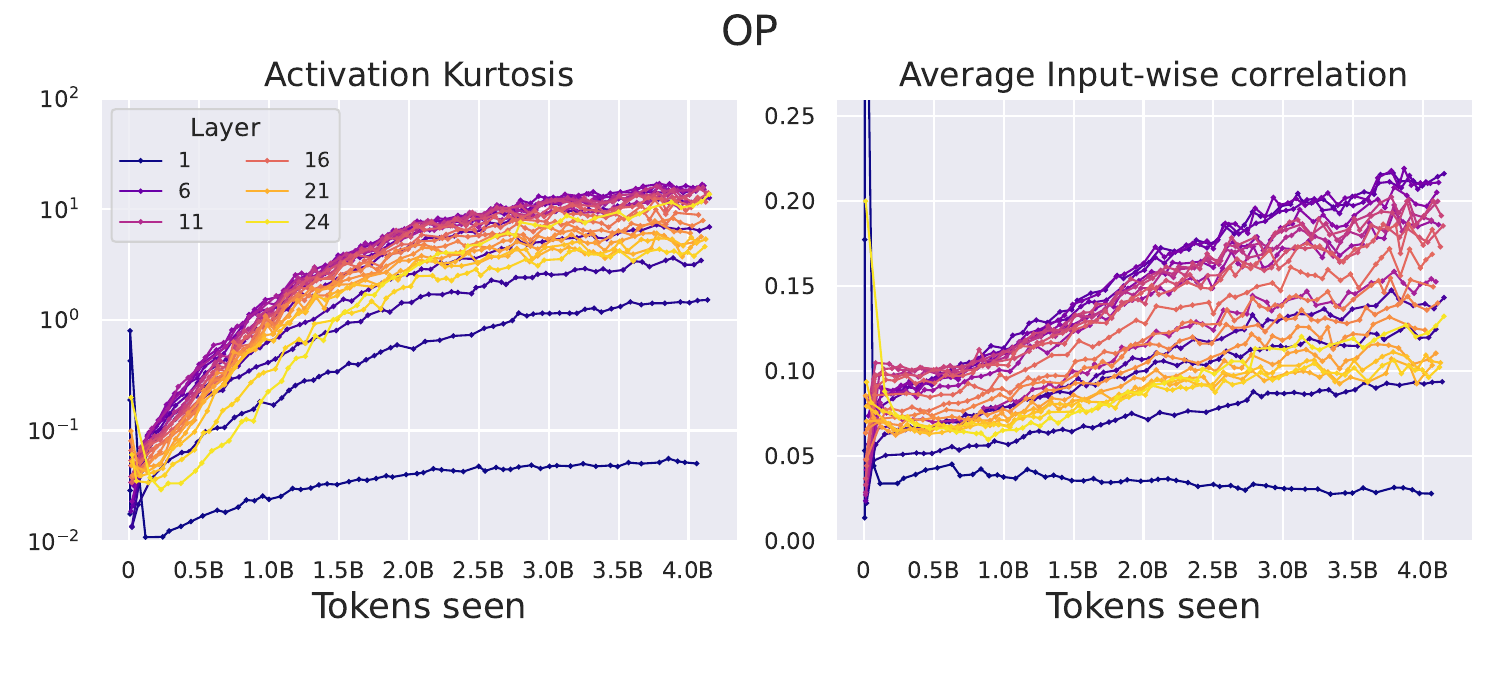}}
    \subfigure{\includegraphics[width=0.49\textwidth]{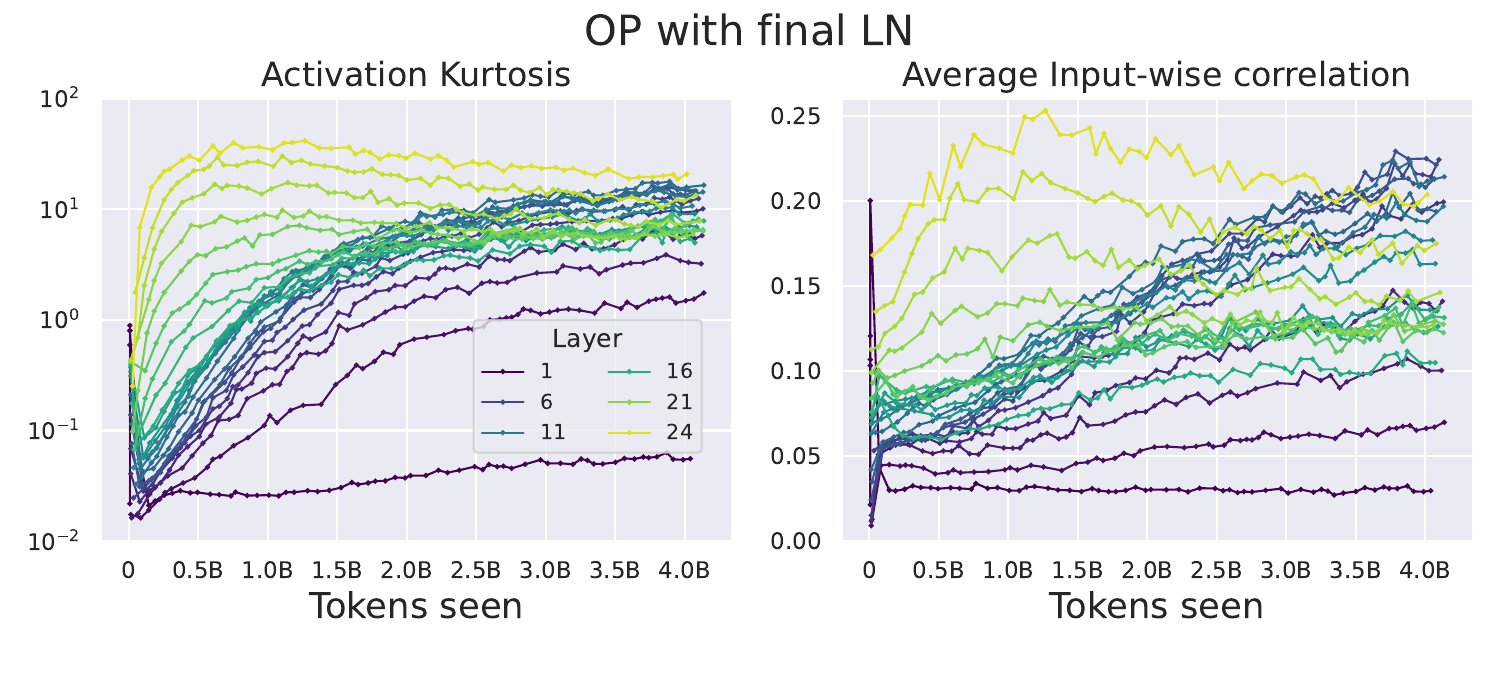}}
    \vspace{-1.em}
    \caption{OP layers at 1.2B scale with worse Signal Propagation (i.e. higher input correlations) during training (centre left) have higher feature kurtosis (left). (\textbf{Right vs. left two plots}) Introducing a final LN before unembedding causes larger input correlations and feature kurtosis in later layers, even with the OP block. \textbf{NB}: y-axes values here  are significantly smaller than \cref{fig:preln_sigprop} with Pre-LN.}
    \label{fig:op_sigprop}
\end{figure}

In this section we continue our discussion of the connection between Signal Propagation and OFE in \cref{subsec:sig_prop}, towards the practical implication that methods designed to improve Signal Propagation also improve OFE as a result.

To the best of our knowledge, in the Signal Propagation literature, most works have focused on characterising and improving Signal Propagation \textit{at initialisation} due to analytic convenience. At initialisation, only the architecture plays a role and not the optimiser. In particular, a practical focus of such works is to design architectural modifications that allow non-degenerate deep limits for models whose input cosine similarities can be well approximated by their large-width limits at initialisation \cite{pennington2017resurrecting,xiao2018dynamical,hayou2021stable,martens2021rapid,he2023deep,he2024simplifying}. Those considering training dynamics often reside in the kernel regime \cite{jacot2018neural} and are thus not compatible with feature learning \cite{chizat2019lazy,yang2020feature} which is necessary for OFE and Signal Prop dynamics during training. Our results connecting Signal Prop and OFE highlight the importance to the community of understanding Signal Prop dynamics during training in feature learning regimes, beyond initialisation. We note \citet{tian2024joma} predict attention entropy dynamics through joint MLP/Attention. In any case, we empirically study the impact of initialisation-inspired Signal Prop architectural modifications in terms of OFE during training.

\paragraph{Downweighted residuals} Of initialisation-inspired Signal Prop modifications, the most prevalent is downweighting residual branches $h(x) = x + \beta f(x)$ with some $\beta\ll1$ \cite{sam_soham_batch,hayou2021stable,noci2022signal}.\footnote{Typically, the theory indicates that $\beta=O(\frac{1}{\sqrt{\text{depth}}})$ enables a well-behaved infinite-depth limit.} In \cref{fig:preln_sigprop}, we see that downweighting residuals (with a trainable scalar $\beta$ initialised to $0.1$) improves Signal Propagation in a 24-block 1.2B Pre-LN model, not only at initialisation but also \textit{during training}, thereby reducing OFE (peak kurtosis is an order of magnitude lower). Having said that, Pre-LN with downscaled residuals still leads to higher kurtosis across layers than our OP block in \cref{fig:op_sigprop}. Downscaling Pre-LN residuals leads to a small loss in performance of 0.2 perplexity. We show the corresponding results at 130M scale in \cref{fig:kurt_resscaling_kurt_all_layers_codeparrot_centred,fig:sig_prop_resscaling_kurt_all_layers_codeparrot_centred,fig:x_rms_resscaling_kurt_all_layers_codeparrot_centred}. Our results are consistent with previous work by \citet{wortsman2023stable} who observe that downweighted residuals help for low precision training in CLIP models, motivated as a way to prevent OFs arising through increasing activations scales $\lVert\rmX\rVert_F$ during training. Given that standard models have Norm layers that are scale invariant (as are our OFE and Signal Prop metrics), we complement this argument by highlighting that the feature learning process of OFE is not only associated with increasing activation scales but also worsening Signal Propagation during training. \cref{fig:x_rms_all_layers_codeparrot,fig:kurt_upds_cum=True} show that $\lVert \rmX\rVert_F$ does not always correlate with OFs.

\paragraph{Normalisation layers} On the other hand, for Norms, the difference between OP and standard blocks with Norms in \cref{fig:preln_sigprop,fig:op_sigprop,fig:sig_prop_all_layers_codeparrot} respectively is already clear evidence that standard Norm placements can lead to worse Signal Propagation (and OFE) during training. To the best of our knowledge, this observation has not been made previously. To test this further, we reintroduce the final LN right after the final OP block (just before the unembedding layer) into an OP model, with no Pre-Norms, in \cref{fig:op_sigprop}. We see that the final LN causes some layers to see increases in both kurtosis and input correlations, and these layers correspond \textit{precisely} to the final few blocks immediately preceding the LN. On the other hand, earlier layers further away from the final LN are largely unchanged in terms of both Signal Propagation and OFE during training. The model with a final LN performed slightly worse (0.1 perplexity difference). 

Several works have discussed the effect of Norms on Signal Propagation theory at initialisation. The Deep Kernel Shaping \cite{martens2021rapid} framework is compatible with LN (and also RMSNorm) layers, but makes other modifications (in weight initialisation and activation functions) that mean LN has no effect at initialisation in the wide limit. Other works show centred Norms in fact improve Signal Propagation at initialisation in MLPs by correcting imbalances in input activation norms to improve \textit{Isometry} \cite{joudaki2023impact,meterez2023towards} but consider non-standard architectures that are not residual and have Norm immediately following nonlinear activations, whereas standard Norms take the residual stream as input. Our work shows that initialisation and training can have very different Signal Prop behaviours.

\paragraph{Other Signal Prop modifications} In \cref{fig:kurt_lrelu_codeparrot_comp_all_layers,fig:kurt_vskipinit_codeparrot_comp_all_layers}, we consider the effect of other initialisation-inspired Signal Propagation modifications in terms of OFE. In particular, we consider ``transforming'' activations to be more linear \cite{martens2021rapid,zhang2022deep,li2022neural}, and ``shaping'' attention to be more identity-like \cite{he2023deep,noci2023shaped,he2024simplifying}. Although not predicted by initialisation theory, we find that these modifications mostly also reduce OFE and improve Signal Prop during training as well as initialisation. The latter finding is related to the work of \citet{bondarenko2023quantizable} who show that ``no-op'' heads that place large attention weights on \textit{shared} uninformative tokens encourage OFs: large attention weights on shared tokens also worsen signal propagation,\footnote{Consider the extreme case when all attention weights are placed onto a single token (say the first one): all attention outputs will be equal to the first token's value representation so all token-wise cosine similarities are 1.} compared to identity-dominated attention, which can be seen as a ``no-op'' that instead encourages a token to attend to itself.

\sectionconc{\noindent
\begin{itemize}[leftmargin=*]
    \item Signal Propagation is fundamentally connected to OFE: worse Signal Prop generally implies higher kurtosis and vice versa throughout training, with standard training choices like Adam optimiser (\cref{eq:kurt_plus_cov_eq_signal_prop,prop:sigprop_ofe,fig:preln_sigprop,fig:op_sigprop,fig:sig_prop_all_layers_codeparrot}). We caveat this takeaway with the fact that \cref{fig:sig_prop_precond} shows the picture is more complicated when moving away from diagonal optimisers.
    \item The OP block's mild OFs with Adam optimiser can be traced to increasing input correlations during training (\cref{fig:op_sigprop}).
    \item Choices that improve Signal Prop during training (e.g. scaled residuals) also reduce OFs (\cref{fig:preln_sigprop}). 
    \item Removing standard Norms can improve Signal Prop, \& OFE, during training (\cref{fig:preln_sigprop,fig:op_sigprop}).
\end{itemize}}{\cref{app:signal_prop} key takeaways: Signal Propagation and OFE.}

\section{Additional Experimental Details}\label{app:exp}
\paragraph{CodeParrot} As discussed, all of our experiments at 130M scale (6 layers, width 768 transformers) are on next token prediction with the CodeParrot dataset, with 50K vocabulary size. We use a similar setup to \citet{he2024simplifying}, and have updated their codebase to include the OP block.\footnote{\url{https://github.com/bobby-he/simplified_transformers}}
We train with AdamW optimiser \citep{loshchilov2017decoupled} and weight decay 0.1, $(\beta_1, \beta_2)=(0.9,0.999)$, and $\epsilon=1e-8$ unless otherwise stated, and clip the maximum gradient norm to 1.  We do not tie embeddings, and remove the final layer norm before unembedding layer. When we plot metrics (kurtosis, signal propagation, MMR etc) we plot the residual stream entering the attention sub-block (plots for the residual stream before the MLP sub-block are qualitatively the same). The only exception is the last layer, which is the input to the unembeddings. When we downweight residuals we set $\beta=0.3$ in both attention and MLP sub-blocks unless otherwise stated. We do not train residual scalings $\beta$.  Unless otherwise stated, we train with sequence length 128 and batch size 32 for 80K steps, with linear warmup to maximum learning rate $1e-3$, for 5\% of the steps, before linear decay. We keep the standard parameter initialisations to $\mathcal{N}(0, \text{std}=0.02)$ but upweight the input embeddings by a factor of 50 in order to make the average squared input 1 at initialisation, similar to considerations made by the \href{https://github.com/google-deepmind/gemma/blob/a24194737dcb54b7392091e9ba772aea1cb68ffb/gemma/modules.py#L42C19-L42C33}{Gemma} model \cite{team2024gemma}. We use ReLU activations and do not scale inputs with an $\alpha$, c.f. \cref{fig:op_block}, because ReLU is 1-homogeneous.

\paragraph{Languini} For Languini \cite{stanic2023languini} our 100M, 320M, and 1.2B model sizes follow the ``small'' (depth 12, width 768), ``medium'' (depth 24, width 1024), and ``XL'' (depth 24, width 2048) model sizes provided by the authors, respectively. Our setup follows the authors in terms of codebase and tokeniser. We train with sequence length 512 and batch size 128, again with a maximum learning rate of $1e-3$ unless otherwise stated. This learning rate was the largest stable and best performing choice on a logarithmic grid. We train with AdamW, using weight decay of 0.1 and clip the maximum gradient norm to 0.5.  We use linear warmup and linear decay after 1000 steps. We additionally use RoPE \cite{su2024roformer}, with GeLU nonlinearities in the MLPs. We use the same method as \citet{pmlr-v139-brock21a} to calculate $\alpha$ to scale inputs to the GeLU. When we downweight residuals, we initialise $\beta=0.1$ and allow them to be trainable. When we plot layer-wise metrics like kurtosis, we plot the outputs of the Pre-Normalisation layer (if there is one), otherwise, we treat the Normalisation layer as the identity and plot the residual stream going into the attention sub-block. We use tied embeddings. We also keep the standard parameter initialisations to $\mathcal{N}(0, \text{std}=0.02)$ but upweight the input embeddings by a factor of 50 in order to make the average squared input 1 at initialisation. 

\paragraph{CIFAR-10}
For our MLP experiments on CIFAR-10, we train using batch size 2048 for 200 epochs. As described in \cref{sec:opt_hypers}, the model has 6 Pre-Norm layers with width 1024, giving 15M parameters. We zero initialise the last layer, and additionally downweight the output layer by $\sqrt{\text{width}}$ akin to $\mu$P \cite{yang2020feature}, to encourage feature learning. We train with MSE loss and use LR 3 for SGD and 3e-3 for Adam. We use standard betas and epsilon for Adam and we do not use weight decay. We warm up the LR for 200 steps before cosine decay. We additionally found that it was important to whiten the inputs in order to observe OFE in the residual stream. We note that transformer embeddings are independently initialised, which can be thought of as implicitly whitening the embeddings for different tokens. Whitened inputs correspond to signal propagation with zero input correlations. This again suggests that signal propagation (and properties of the data) are important for OFs, but we leave further understanding of this to future work. We use PCA to whiten inputs.

\paragraph{Non-diagonal preconditioners}
For our non-diagonal preconditioning experiments, we take the implementation provided by SOAP \cite{vyas2024soap}.\footnote{\url{https://github.com/nikhilvyas/SOAP/tree/main}} We keep the default hyperparameters $\beta_1=0.95, \beta_2=0.95$ and preconditioning frequency to 10, as suggested by the authors. As in our other experiments, we use weight decay $0.1$. It is likely that further hyperparameter tuning would further improve our results with non-diagonal preconditioners.

For our implementation of ``AdaFactor in Shampoo's eigenbasis'' (shown by \cite{vyas2024soap} to be akin to Shampoo \cite{gupta2018shampoo}), we do not rotate the input embedding dimension, which has (vocabulary) size 50k in our CodeParrot setup, but continue to store rank-1 moving averages for the squared gradient in that dimension, which are applied to the unprojected parameters in that dimension, as in AdaFactor. For both AdaFactor and its rotated counterpart, we do not use the parameter-norm dependent learning rate schedule proposed by \cite{shazeer2018adafactor}, instead keeping the same update rule as \cite{zhai2022scaling,zhao2024deconstructing,vyas2024soap} which keeps AdaFactor closest to a rank-1 approximation of Adam.

For \cref{fig:kurt_precond}, we set a larger batch size of 4096 in order to demonstrate more clearly the convergence speed per step benefits of SOAP/Shampoo \cite{zhang2019algorithmic}, as seen in \cref{fig:train_loss_precond}. We additionally use the Warmup-Stable-Decay (WSD) scheduler \cite{hu2024minicpm,hagele2024scaling} with LR=$0.001$ (we found this also to be the maximal trainable LR for SOAP/Shampoo in our setting), 5\% warmup and 20\% decay steps. WSD maintains a higher LR for longer and increases kurtosis in OF-prone settings like Adam/AdaFactor as a result, as expected from e.g. \cref{fig:small_lr_single_kurt}.

\paragraph{FineWeb-Edu} Our longer 1.2B and 7B experiments (\cref{fig:scaling}) using FineWed-Edu \cite{penedo2024fineweb} were conducted on a fork of nanotron.\footnote{\url{https://github.com/huggingface/nanotron}} The model architectures are based off LLaMa \cite{touvron2023llama} but we use single hidden layer GeLU MLPs instead of SwiGLU, as SwiGLU MLPs are themselves a source of OFs \cite{fishman2024scaling}. Our 7B model has hidden width 4096 and 32 layers, and our 1.2B model has hidden width 2048 and 24 layers.

For our 6B token 7B model runs, we train for 20K steps with global batch size 72 (node batch size 6 on 12 nodes of 4$\times$GH200 GPUs, with tensor parallelism) and context length 4096. This gives $72\times 4096\times20000 \approx 6$B tokens. For our 90B token 1B model runs, we train for 50K steps with global batch size 440 (device batch size 5 on 22 nodes of 4$\times$GH200 GPUs) and context length 4096 to give $440\times 4096\times 50000=90$B tokens. We add LayerScale \cite{touvron2021going} to implement the downweighted residuals in OP block, with trainable residual gain vectors initialised to 0.1 and 0.05 for our 1.2B and 8B experiments respectively. We use RMSNorm for the QK-Norm in the OP block. Like in our CodeParrot and Languini experiments, we upweight the input embeddings by a factor of 50 in order to make the average squared input 1 at initialisation.

\paragraph{Quantisation} As discussed, our quantisation experimental setup closely follows \citet{bondarenko2023quantizable} for fair comparison, training OPT models \cite{zhang2022opt} on BookCorpus and English Wikipedia before post training quantisation. We also use their excellent codebase.\footnote{\url{https://github.com/qualcomm-ai-research/outlier-free-transformers}} The hyperparameters that we change are outlined in \cref{tab:quantisation}. Other hyperparameter are set as defaults e.g. $(\beta_1, \beta_2)=(0.9, 0.95)$, $0.1$ weight decay, or ReLU activation function, as in \citet{zhang2022opt}. 

In \cref{tab:quantisation}, as one reads down the rows for a given architecture, the optimisation hyperparameters are changed one after another on top of each other. The only exception is SOAP, where we keep AdamW's $\epsilon=10^{-8}$ (recall SOAP is just Adam in the eigenbasis of Shampoo's preconditioner, so $\epsilon$ is still used). The model architecture is OPT \cite{zhang2022opt} and have 125m parameters with width 768 and 12 layers. We initialise the trainable scalar residual gains to 0.2 in the OP block and use LN for the QK-Norm. Again, we upweight the input embeddings in order to make the average squared input 1 at initialisation, but only for the OP block. Like \cite{bondarenko2023quantizable}, we use symmetric quantisation for the weights and asymmetric quantisation for the activations to int8, but also do not quantisation the final unembedding layer. We refer the reader to \citet{bondarenko2023quantizable} for more details on the quantisation procedure.

\section{Additional Experiments}\label{app:further_exps}
In this section, we include all additional experiments not included in the main paper.
\begin{figure}[h]
    \centering
    \includegraphics[width=.95\linewidth]{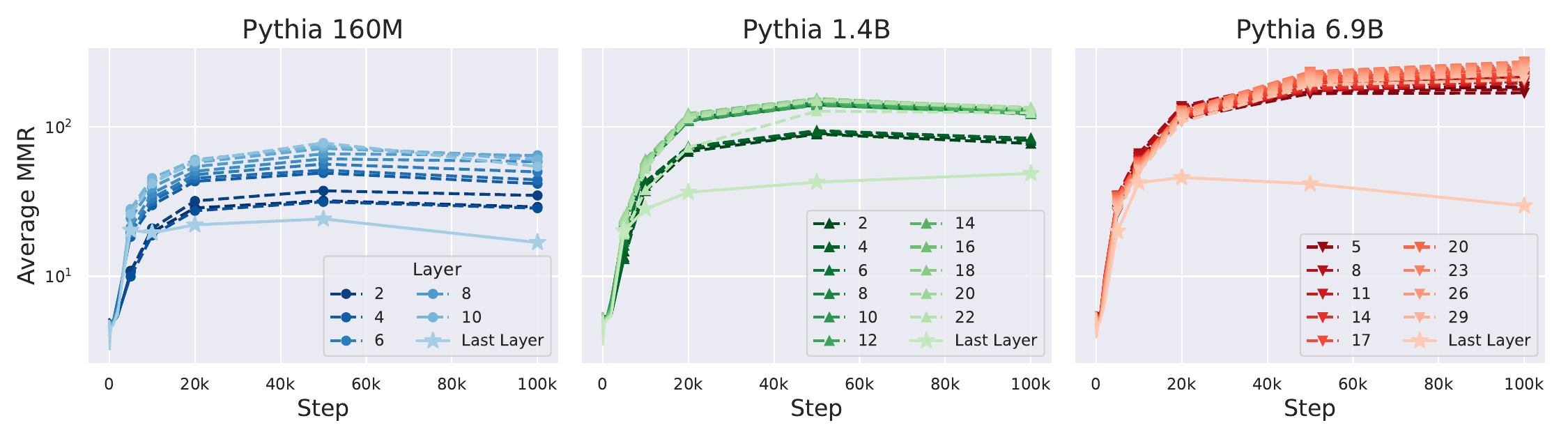}
    \vspace{-1em}
    \caption{Max Median Ratio metric for Pythia, equivalent to \cref{fig:pythia_main}. We take the mean to aggregate over inputs}
    \label{fig:pythia_mmr}
\end{figure}
\begin{figure}[h]
    \centering
    \includegraphics[width=.95\linewidth]{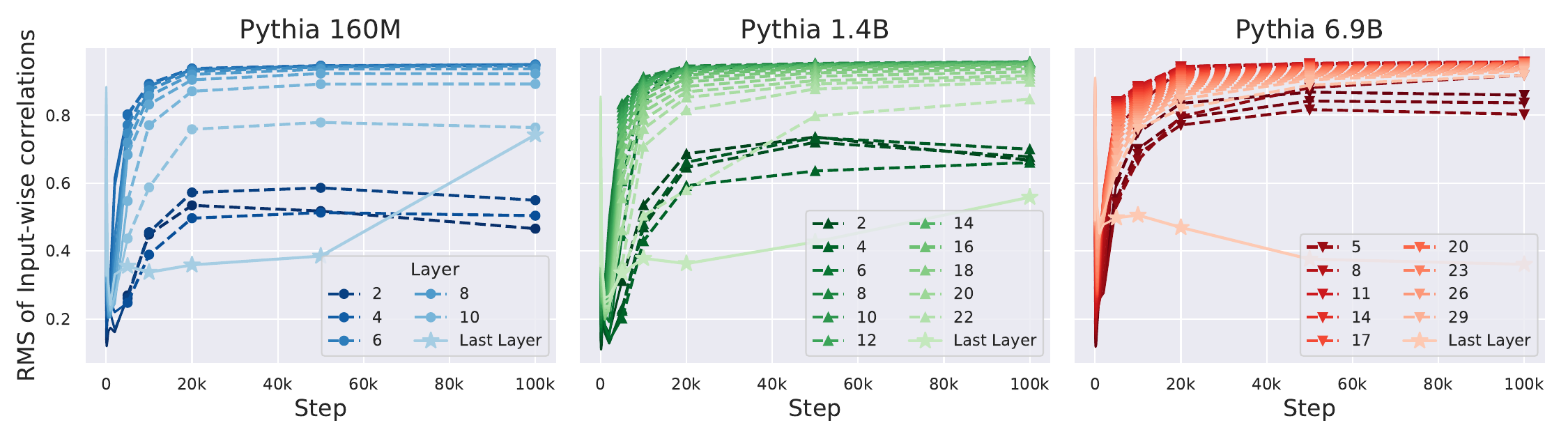}
    \vspace{-1em}
    \caption{Signal Prop for Pythia, equivalent to \cref{fig:pythia_main}.}
    \label{fig:pythia_sig_prop}
\end{figure}
\begin{figure}[h]
    \centering
    \includegraphics[width=0.99\linewidth]{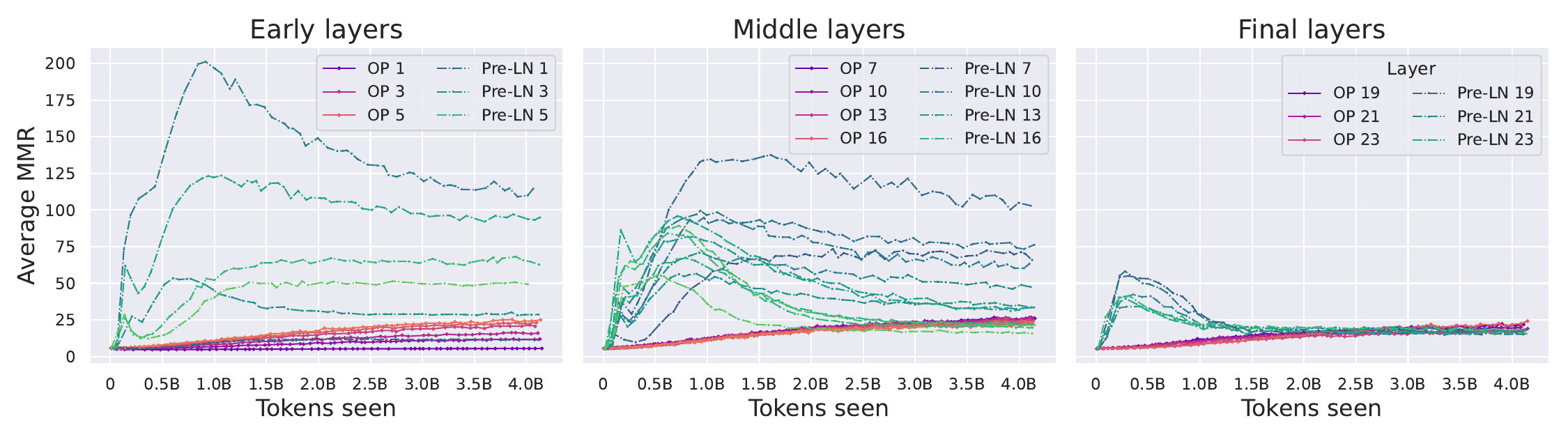}
    \caption{Average MMR metric comparing Pre-LN and OP blocks at 1.2B scale on the Languini Books dataset \cite{stanic2023languini}, equivalent to \cref{fig:kurt_base_comp_xl}.}
    \label{fig:mmr_base_comp_xl}
\end{figure}
\begin{figure}[h]
    \centering
    \includegraphics[width=0.99\linewidth]{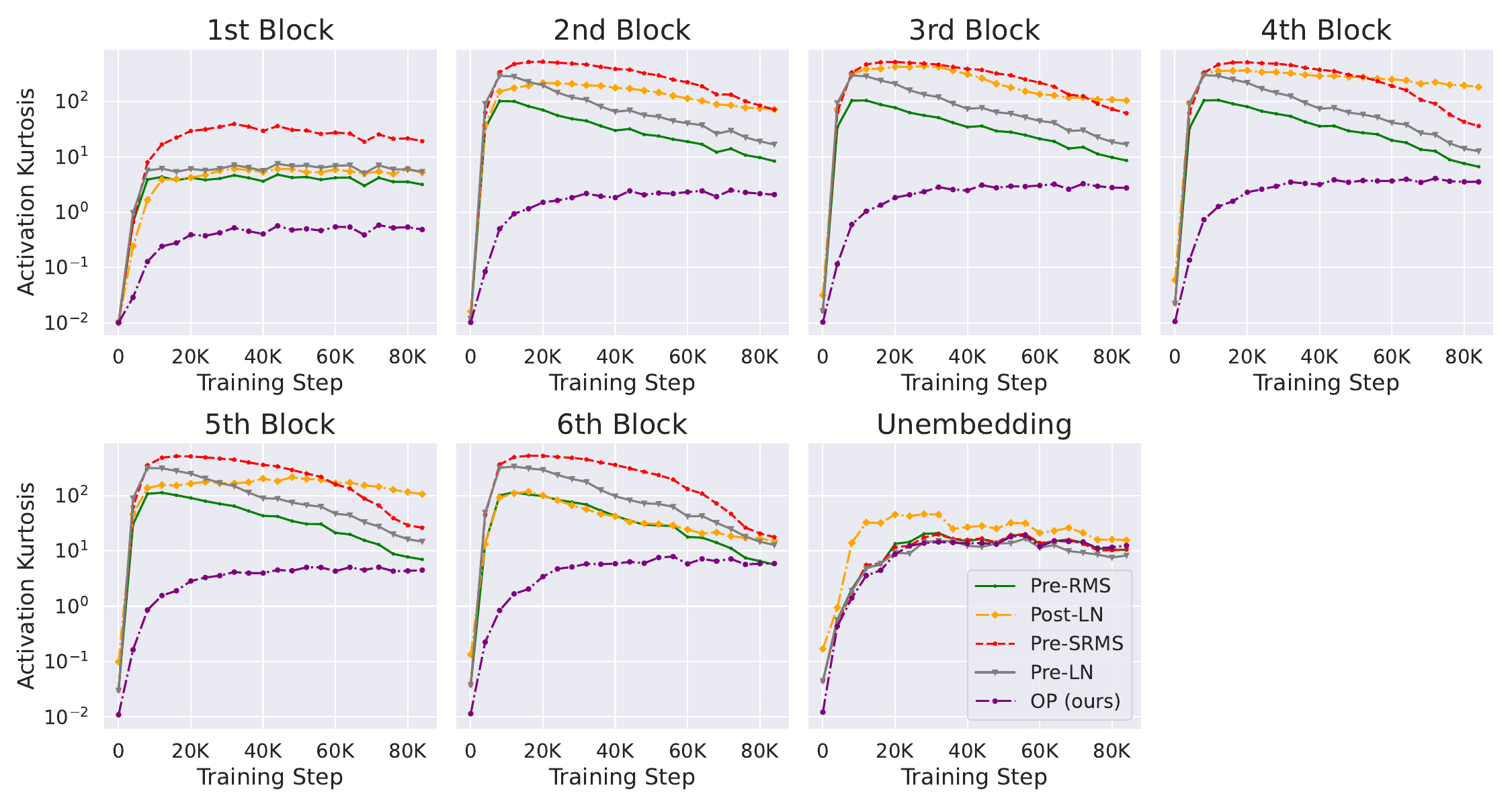}
    \caption{Kurtosis dynamics in different layers using different Norms and Norm locations on CodeParrot at 130M scale. Equivalent of \cref{fig:kurt_comp_diff_blocks} but for the remaining layers. \cref{fig:kurt_comp_diff_blocks} corresponds to the 2nd block.}
    \label{fig:kurt_all_layers_codeparrot}
\end{figure}

\begin{figure}[h]
    \centering
    \includegraphics[width=0.99\linewidth]{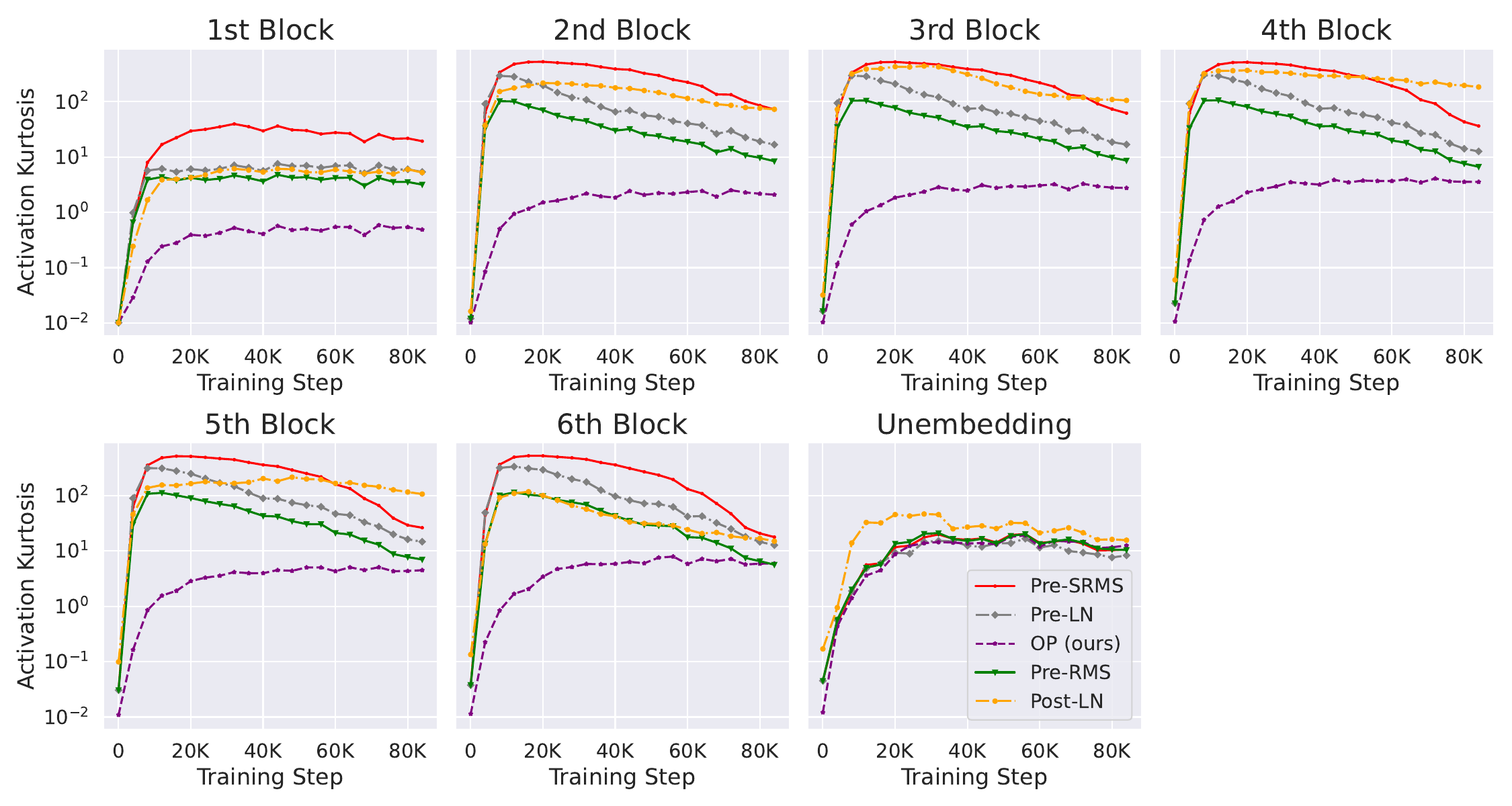}
    \caption{Equivalent of \cref{fig:kurt_all_layers_codeparrot} but with \textbf{centred activations} (centred along the width dimension). Notice there is no qualitative difference to kurtosis dynamics when centring activations.}
    \label{fig:kurt_all_layers_codeparrot_centred}
\end{figure}

\begin{figure}[h]
    \centering
    \includegraphics[width=0.99\linewidth]{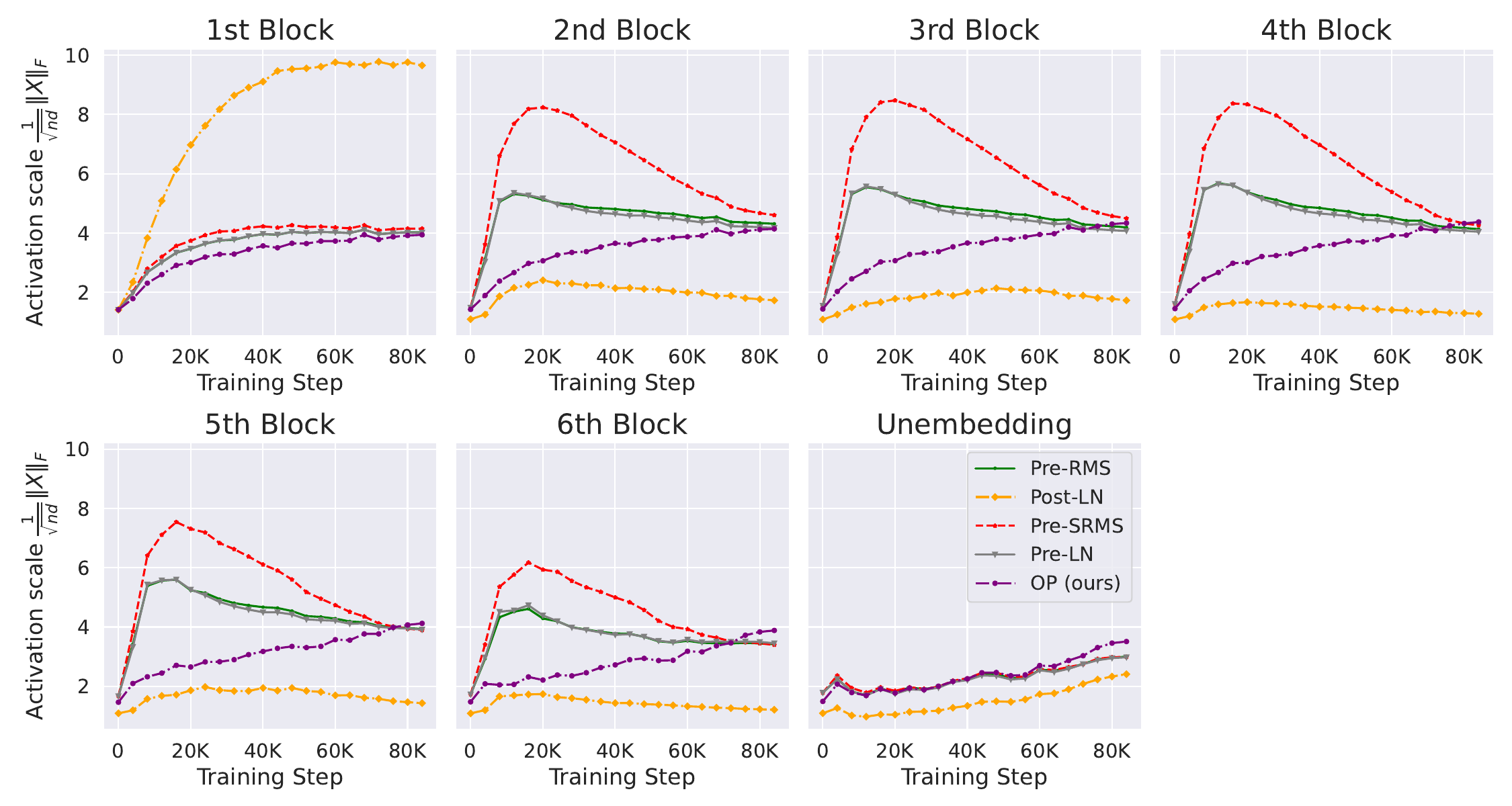}
    \caption{Equivalent of \cref{fig:kurt_all_layers_codeparrot} but for activation scale $\lVert \rmX \rVert_F$ trajectories through training. We see that activation scales do not correlate as well with OFs (\cref{fig:kurt_all_layers_codeparrot}) as signal propagation (\cref{fig:sig_prop_all_layers_codeparrot}). For example, Post-LN has smaller activation scales than the OP block in all blocks besides the first one, but much worse kurtosis in \cref{fig:kurt_all_layers_codeparrot}.}
    \label{fig:x_rms_all_layers_codeparrot}
\end{figure}

\begin{figure}[h]
    \centering
    \includegraphics[width=0.99\linewidth]{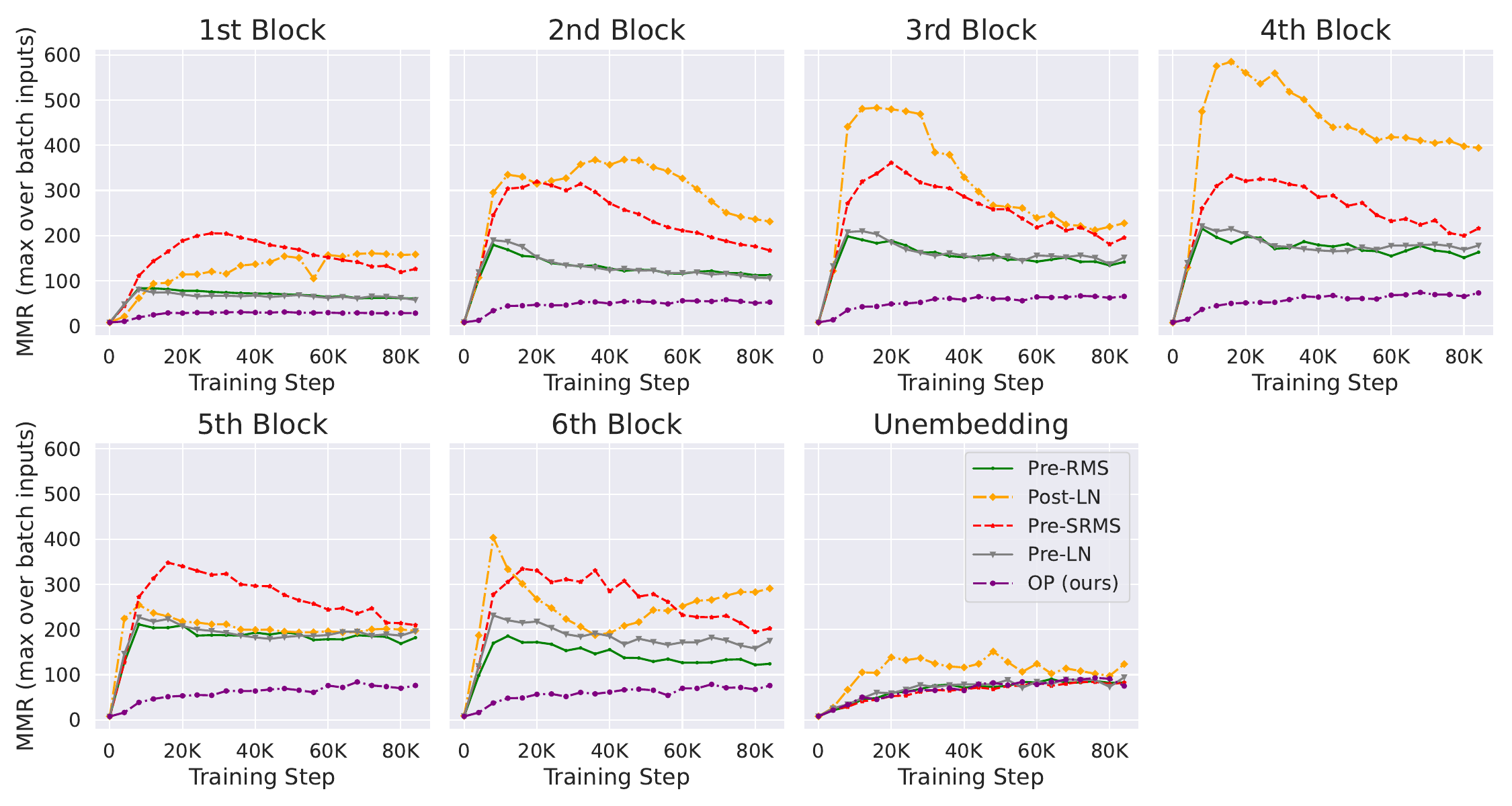}
    \caption{Equivalent of \cref{fig:kurt_all_layers_codeparrot} but for the MMR metric (aggregated using maximum over the batch).}
    \label{fig:mmr_all_layers_codeparrot}
\end{figure}
\begin{figure}[h]
    \centering
    \includegraphics[width=0.99\linewidth]{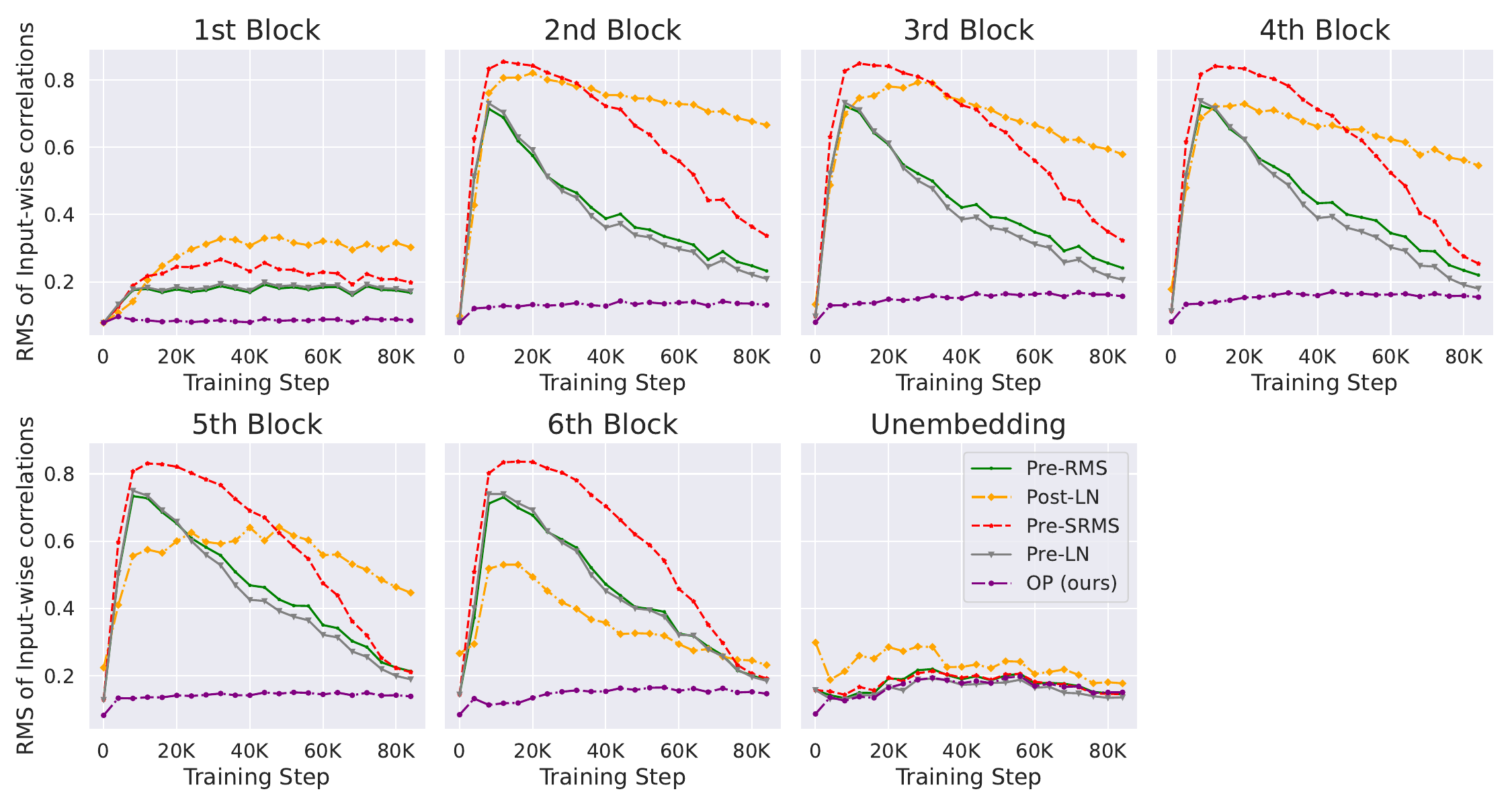}
    \caption{Equivalent of \cref{fig:kurt_all_layers_codeparrot} but for Signal Propagation (in terms of RMS of input correlations).}
    \label{fig:sig_prop_all_layers_codeparrot}
\end{figure}
\begin{figure}[h]
    \centering
    \includegraphics[width=0.99\linewidth]{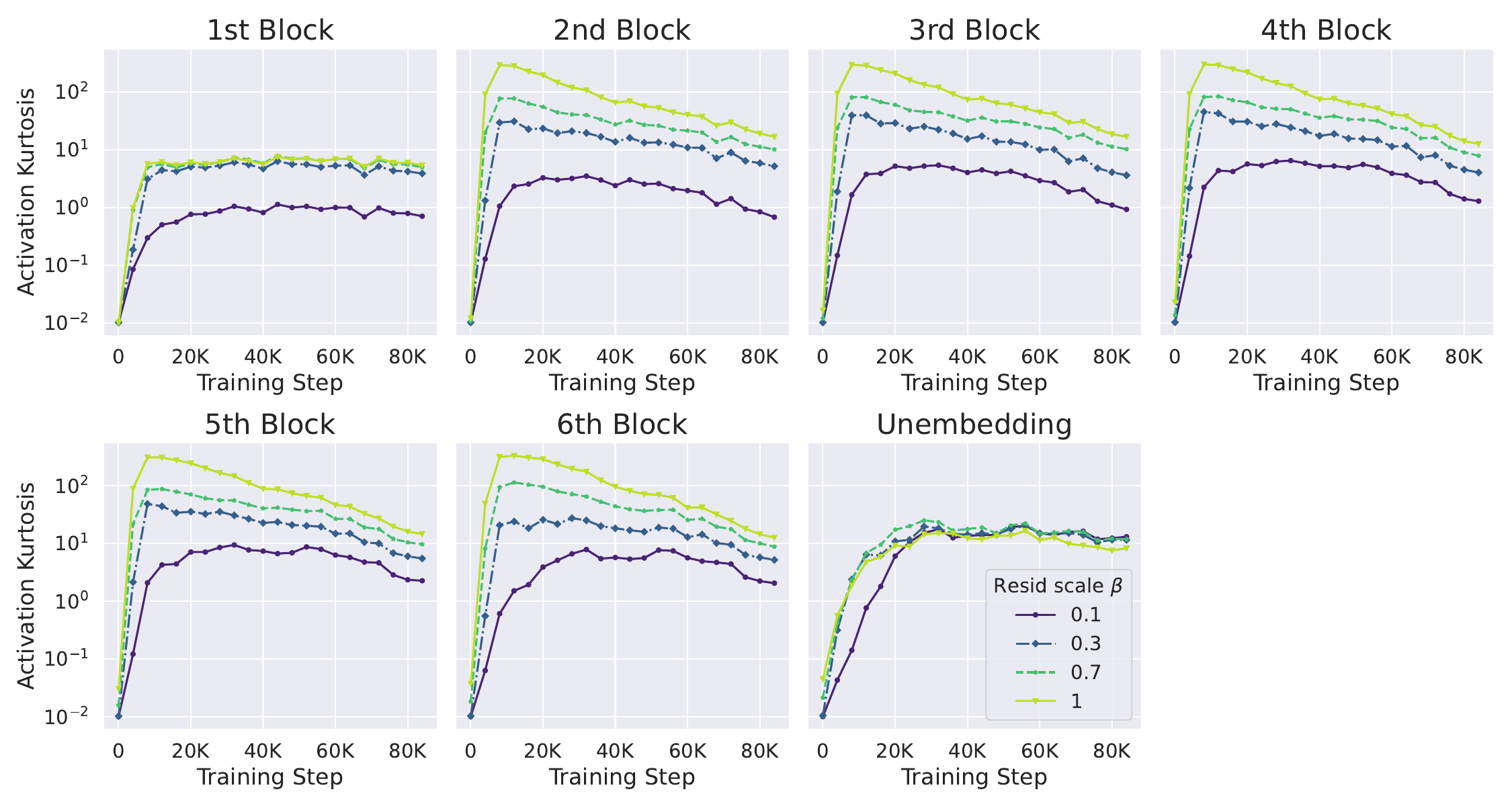}
    \caption{Downweighted residual scalings, $h(x) = x + \beta f(x)$ with $\beta<1$, reduce OFs at 130M scale on CodeParrot. All models are Pre-LN. We downweight both the MLP and Attention residuals.}
    \label{fig:kurt_resscaling_kurt_all_layers_codeparrot_centred}
\end{figure}
\begin{figure}[h]
    \centering
    \includegraphics[width=0.99\linewidth]{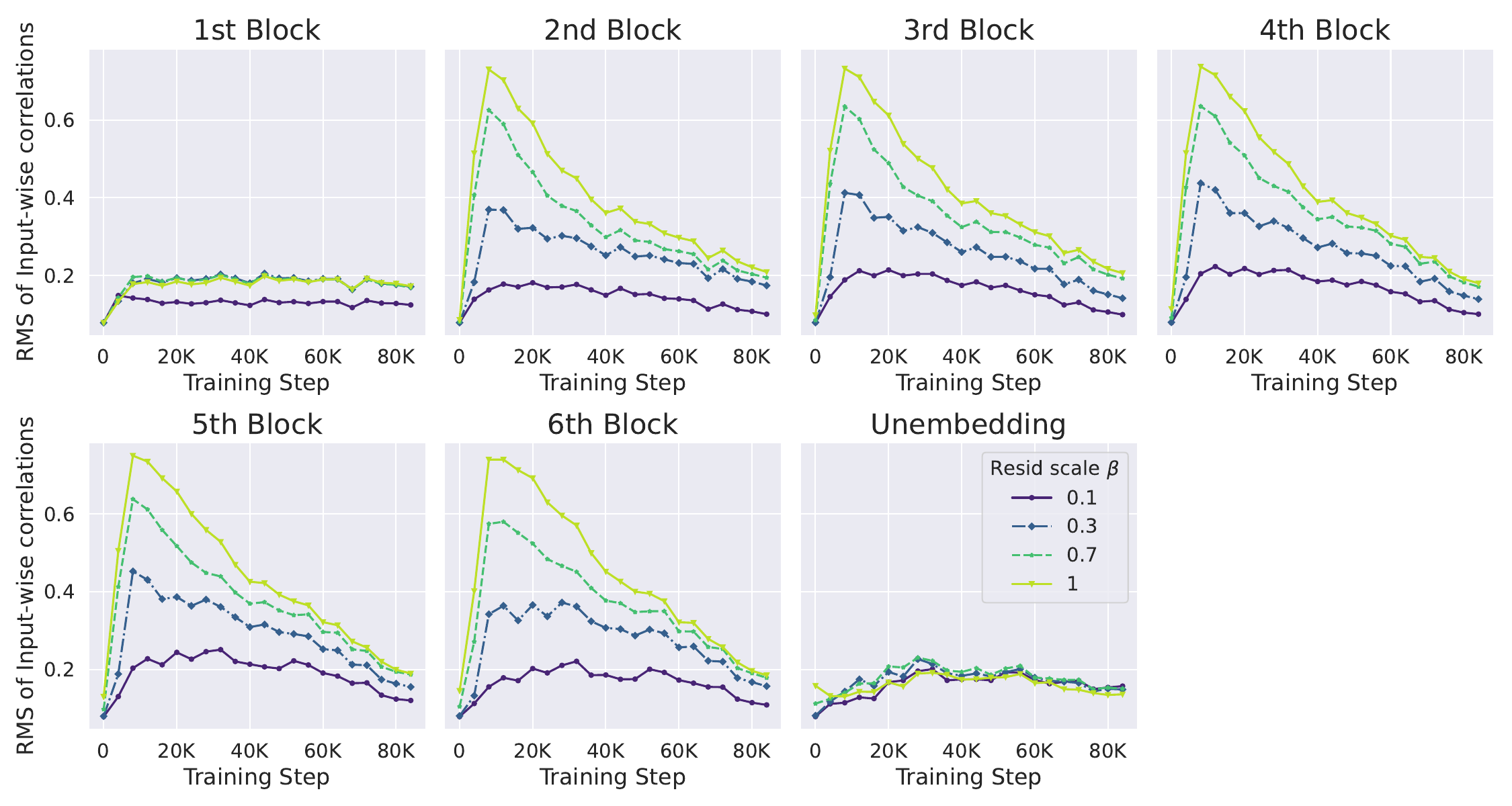}
    \caption{Residual scalings improve Signal Prop at 130M scale. Equivalent to \cref{fig:kurt_resscaling_kurt_all_layers_codeparrot_centred}.}
    \label{fig:sig_prop_resscaling_kurt_all_layers_codeparrot_centred}
\end{figure}
\begin{figure}[h]
    \centering
    \includegraphics[width=0.99\linewidth]{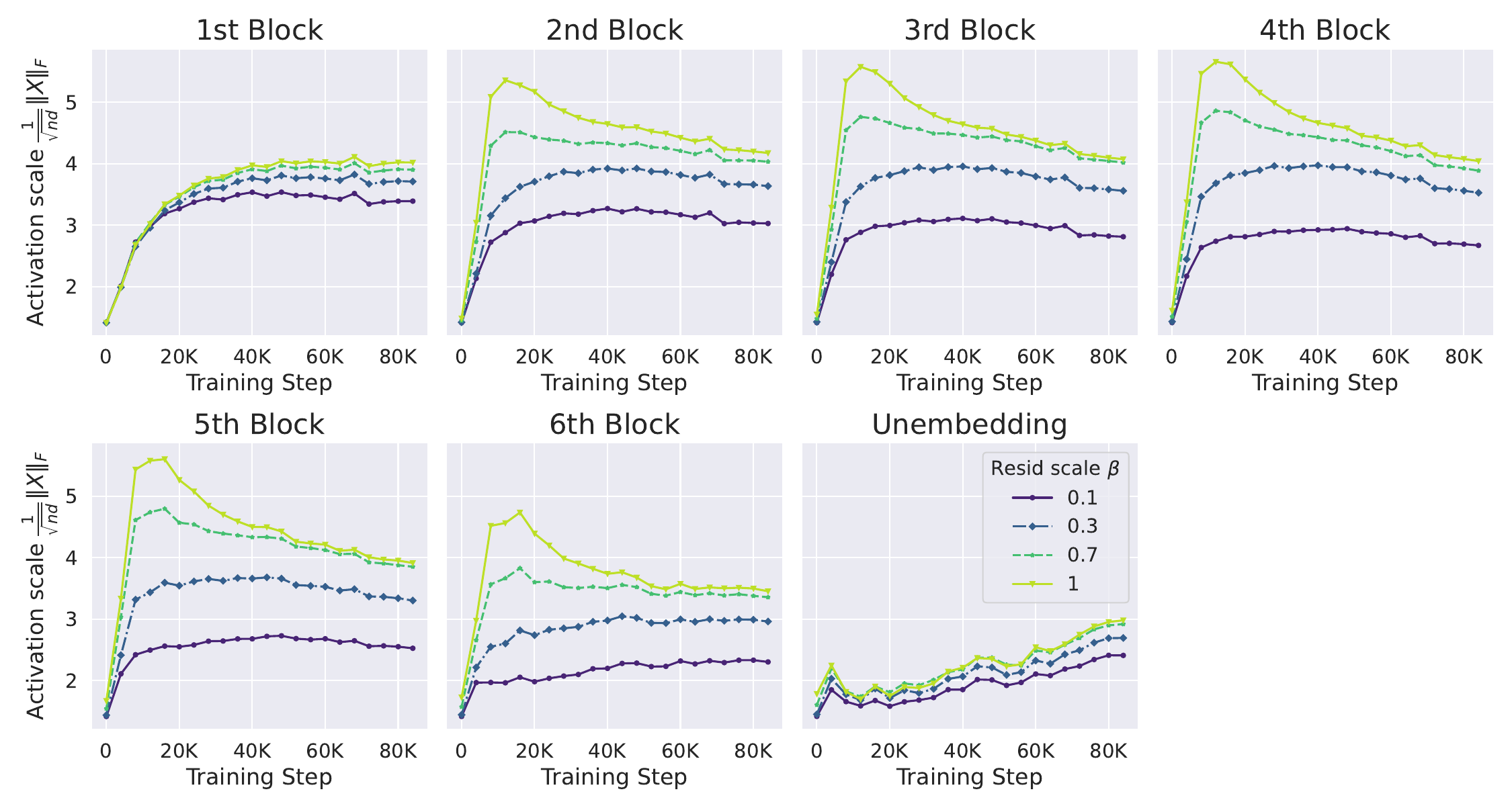}
    \caption{Residual scalings reduce activation scales at 130M scale. Equivalent to \cref{fig:kurt_resscaling_kurt_all_layers_codeparrot_centred}.}
    \label{fig:x_rms_resscaling_kurt_all_layers_codeparrot_centred}
\end{figure}

\begin{figure}[h]
    \centering
    \includegraphics[width=0.99\linewidth]{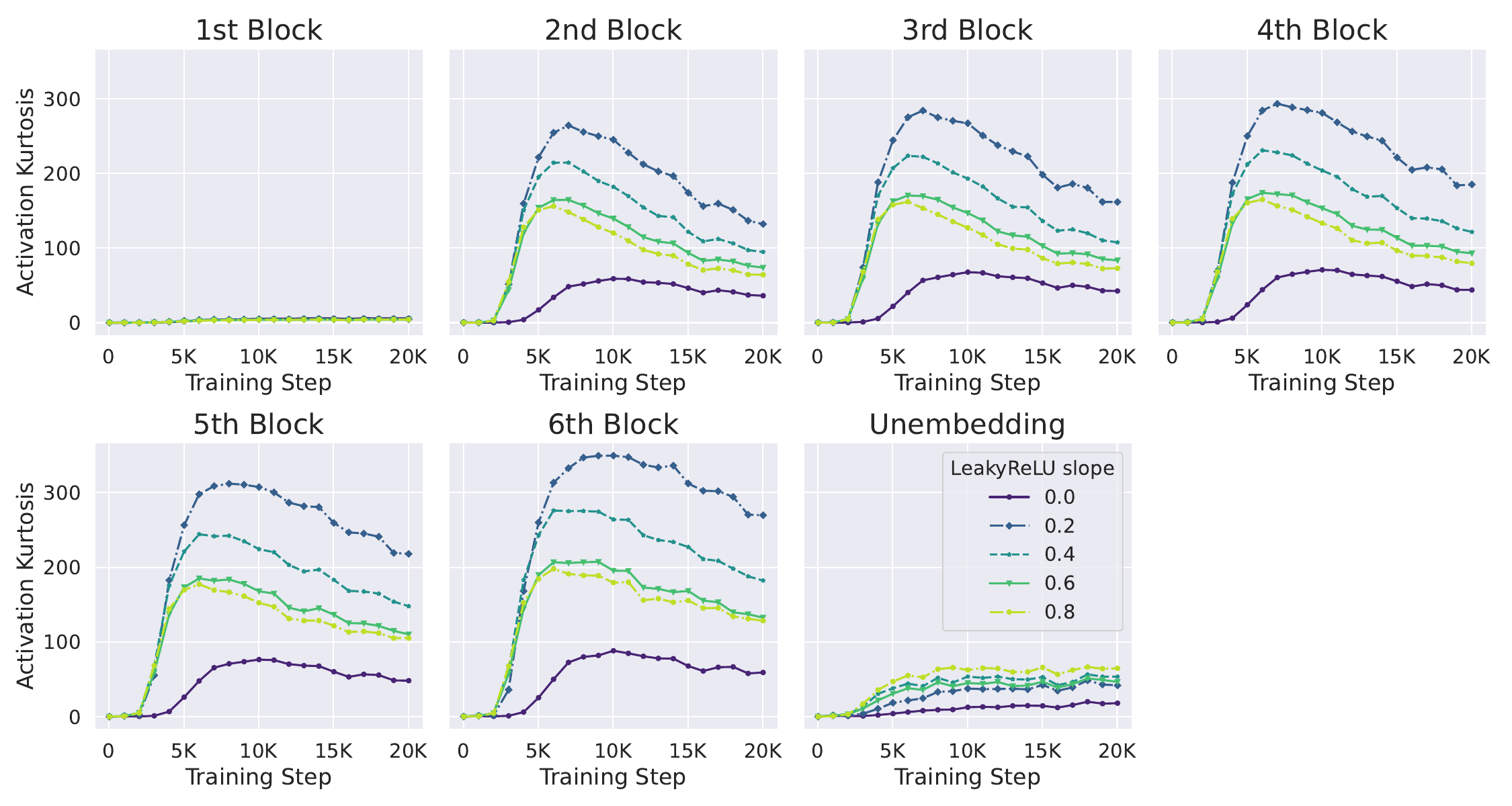}
    \caption{Increasing LeakyReLU slope, $s$, so that the nonlinearity is more linear mostly improves kurtosis during training, as one might expect from Signal Prop initialisation theory \cite{zhang2022deep,li2022neural}. Here our notation is $\text{LeakyReLU}(x) = \text{max}(x,sx)$ for slope $s<1$. The exception is when the slope is 0, i.e. ReLU, the kurtosis is actually better during training, but this is reflected in the signal propagation during training too (\cref{fig:sig_prop_lrelu_codeparrot_comp_all_layers}). We hypothesise this is because zero neurons get no gradient with ReLU, and this behaves fundamentally differently to a non-zero LeakyReLU slope. The plots show the average over 5 seeds, and we plot the first 20K steps (of 80K). The models are Pre-LN and we downweight the attention residual branch with a factor $\beta=0.2$ to reduce kurtosis contributions from the attention sub-block, but do not downweight the MLP residual. Note we do not use a log-scaled y-axis to make the differences between LeakyReLU slopes clearer. Experiment is at 130M scale on CodeParrot.}
    \label{fig:kurt_lrelu_codeparrot_comp_all_layers}
\end{figure}

\begin{figure}[h]
    \centering
    \includegraphics[width=0.99\linewidth]{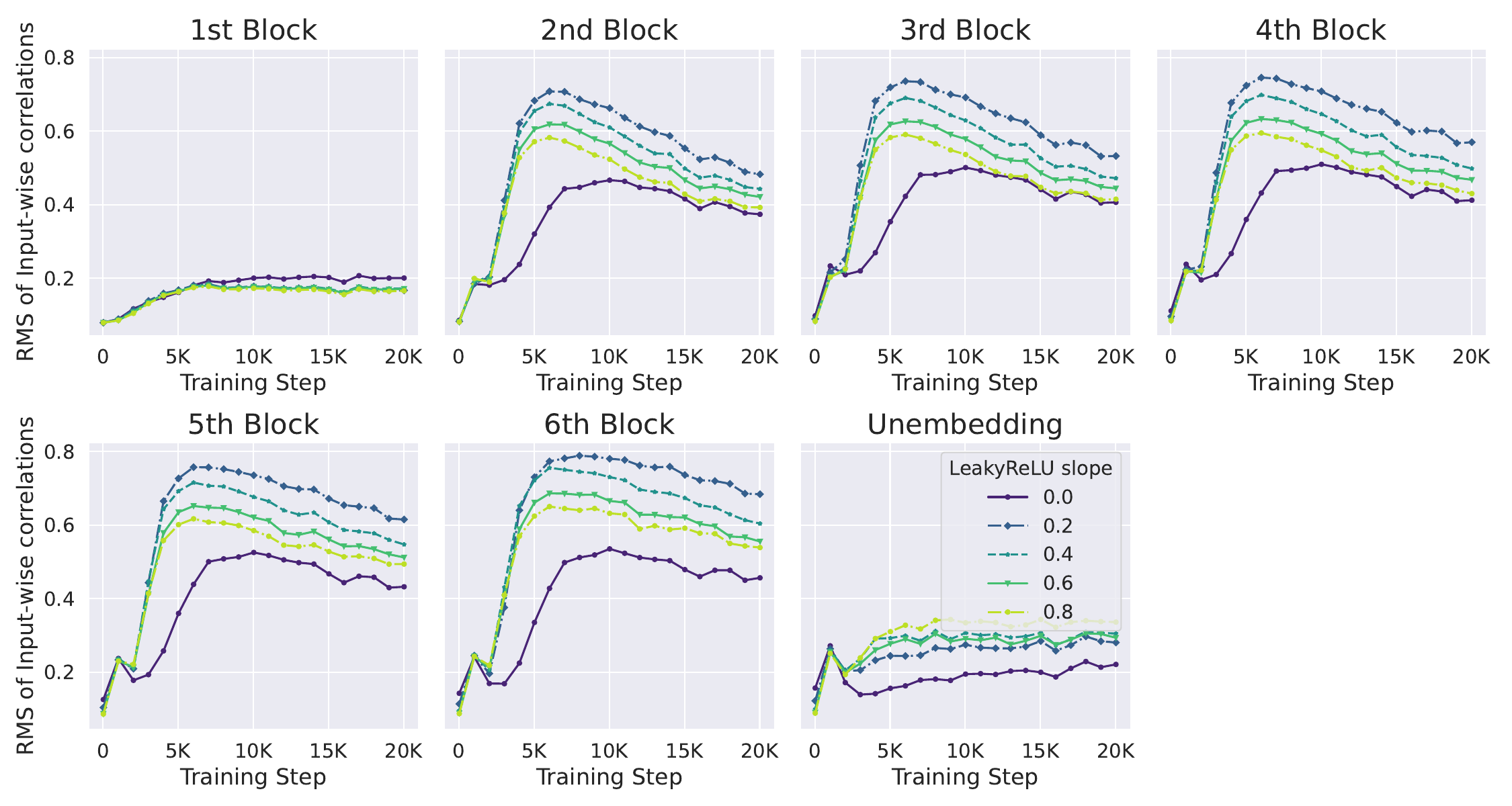}
    \caption{Effect of different LeakyReLU slopes on signal propagation during training, equivalent to \cref{fig:kurt_lrelu_codeparrot_comp_all_layers}. Surprisingly, ReLU (i.e. slope 0) has the best signal propagation (lowest input-wise correlations) during training in this setting, even though it has the worst signal prop at initialisation in later layers, compared to all other LeakyReLU variants. This initialisation effect was predicted by \citet{zhang2022deep}, but our findings regarding training were previously unknown and require further research.}
    \label{fig:sig_prop_lrelu_codeparrot_comp_all_layers}
\end{figure}

\begin{figure}[h]
    \centering
    \includegraphics[width=0.99\linewidth]{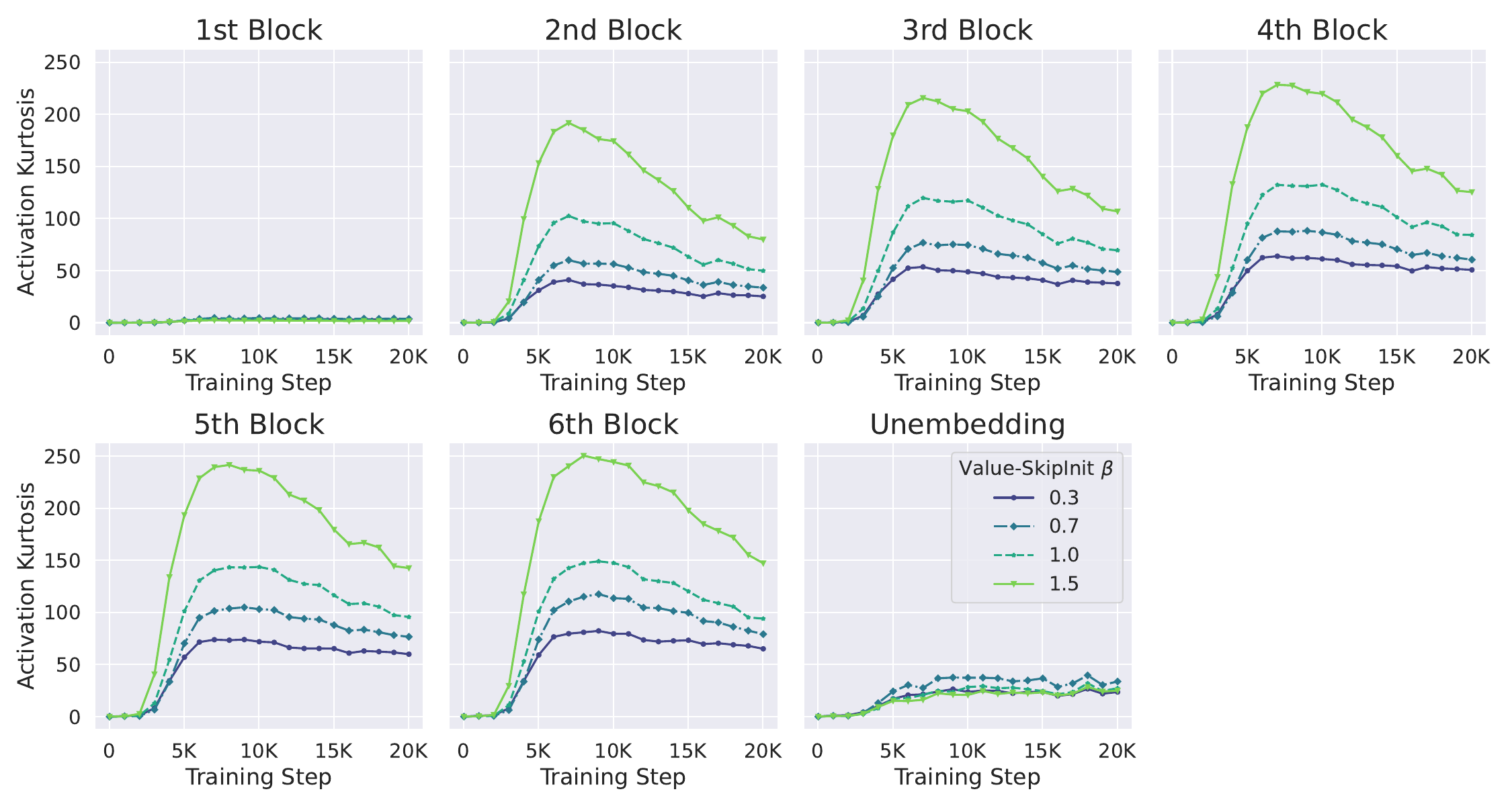}
    \caption{Reducing $\beta$ in Value-SkipInit \cite{he2023deep}, which replaces Attention matrix $\rmA\leftarrow \alpha\rmI + \beta \rmA$ and makes attention more identity-like also reduces OFs. We do not train $\beta$ in Value-SkipInit and fix $\alpha=1$. The models are Pre-LN and we downweight the MLP residual branch with a factor $0.2$ to reduce kurtosis contributions from the MLP sub-block, but do not downweight the attention residual. Each curve is an average over 5 seeds and we plot only the first 20K steps (of 80K). Experiment is at 130M scale on CodeParrot.}
    \label{fig:kurt_vskipinit_codeparrot_comp_all_layers}
\end{figure}

\begin{figure}[h]
    \centering
    \includegraphics[width=0.99\linewidth]{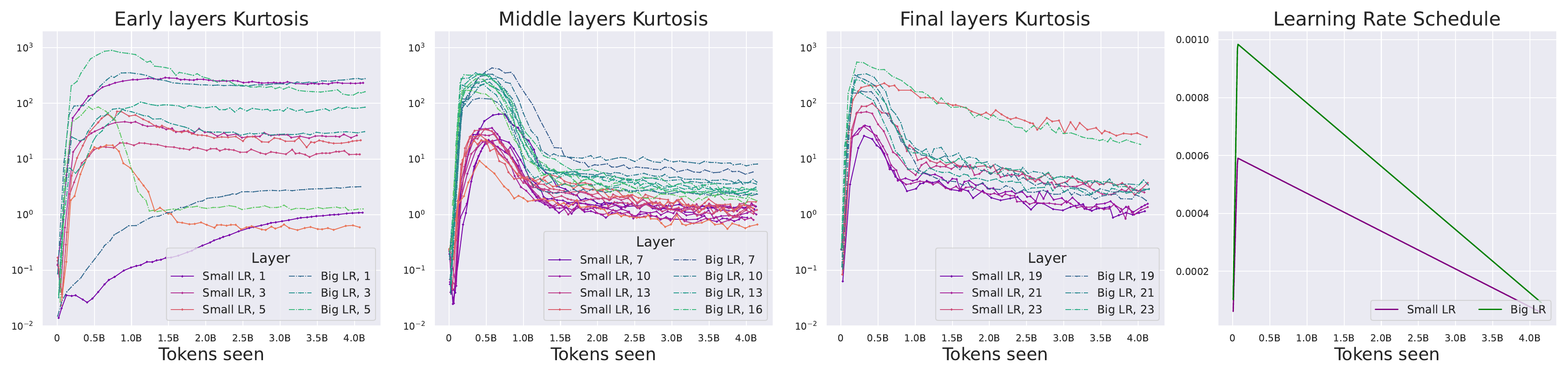}
    \caption{Smaller LR (max value from $0.001\rightarrow 0.0006$) reduces OFE in a Pre-LN model at 1.2B scale on Languini \cite{stanic2023languini}. Models are slightly different from the Pre-LN model in \cref{fig:kurt_base_comp_xl} as we do not upweight the input embeddings as described in \cref{app:exp}. Still, we do also observe large increases in kurtosis during training, and that a smaller LR reduces this. In this experiment, reducing the max LR to 0.0006 did not impact convergence speed.}
    \label{fig:smalllr_xl}
\end{figure}
\begin{figure}[h]
    \centering
    \includegraphics[width=0.99\linewidth]{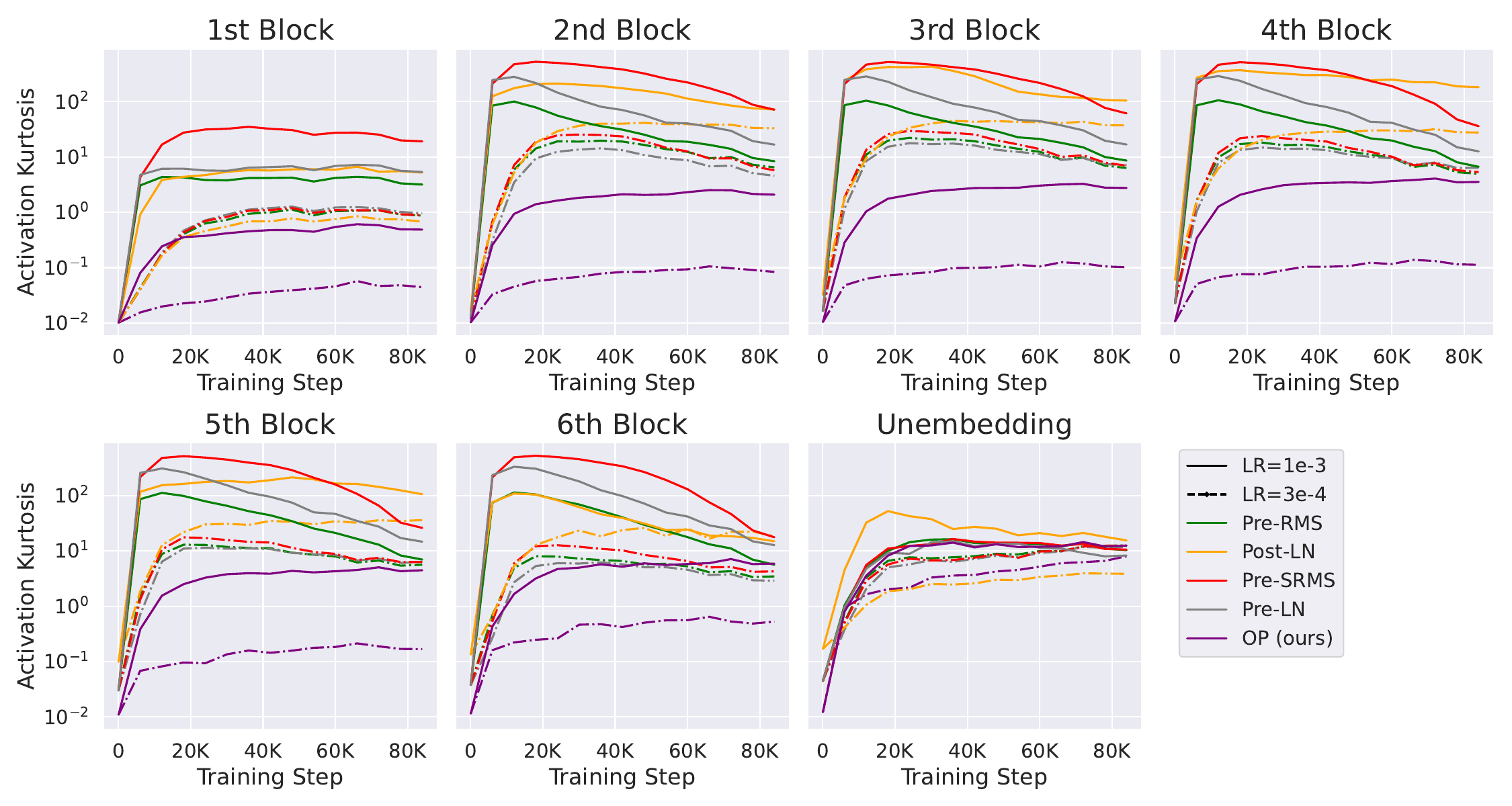}
     \caption{Smaller LRs means reduced OFs, for different Norms and Norm locations. Equivalent of \cref{fig:small_lr_single_kurt}, but with all layers. Experiment is on CodeParrot at 130M scale.}
    \label{fig:smalllr_codeparrot}
\end{figure}

\begin{figure}[h]
    \centering
    \includegraphics[width=0.5\linewidth]{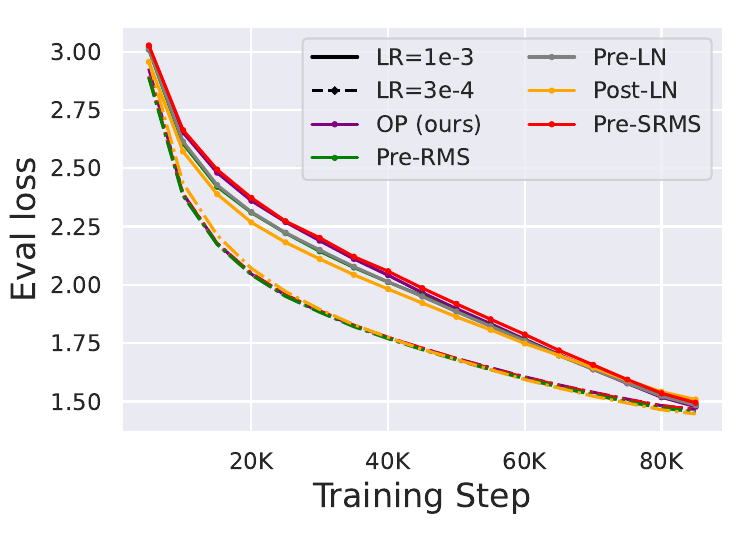}
    \caption{Convergence speed for the runs in \cref{fig:small_lr_single_kurt,fig:smalllr_codeparrot} comparing the effect of reduced LRs.}
    \label{fig:smalllr_eval_codeparrot}
\end{figure}

\begin{figure}[h]
    \centering
    \includegraphics[width=0.5\linewidth]{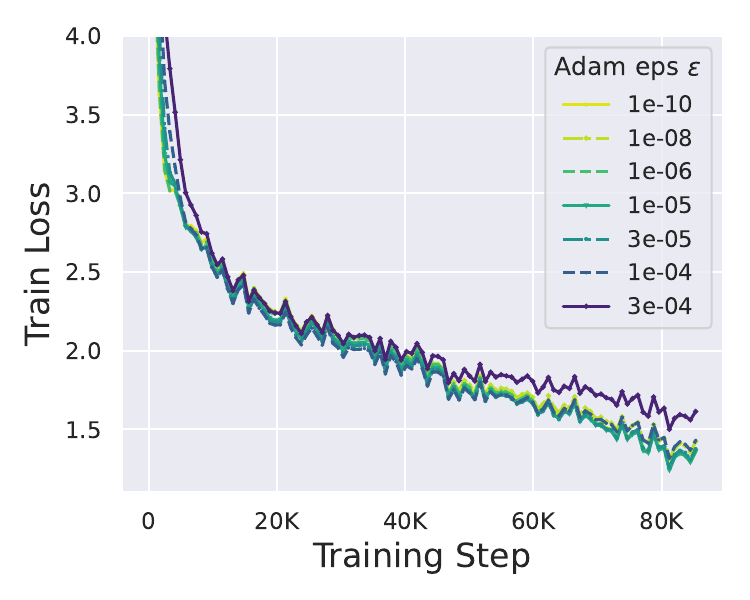}
    \caption{Train loss plot with different Adam epsilon, equivalent to \cref{fig:kurt_eps_ablation}. There is not a noticeable difference in convergence speed for $\epsilon<3e-4$ in this experiment.}
    \label{fig:train_loss_eps_ablation}
\end{figure}

\begin{figure}[h]
    \centering
    \includegraphics[width=0.99\linewidth]{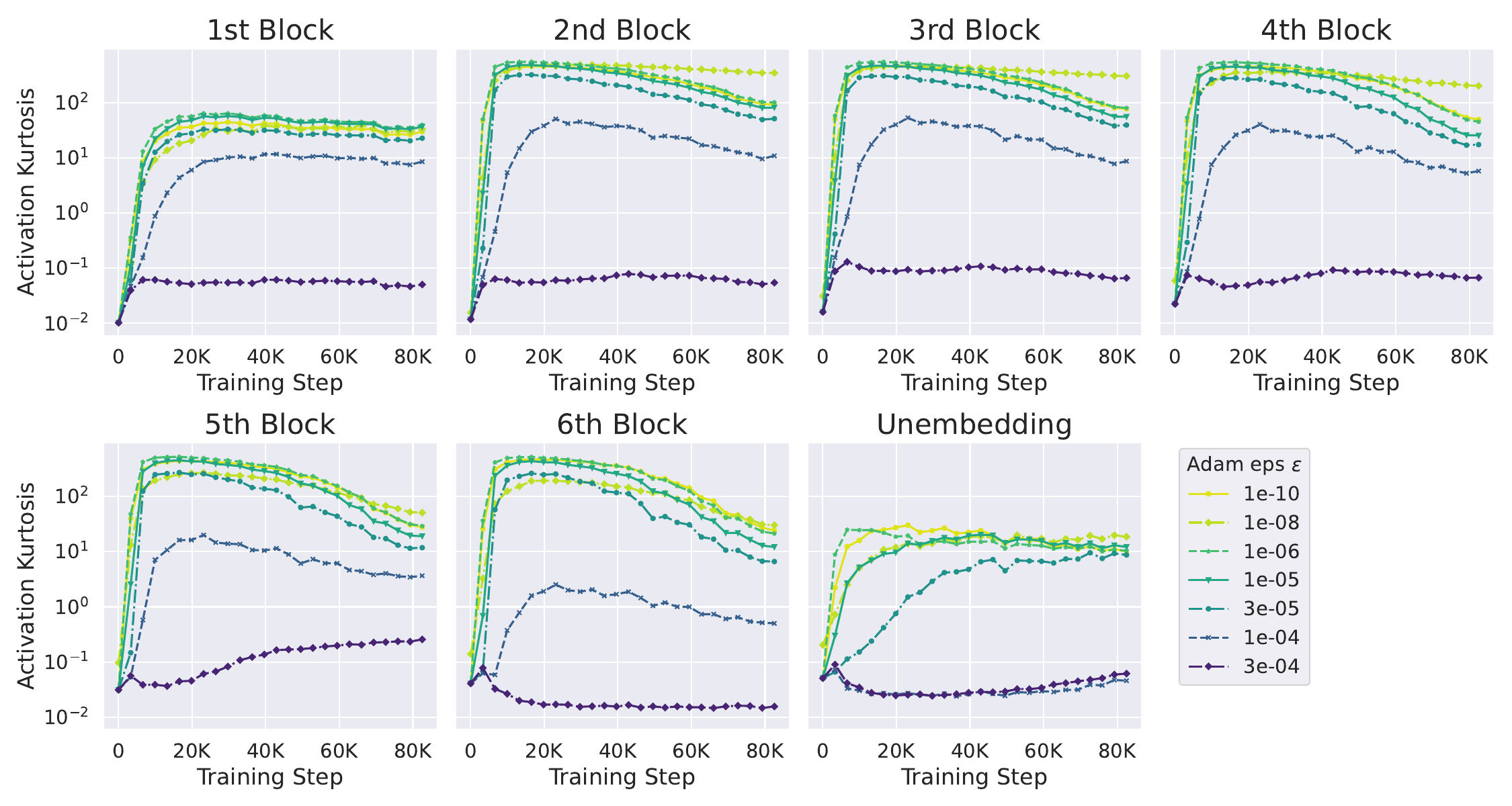}
    \caption{Kurtosis plot with different Adam epsilons on CodeParrot at 130M scale. Each curve is an average over 3 seeds. We see that increasing $\epsilon$ from $1e-6$ to $3e-4$ monotonically decreases OFE. At values of $\epsilon$ smaller than $1e-6$ there is less of a difference in OFE between different $\epsilon$ values.}
    \label{fig:kurt_eps_ablation}
\end{figure}

\begin{figure}[h]
    \centering
    \includegraphics[width=0.99\linewidth]{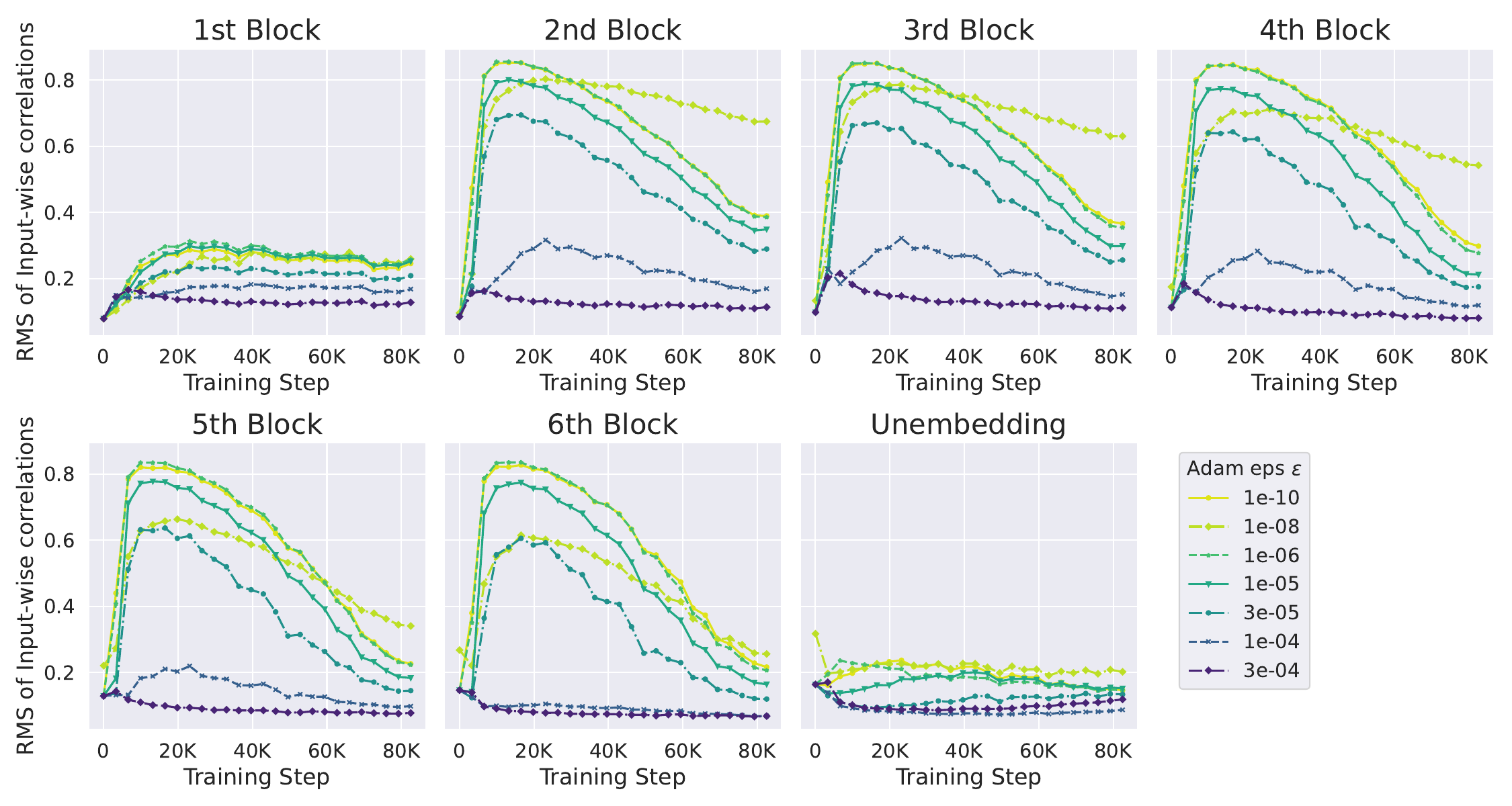}
    \caption{Signal Prop plot with different Adam epsilon. Equivalent of \cref{fig:kurt_eps_ablation}.}
    \label{fig:sig_prop_eps_ablation}
\end{figure}

\begin{figure}[h]
    \centering
    \includegraphics[width=0.99\linewidth]{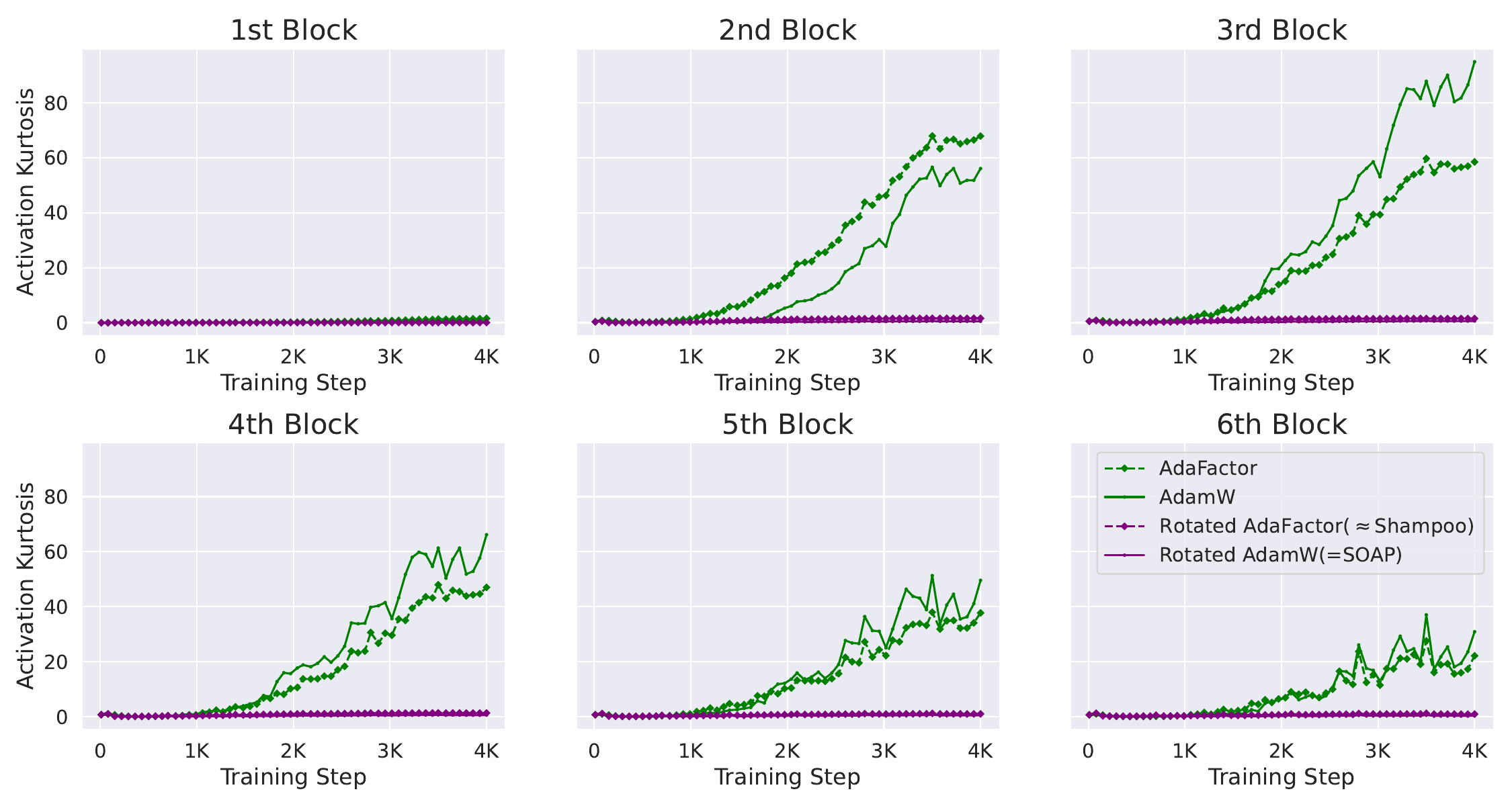}
    \caption{Kurtosis evolution in all 6 layers of a Pre-SRMSNorm transformer trained on CodeParrot with different optimisers. Equivalent to \cref{fig:kurt_precond}.}
    \label{fig:kurt_precond_all}
\end{figure}

\begin{figure}[h]
    \centering
    \includegraphics[width=0.7\linewidth]{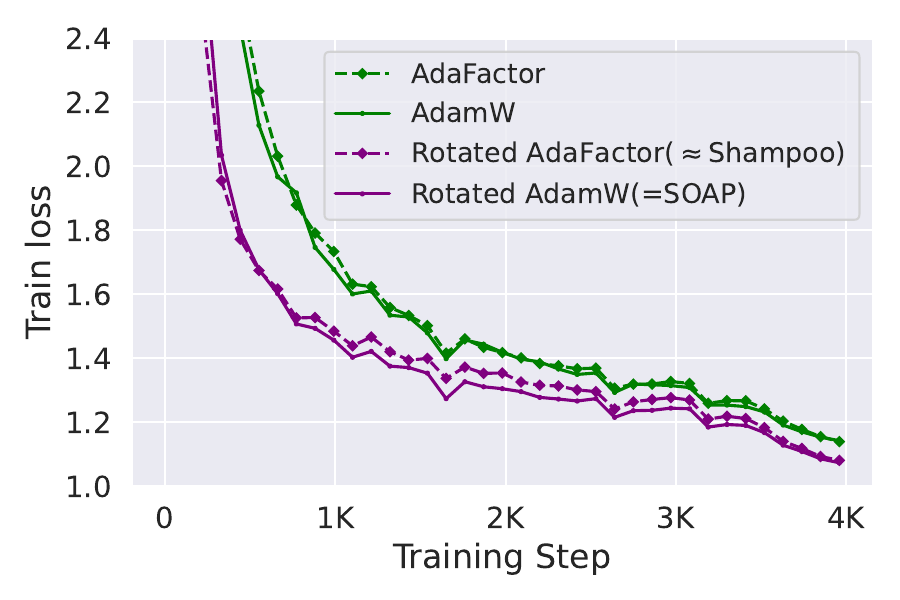}
    \caption{CodeParrot training loss curves for experiments comparing diagonal and non-diagonal preconditioners. Equivalent to the runs found in \cref{fig:kurt_precond}.}
    \label{fig:train_loss_precond}
\end{figure}

\begin{figure}[h]
    \centering
    \includegraphics[width=0.99\linewidth]{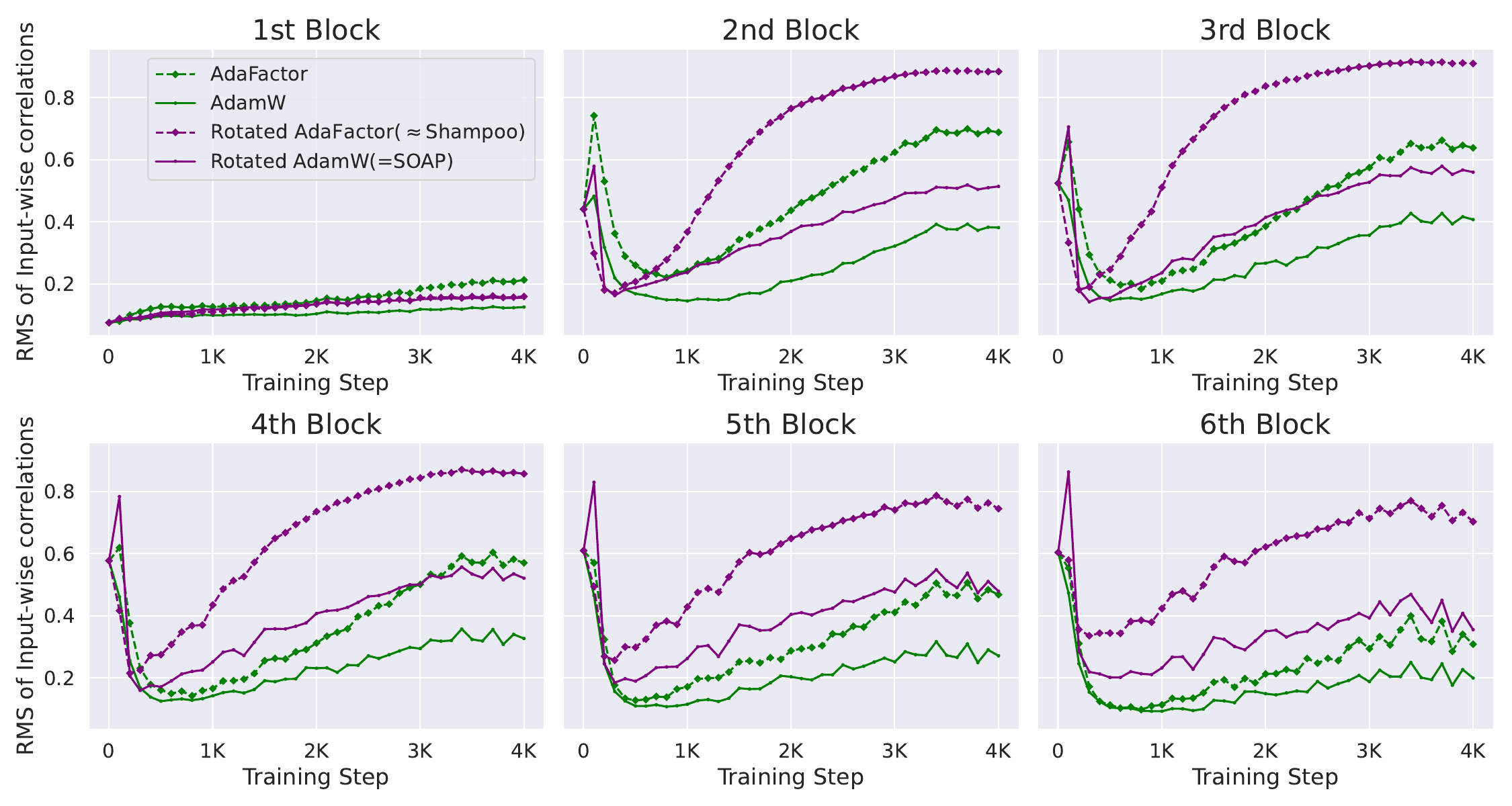}
    \caption{RMS of input-wise correlation (i.e. signal propagation, or equivalently the RHS of \cref{eq:kurt_plus_cov_eq_signal_prop}) trajectories in all 6 layers of a Pre-SRMSNorm transformer trained on CodeParrot with different optimisers. These optimisers are the diagonal Adam and AdaFactor, and their non-diagonal rotated versions (SOAP and something close to Shampoo, c.f. \cite{vyas2024soap}), like in \cref{sec:opt_hypers}. Notice that the connection between signal propagation and kurtosis we identify in \cref{subsec:sig_prop} is not apparent with non-diagonal preconditioners. In other words, the non-diagonal rotated versions of Adam/AdaFactor have higher input-wise correlations than diagonal Adam/AdaFactor, despite having better kurtosis properties (seen in \cref{fig:kurt_precond_all}). This observation is explained by \cref{fig:featcorrs_precond}, which looks at the corresponding feature-wise correlations. These training runs are equivalent to the runs found in \cref{fig:kurt_precond}.}
    \label{fig:sig_prop_precond}
\end{figure}

\begin{figure}[h]
    \centering
    \includegraphics[width=0.99\linewidth]{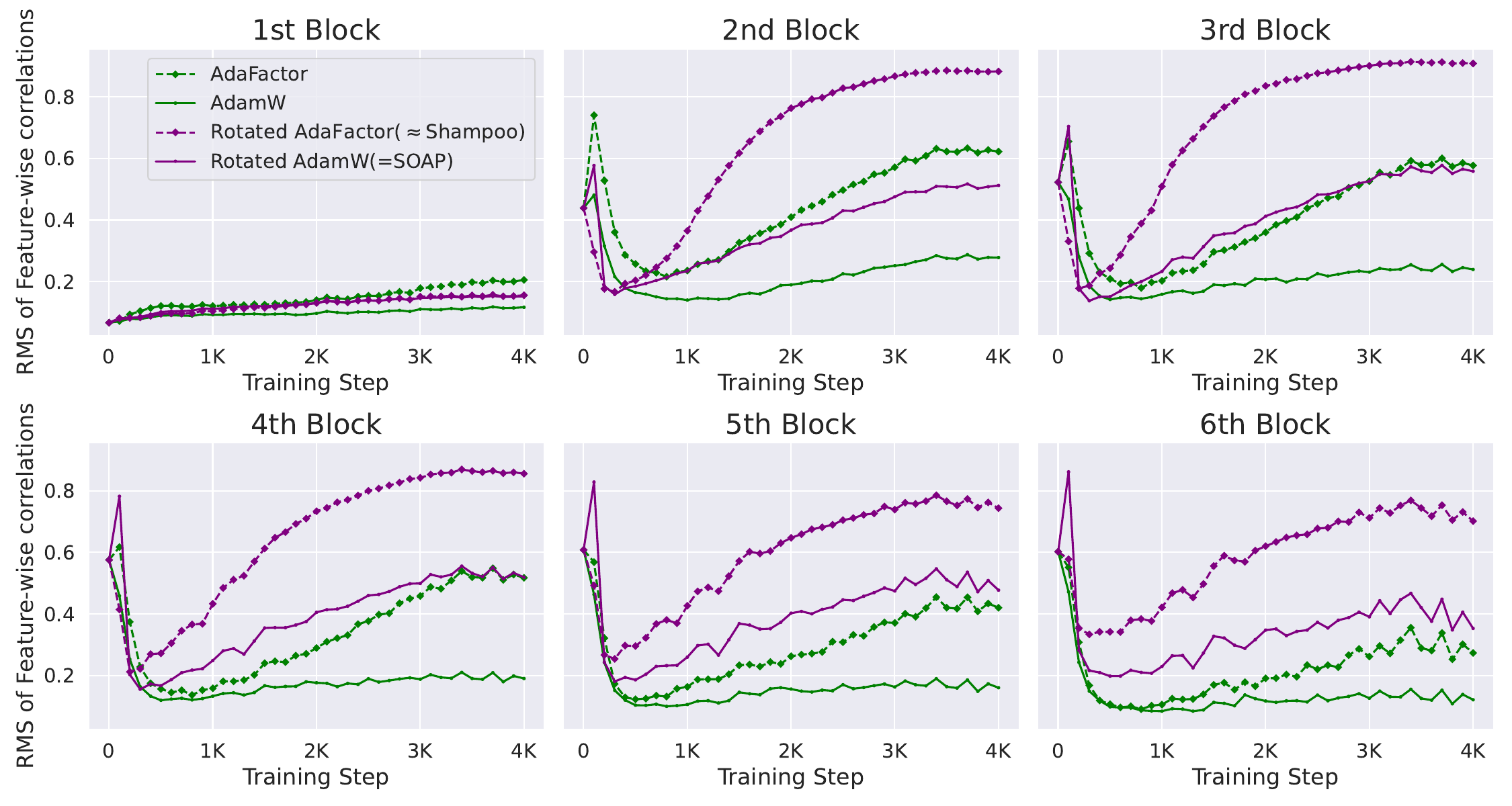}
    \caption{RMS of feature-wise correlation trajectories, equivalent to \cref{fig:sig_prop_precond}. Or more explicitly, $\sum_{i, j\leq d; i\neq j} \big(\sigmaf\big)_{i,j}^2 $ in the notation of \cref{eq:kurt_plus_cov_eq_signal_prop}. Notice that for non-diagonal preconditioners (SOAP/Shampoo), the curves in \cref{fig:featcorrs_precond} are near-identical to the signal propagation trajectories in \cref{fig:sig_prop_precond}  (and both are relatively high), meaning that their difference (the kurtosis in \cref{eq:kurt_plus_cov_eq_signal_prop}) is small. On the other hand, for diagonal preconditioners (Adam/AdaFactor), the trajectories in \cref{fig:sig_prop_precond} are noticeably higher than in \cref{fig:featcorrs_precond}, and this difference is precisely the increased kurtosis. As discussed in \cref{subsec:sig_prop}, theoretically predicting these feature learning behvaiours (dependent on the choices of architecture and optimiser) during training is beyond the scope of current tools in deep learning theory.}
    \label{fig:featcorrs_precond}
\end{figure}

\begin{figure}[h]
    \centering
    \includegraphics[width=0.93\linewidth]{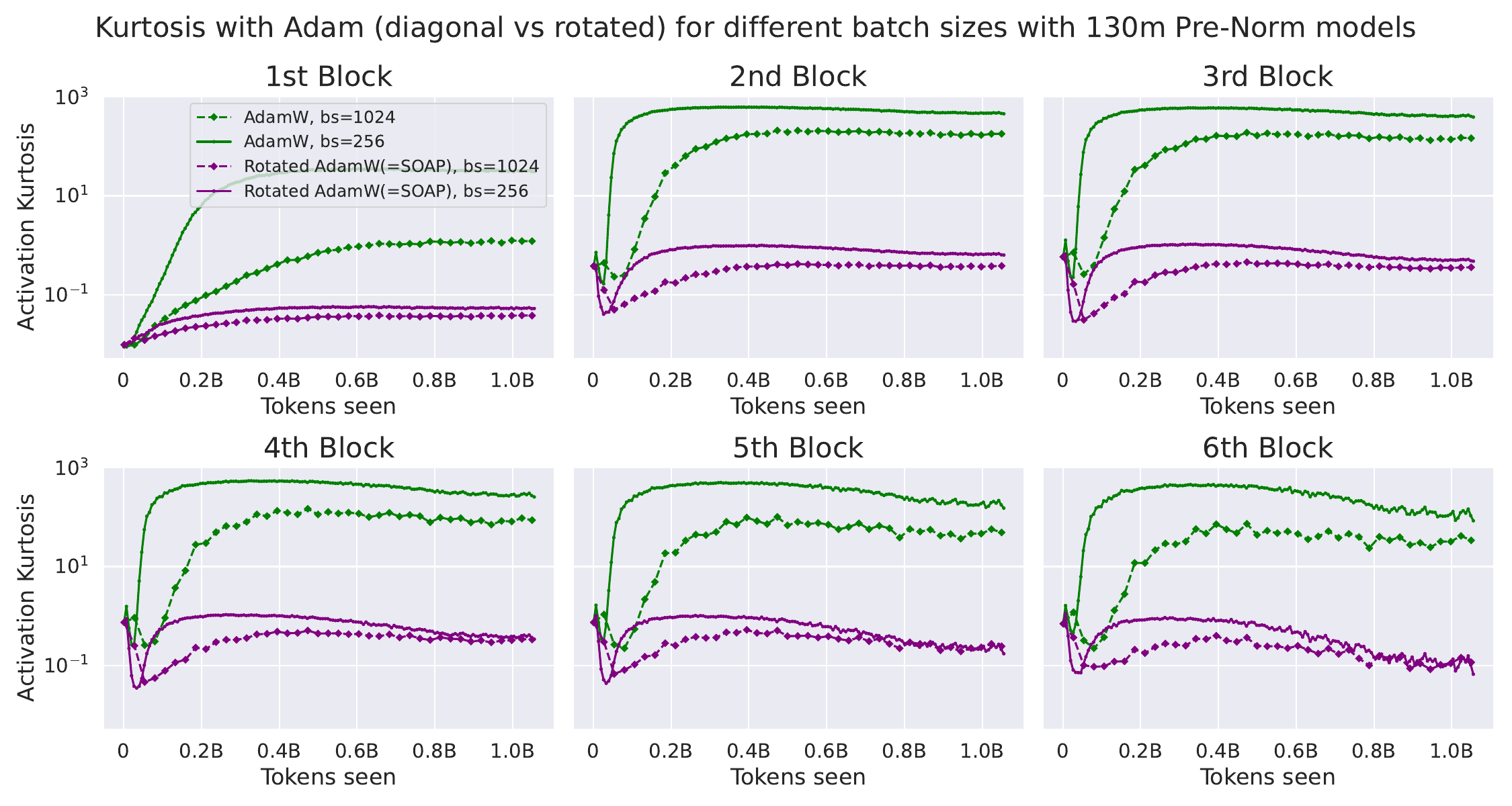}
    \vspace{-1em}
    \caption{Kurtosis trajectories for Pre-SRMSNorm layers trained with AdamW/SOAP on 1B CodeParrot tokens, but different batch sizes and hence numbers of update steps (see \cref{fig:precond_bs_lr} for cosine LR schedules). Independent of the batch size (and step count), the determinant for OFE is whether or not the optimiser is  diagonal (AdamW) or non-diagonal and rotated (SOAP). \cref{fig:precond_bs_adafactor_kurt} shows the corresponding plot for AdaFactor instead of Adam. Sequence length is 128.}
    \label{fig:precond_bs_adam_kurt}
\end{figure}

\begin{figure}[h]
    \centering
    \includegraphics[width=0.93\linewidth]{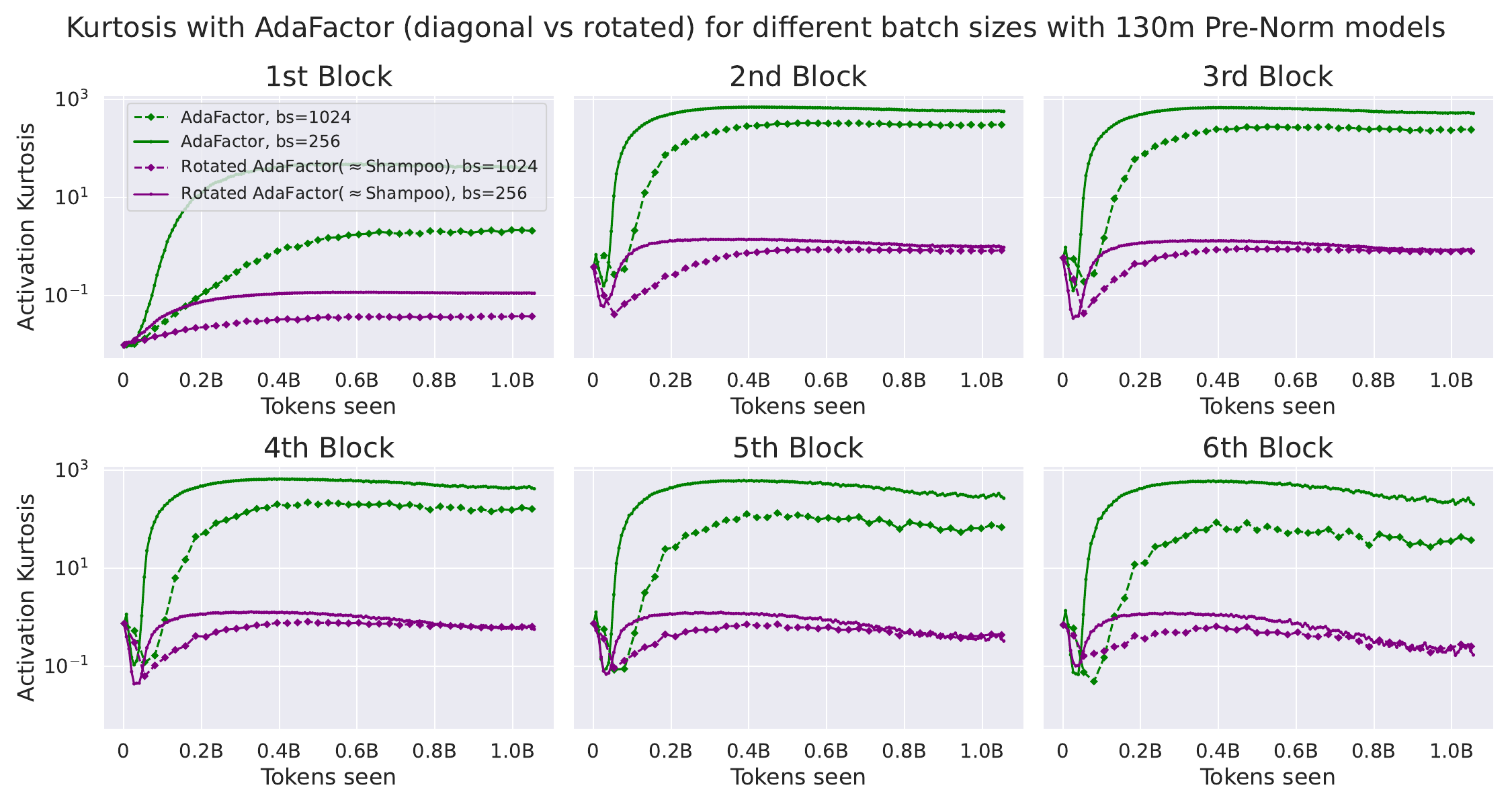}
    \vspace{-1.em}
    \caption{Equivalent of \cref{fig:precond_bs_adam_kurt} but using AdaFactor instead of Adam. \cref{fig:precond_bs_adafactor_kurt} looks qualitatively the same as \cref{fig:precond_bs_adam_kurt}, which reinforces the fact that the specific diagonal preconditioner is not important for OFE out of these two popular choices.}    \label{fig:precond_bs_adafactor_kurt}
\end{figure}

\begin{figure}[h]
    \centering
    \includegraphics[width=0.45\linewidth]{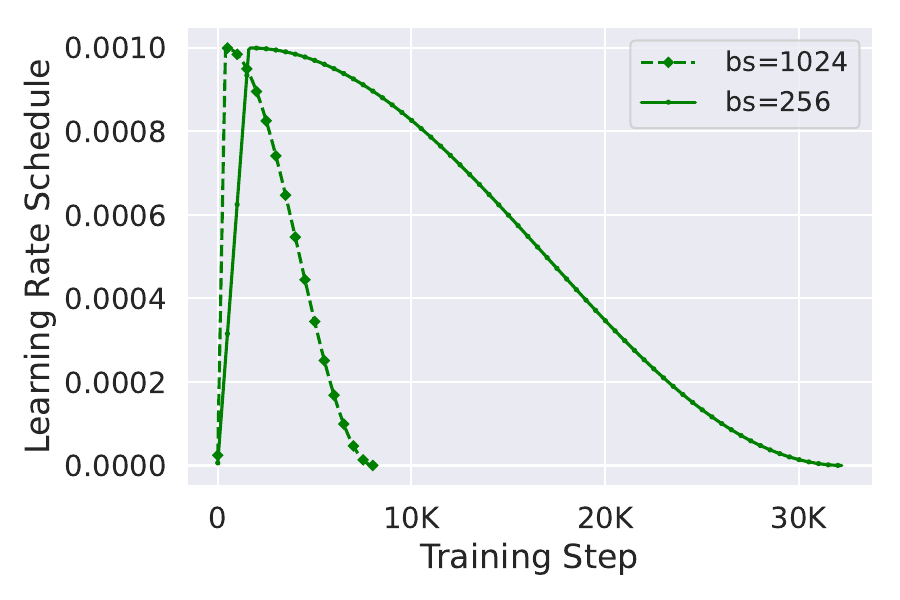}
    \vspace{-1.em}
    \caption{Cosine learning rate schedules for the runs shown in \cref{fig:precond_bs_adam_kurt,fig:precond_bs_adafactor_kurt}.}
    \label{fig:precond_bs_lr}
\end{figure}

\clearpage
\subsection{Image Classification Experiments}\label{app:images}

\subsubsection{Adam vs. SGD} In \cref{sec:opt_hypers}, we saw that the diagonality of the preconditioning in Adam and AdaFactor was important for the emergence of outlier features. To test this further, we consider the effect of replacing Adam with SGD on OFE. SGD does not precondition gradients or, equivalently, SGD preconditions gradient with the identity matrix. In comparison to optimisers like SOAP or Shampoo that diagonally precondition in a rotated parameter space, SGD can also be thought to optimise in a rotated parameter space (for every possible rotation), precisely because it preconditions with the identity matrix.

As transformers are difficult to train (fast) with SGD, we consider OFs in a much simpler architecture and task: an MLP on CIFAR-10 image classification. Like with Shampoo and SOAP (\cref{fig:kurt_precond}), in \cref{fig:kurt_cifar_all} we see that SGD is not as susceptible to OFs as Adam, even with OF-prone architecture choices, like pre-ormalisation layers. In fact, in this experiment SGD kurtosis actually \textit{decreases} during training with Pre-Norm. \cref{fig:train_acc_cifar_sgd} shows that SGD matches Adam convergence speed in this setting. The model is a 6-layer Pre-Norm residual MLP of width 1024; we remove Pre-Norms for normless models. This also highlights that OFs are not specific to the Transformer model.

\begin{figure}[h]
    \centering
    \includegraphics[width=0.99\linewidth]{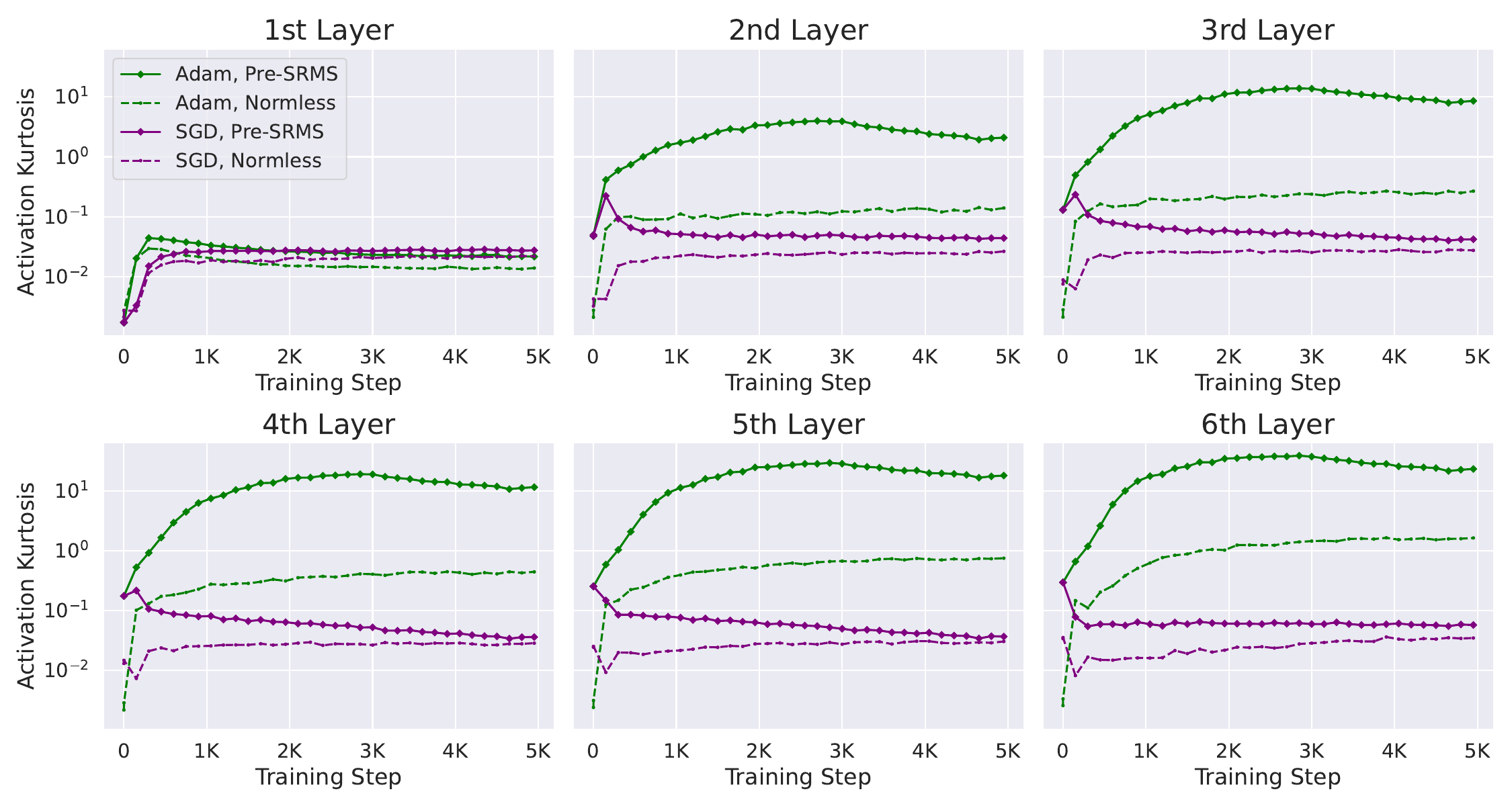}
    \caption{OFs of SGD vs Adam in an MLP on CIFAR-10. Although normalisation layers lead to higher kurtosis for a given optimiser, Adam always has higher OFs than SGD.}
    \label{fig:kurt_cifar_all}
\end{figure}
\begin{figure}[h]
    \centering
    \includegraphics[width=0.45\linewidth]{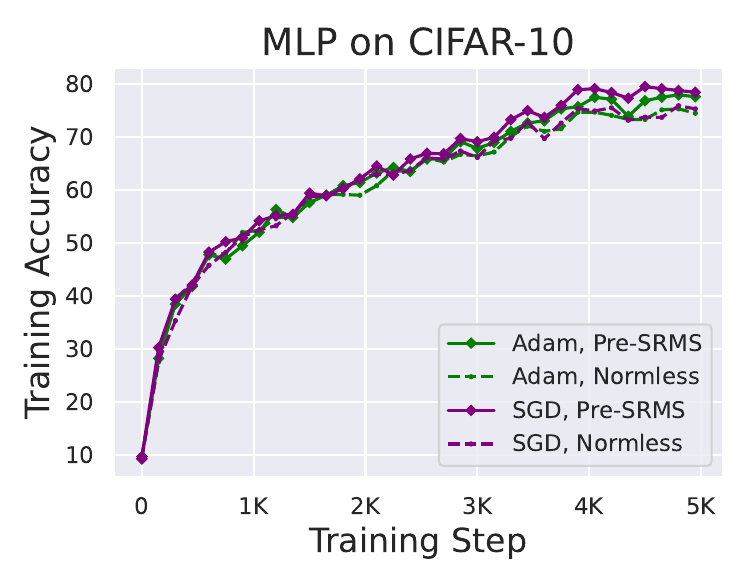}
    \caption{Train accuracy plot with SGD vs Adam of MLP on CIFAR-10, corresponding to \cref{fig:kurt_cifar_all}. Adam $\epsilon$ is the default value of $1e-8$.}
    \label{fig:train_acc_cifar_sgd}
\end{figure}

We note that the levels of kurtosis reached by AdamW in CIFAR-10 image classification MLP settings do not reach the same heights (peaking around 40 in individual layers in \cref{fig:kurt_cifar_all}) as in our transformer language modelling settings.\footnote{This difference is not necessarily explained by scale, as our MLP residual stream width of 1024 is actually wider than our 130M transformer width of 768, which is the scale that produced the much higher kurtosis values in, for example, \cref{fig:kurt_comp_diff_blocks,fig:kurt_all_layers_codeparrot}.} Having said that, we also show in this subsection that many of our findings concerning OFs (in terms of kurtosis) carry through to the image classification setting, including: the effect of Adam $\epsilon$ (\cref{fig:kurt_cifar_eps,fig:train_acc_cifar_eps}) and the correlation between signal propagation and kurtosis for Adam and SGD optimisers (\cref{fig:sigpropr_cifar_all}).

\subsubsection{Vision Transformer Experiments}
To assess whether the difference in architecture between our language modelling and image classification experiments explains this observed difference in peak kurtosis, we consider the Vision Transformer (ViT) \cite{dosovitskiy2020image}, which is largely the same Pre-LN transformer architecture as in our language modelling experiments. Instead of processing sequences of tokens like its language modelling counterpart, ViTs turn an image into a sequence of patches, and alternate self-attention and MLP sub-blocks (as in \cref{app:transformerblock}) to process patches.

We consider the ViT-B model, which has 12 blocks and width 768 giving 86M parameters, which is around the same parameter scale as our 130M CodeParrot experiments and also Pythia-160M in \cref{fig:pythia_main}.\footnote{This is actually deeper than our default 6 layer transformer in our 130M CodeParrot experiments, and shares the same width. The parameter difference with language is that a significant fraction of parameters in language modelling are in the embedding layer.} We train on ImageNet-1K using the codebase\footnote{https://github.com/facebookresearch/deit} and default hyperparameters of DeIT \cite{pmlr-v139-touvron21a}, but only train for 150 epochs (instead of 300) and do not use distillation in the interests of compute time and because in this experiment we are not interested in state of the art results but rather the OF properties of ViTs. We use AdamW optimiser with a max LR of 3.75e-4 (maximum stable LR on a logarithmically spaced grid) with a batch size of 384 (data parallel across 6 RTX2080-Tis), and warm up the LR for 5 epochs before cosine decay. 

We compare 3 different ViT transformer blocks: Pre-LN, Pre-LN with LayerScale \cite{touvron2021going} (LayerScale is a variant of SkipInit \cite{sam_soham_batch} that downweights the residual branches with a vector of trainable gains), and our OP block (using LayerScale to downweight residual branches). We use the default initialisation in the DeIT codebase for residual branch downweighting factors in LayerScale, of $1e-4$. QK-Norm is implemented with LN for the OP block.

In \cref{fig:vit_kurt}, we see that Pre-LN ViT-Bs do still suffer from higher peak kurtosis values (around 20 averaged across layers) compared to the OP block (around 4.5 averaged across layers), but the Pre-LN kurtosis decreases sharply as training progresses, unlike in language modelling settings, and actually ends up being lower than the OP block at the end of training. In addition, the kurtosis values on this ViT image classification task are again far lower than transformer language modelling counterparts at similar parameter scales (e.g. \cref{fig:pythia_main} or \cref{fig:kurt_all_layers_codeparrot}). Adding LayerScale \cite{touvron2021going} reduces Pre-LN kurtosis (to even below that of the OP block in terms of peak value), as expected from our findings in \cref{subsec:sig_prop,app:signal_prop}. In \cref{fig:vit_acc}, we see that all three blocks have similar test accuracy performance, with the OP block having a slight advantage of 0.2\%.  

From these preliminary experiments with ViTs we conclude that, despite similarities, there are qualitative differences between OFs (measured via kurtosis) in image classification and language modelling tasks that are not explained through the choice of architecture (or indeed the choice of AdamW optimiser). We leave a more in-depth study to future work. It remains to be seen if OFs will be a barrier to quantising ViTs at scale, as it has been the case for LLMs, though our findings showing reduced kurtosis (both after and peak during training) suggest this may not be the case. We note \citet{darcet2024vision} identify artifacts in the attention maps of ViTs, which may potentially be related to OFs.

\begin{figure}[h]
    \centering
    \includegraphics[width=0.9\linewidth]{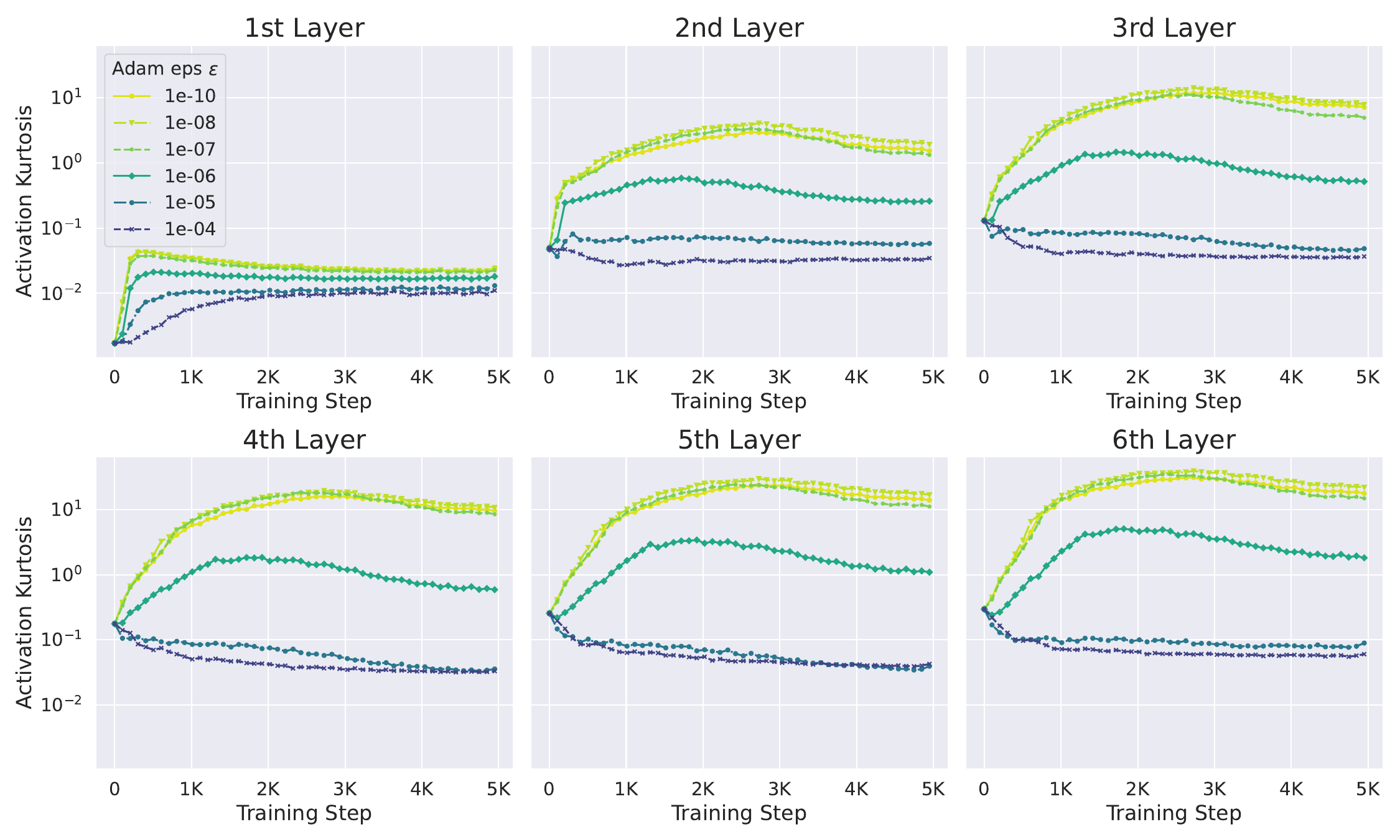}
    \caption{Kurtosis plot with different Adam  $\epsilon$ with an MLP on CIFAR-10. The model uses Pre-Norm structure with SRMSNorm normalisation. Like in \cref{fig:kurt_eps_ablation}, we see that larger $\epsilon$ generally leads to smaller OFs.}
    \label{fig:kurt_cifar_eps}
\end{figure}
\begin{figure}[h]
    \centering
    \includegraphics[width=0.45\linewidth]{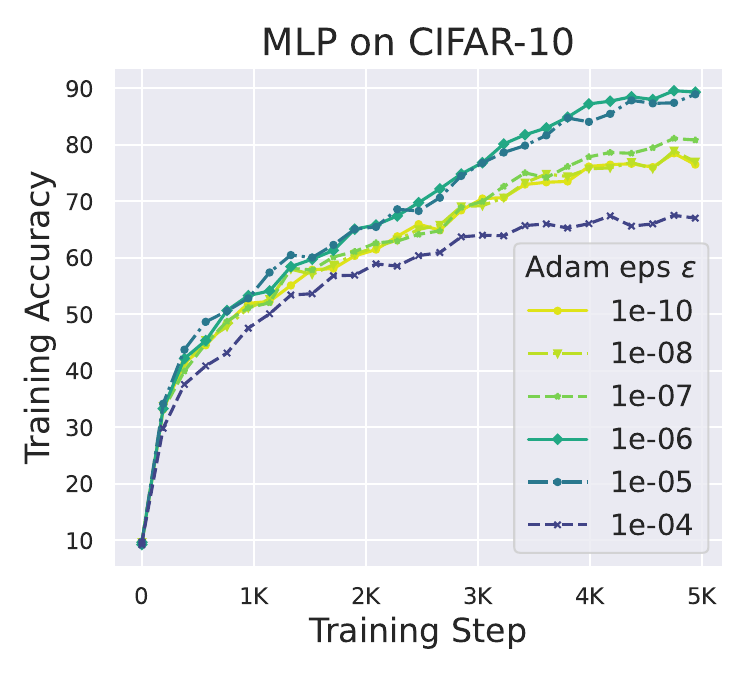}
    \caption{Train accuracy plot with different Adam $\epsilon$ of MLP on CIFAR-10, equivalent to \cref{fig:kurt_cifar_eps}. In this experiment, milder values of $\epsilon\in\{1e-5, 1e-6\}$ converge fastest.}
    \label{fig:train_acc_cifar_eps}
\end{figure}

\begin{figure}[h]
    \centering
    \includegraphics[width=0.99\linewidth]{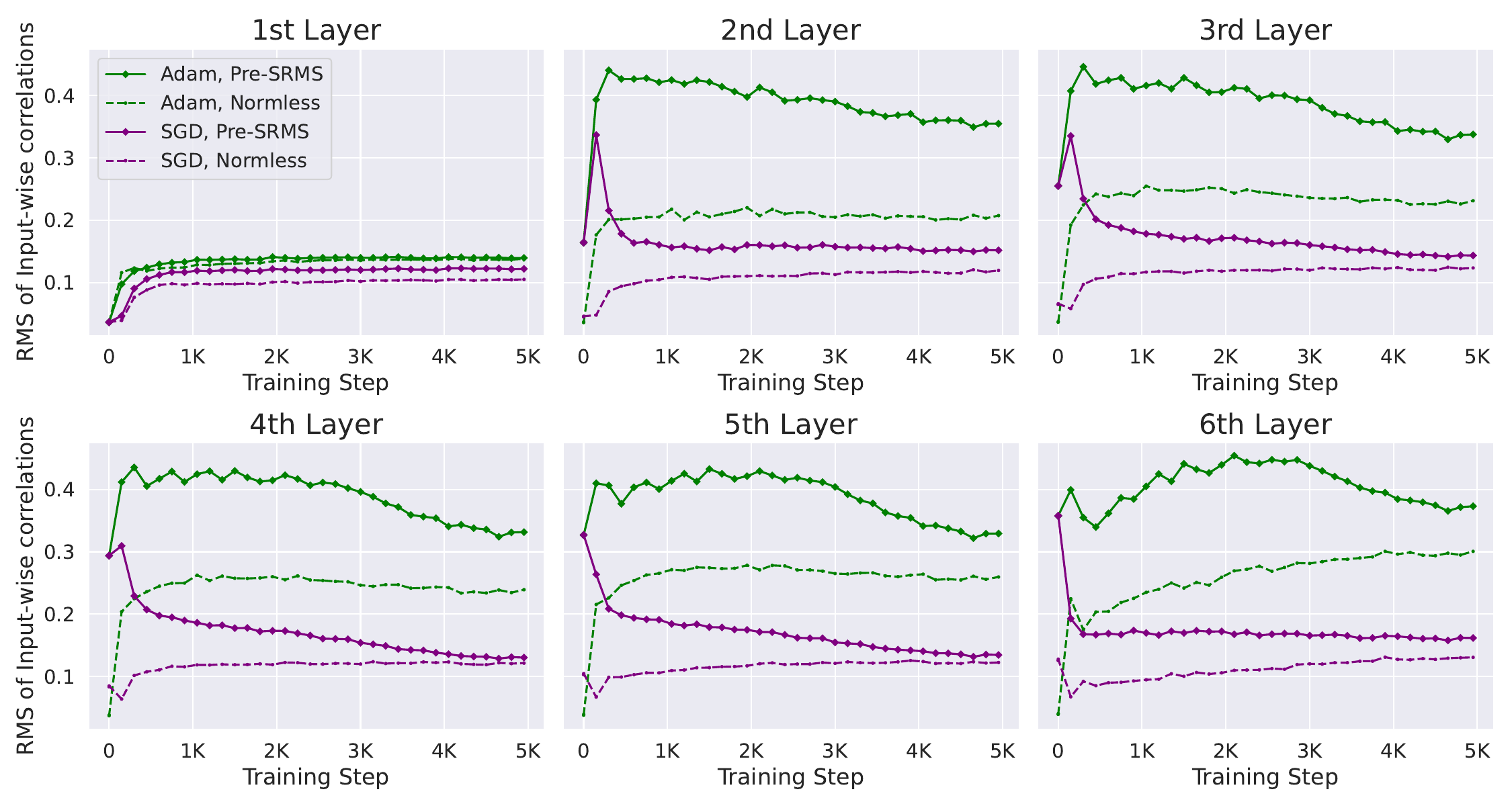}
    \caption{Effect of SGD vs Adam on Signal Prop, for models plotted in \cref{fig:kurt_cifar_all}.}
    \label{fig:sigpropr_cifar_all}
\end{figure}

\begin{figure}[h]
    \centering
    \includegraphics[width=0.7\linewidth]{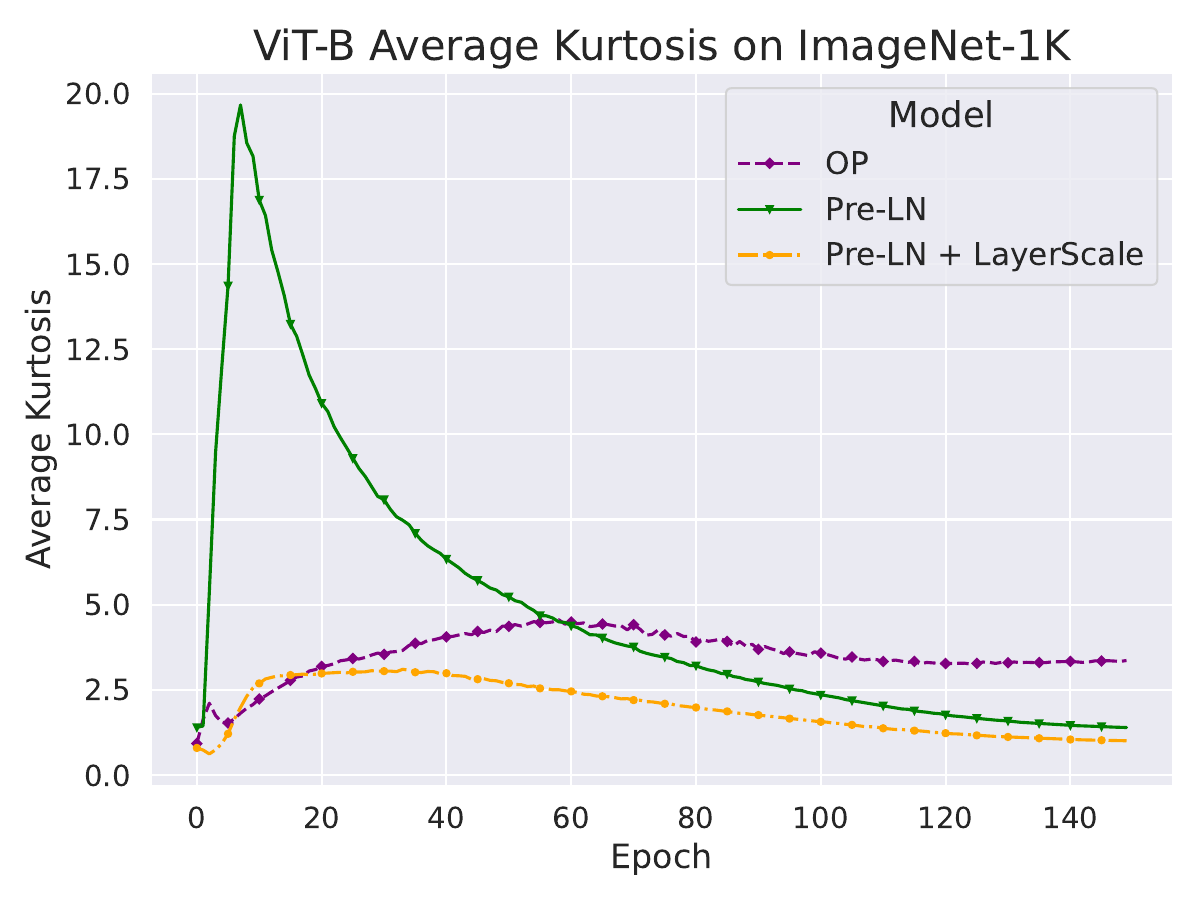}
    \caption{Average Kurtosis trajectories of ViTs trained on ImageNet-1K. Y-axis is average kurtosis across the 12 residual stream layers.}
    \label{fig:vit_kurt}
\end{figure}

\begin{figure}[h]
    \centering
    \includegraphics[width=0.7\linewidth]{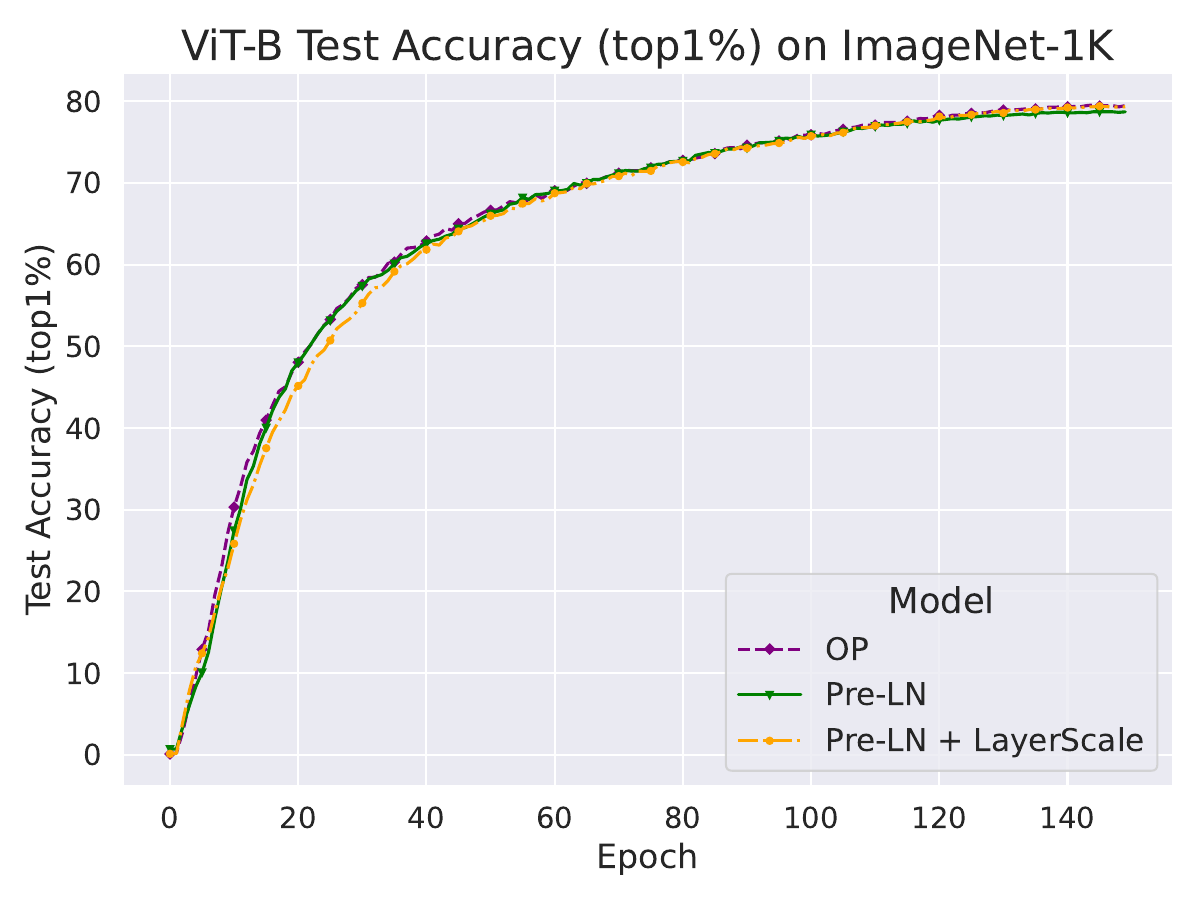}
    \caption{Test accuracies of ViTs trained on ImageNet-1K. The final test accuracies are 79.5\% for OP, 79.3\% for Pre-LN + LayerScale, and 78.7\% for Pre-LN.}
    \label{fig:vit_acc}
\end{figure}

\clearpage
\subsection{Ablating the components of the OP block}\label{app:ablation}
In \cref{tab:op_ppl_ablation,tab:ablate_norm,fig:entropy_ablation} we ablate the components of our OP block. \cref{tab:op_ppl_ablation} assesses the impact of not having an EntReg mechanism on training stability and convergence speed on the Languini dataset \cite{stanic2023languini} at 320M scale. \cref{fig:entropy_ablation} confirms the loss of EntReg causes entropy collapse on CodeParrot at 130M scale, which is shown to lead to unstable training in \cref{fig:train_loss_entropy_ablation}. In these experiments, we also try tanh thresholding as an alternative EntReg mechanism to QK-Norm. \cref{tab:ablate_norm} goes from Pre-LN to OP one step at a time, assessing the impact of different norms and downweighted residuals, in terms of OFE.

\begin{table}[h]
\centering
\caption{Ablating the convergence and training benefits of the OP block. The asterisk * denotes that training failed without Flash Attention \cite{dao2022flashattention}, which centres pre-softmax logits based on their max value and is therefore more stable than a naive implementation of softmax attention. This highlights the training instability of not having some entropy regulating (EntReg) mechanism, where smaller LRs are required for stability. At a smaller (but stable) LR, the naive unnormalised model without EntReg converges much slower (17.4 vs 16.2 ppl) in this example. Even with larger LR, the EntReg mechanism in the OP block improves convergence (16.6 vs 16.2 ppl for QK-RMSNorm) compared to the naive unnormalised model. Tanh thresholding (from Grok-1) also works as an example of an alternative EntReg mechanism to QK-Norm. Because Pre-Norms appear before Query/Key weights, they already provide an implicit EntReg mechanism. As a result, adding EntReg to Pre-Norm models results in only minor changes to convergence speed in this experiment (though ViT-22B shows in other settings Pre-Norm alone is not enough \cite{dehghani2023scaling}). Models are 320M parameters, trained with AdamW also for 3.3B tokens on Languini \cite{stanic2023languini} as in \cref{tab:speed_comp}.}
\label{tab:op_ppl_ablation}
\begin{tabular}{cccccc}
\toprule
Model                            & MLP/Attn Pre-Norm & EntReg  & Scaled Residual & LR & Eval PPL \\ \midrule
Pre-LN        & LN                & None                        & Implicit               & 1e-3   & 16.2     \\ 
Pre-RMSNorm                             & RMS & None                 & Implicit               & 1e-3   & 16.3     \\ 
Pre-LN+QK-Norm & LN      & QK-RMS                                & Implicit               & 1e-3   & 16.0     \\
Pre-LN+Tanh                         & LN & Tanh                  & Implicit               & 1e-3   & 16.2     \\ \midrule
Naive unnormalised  & None & None                                      & Yes                    & 3e-4   & 17.4     \\
Naive unnormalised                      & None & None                 & Yes                    & 1e-3   & 16.6*     \\ \midrule
OP (QK-Norm)  & None & QK-RMS                                    & Yes                    & 1e-3   & 16.2     \\
OP (Tanh)                           & None & Tanh                & Yes                    & 1e-3   & 16.4     \\ \bottomrule

\end{tabular}
\end{table}
\begin{figure}[h]
    \centering
    \includegraphics[width=0.99\linewidth]{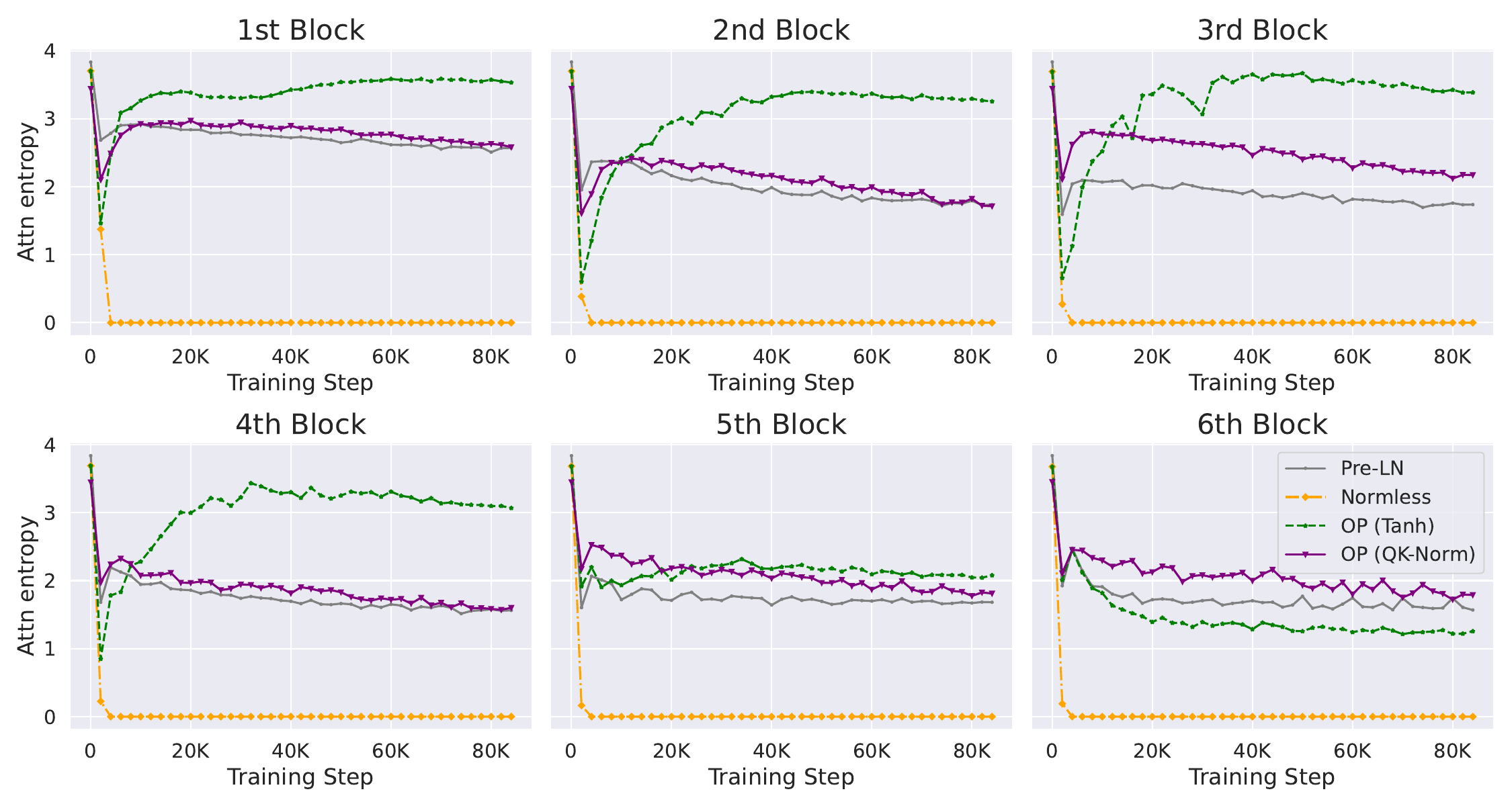}
    \caption{No EntReg leads to entropy collapse without Pre-Norms, which means training fails (as seen in \cref{fig:train_loss_entropy_ablation}).}
    \label{fig:entropy_ablation}
\end{figure}

\begin{figure}[h]
    \centering
    \includegraphics[width=0.6\linewidth]{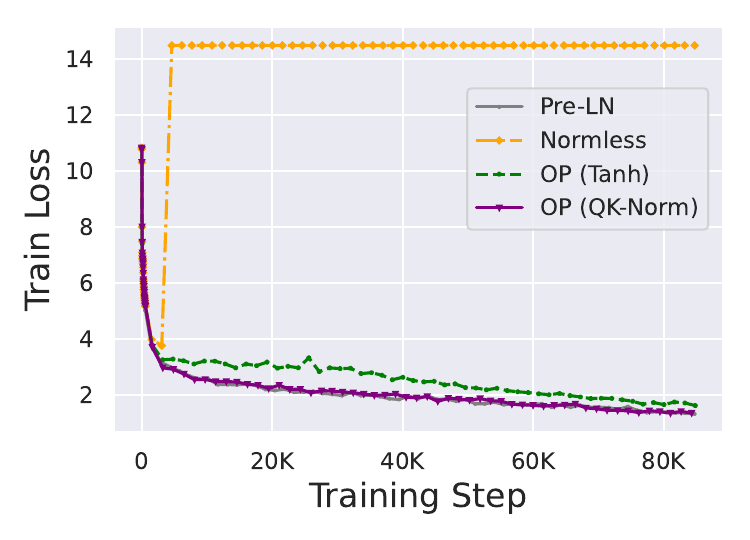}
    \caption{Entropy collapse leads to failed training.   Experiment is at 130M scale on CodeParrot. OP with tanh does not fail but does converge slower in this setting; compared to \cref{tab:op_ppl_ablation}, we use learnt positional encodings in the input embedding layer, not RoPE, which may account for this difference. We tuned a few values of the $max\_attn\_val$ hyperparameter with tanh thresholding: $f(x) = max\_attn\_val \cdot \text{tanh}(x/max\_attn\_val)$, which is set by default to 30 in \href{https://github.com/xai-org/grok-1/blob/be76c959faa3ee0a6b5fa6770b793ab6e7c9abab/model.py\#L865}{Grok-1}, but they did not close the convergence speed loss.}
    \label{fig:train_loss_entropy_ablation}
\end{figure}
\begin{figure}[h]
    \centering
    \includegraphics[width=0.99\linewidth]{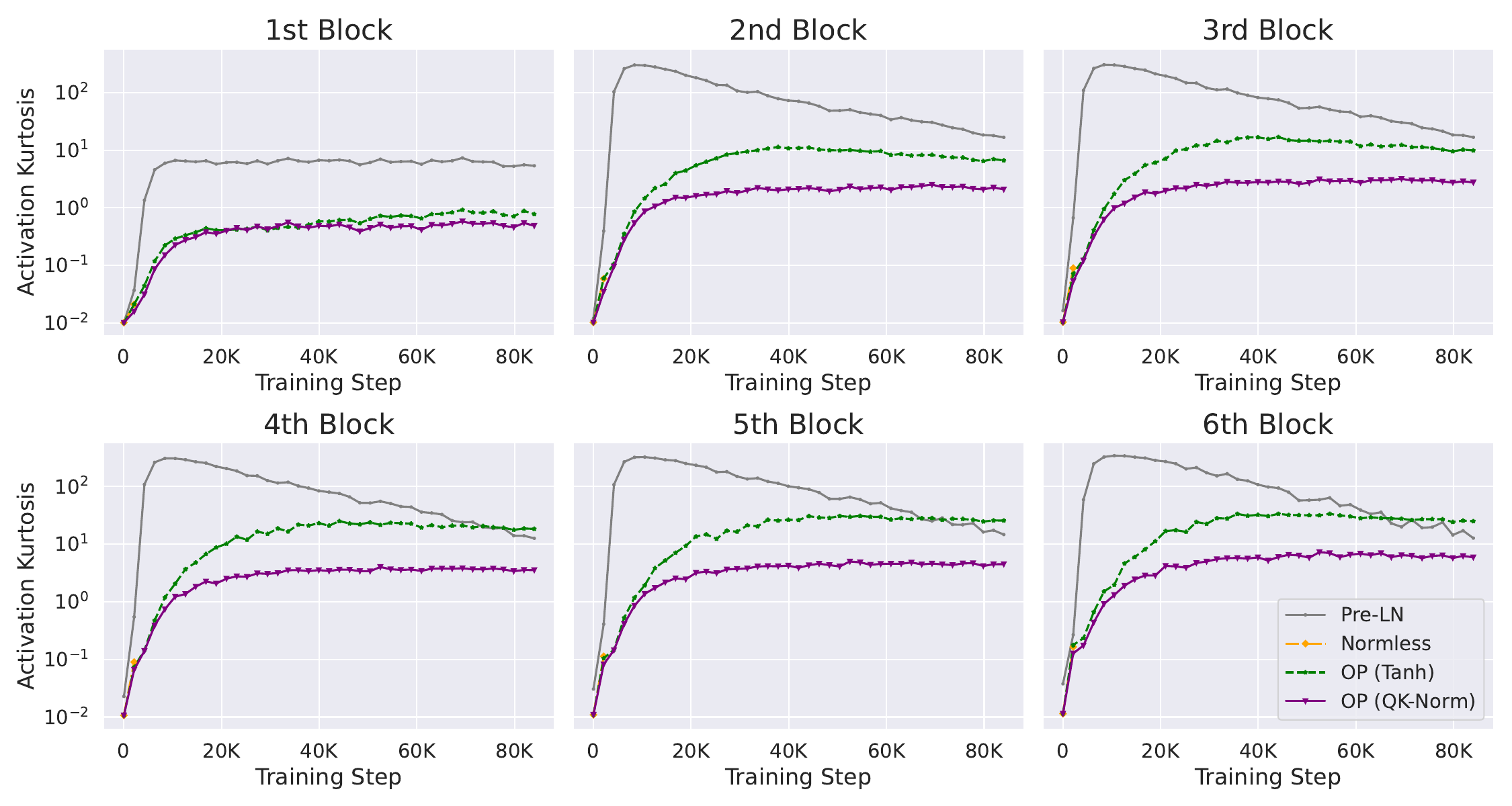}
    \caption{OP with Tanh still has reduced peak OFs compared to Pre-LN. This plot corresponds to the models shown in \cref{fig:entropy_ablation}.}
    \label{fig:kurt_entropy_ablation}
\end{figure}
\begin{figure}[h]
    \centering
    \includegraphics[width=0.99\linewidth]{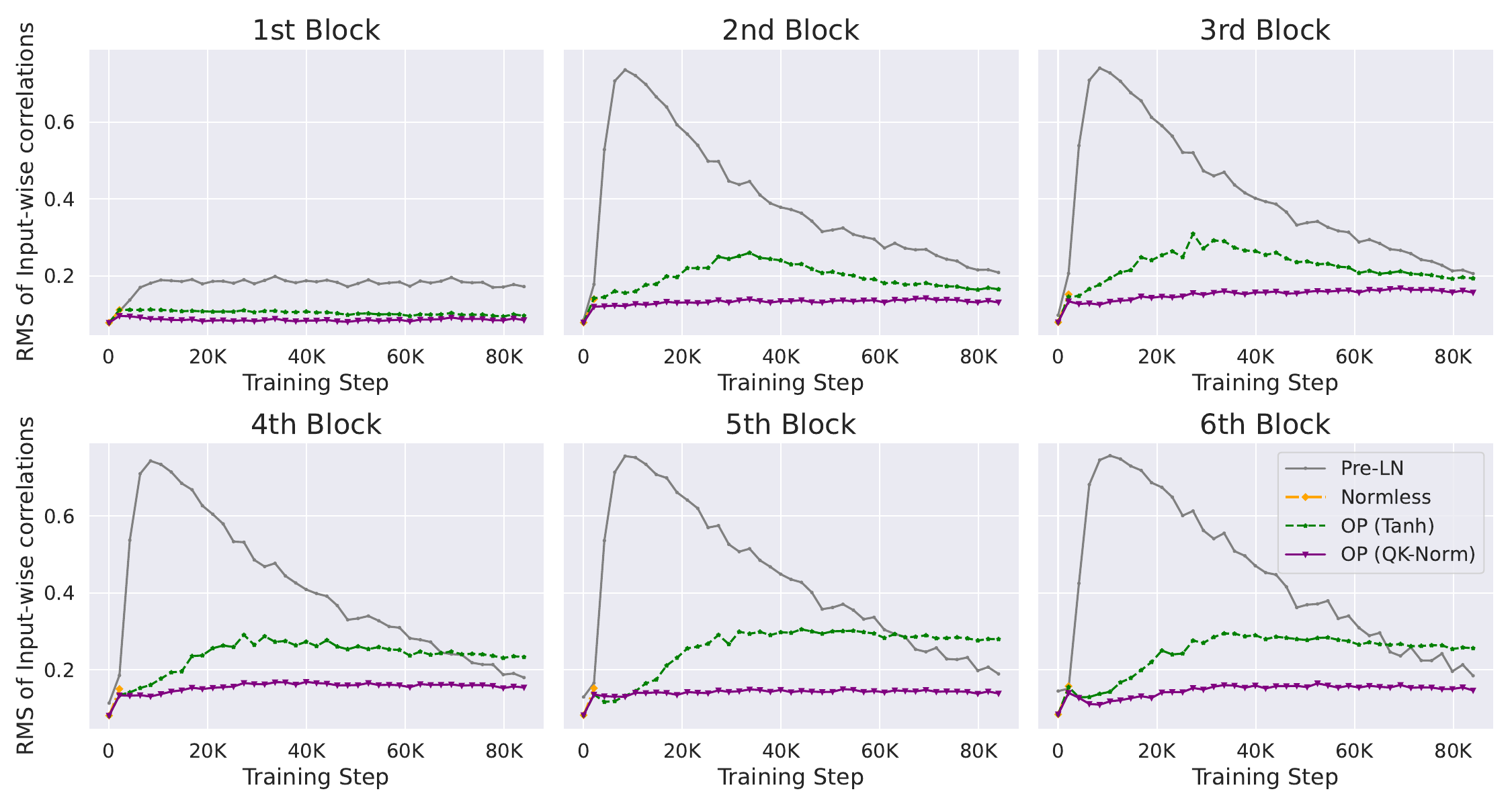}
    \caption{Signal Prop plot with OP Tanh.  This plot corresponds to the models shown in \cref{fig:train_loss_entropy_ablation}.}
    \label{fig:sig_prop_entropy_ablation}
\end{figure}

\begin{table}[h]
\centering
\caption{Going from Pre-Norm to OP step by step. We remove or add Norms one by one, with different Norm locations depicted in \cref{fig:norm_blocks}. All models trained well (at similar speeds), as they all have some form of entropy regulation (either explicit or implicit) and downweighted residuals. We present the peak Kurtosis (\cref{eq:kurt}), Signal Propagation (RMS of input-wise correlations), and activation RMS ($\lVert\rmX \rVert_F$) over the training run, with mean and standard deviation over three seeds. We present results where activations $\rmX$ are the input to the second transformer block. We see that that preventing attention entropy collapse through QK-Norm helps reduce OFs (which we see coincides with improved signal propagation as in \cref{subsec:sig_prop}). On the other hand, peak activation RMS does not correlate well as a metric with peak kurtosis, across the different models. In addition, the 2 best models in terms of OFs (our OP and also the third last row, which has no Pre-V or Pre-MLP Norms) are 1-homogeneous (at least at initialisation), which implies that the fact that Pre-V or Pre-MLP Norms make the residual stream scale independent is detrimental for OFE. This is corroborated by \cref{fig:3_best_norm_ablation}, which plots the trajectories for the three models (1. Post-QK+Pre-V, 2. QK Norms only and 3. OP) that achieved peak kurtosis lower than 10. \cref{fig:3_best_norm_ablation} shows that the non-homogeneity (due to a Pre-V Norm) leads to a large initial increase in kurtosis and signal propagation in this setting, like we consistently see with Pre-Norm blocks e.g. \cref{fig:preln_sigprop}. Models are 130M scale trained with AdamW on CodeParrot.}\label{tab:ablate_norm}
\resizebox{\textwidth}{!}{\begin{tabular}{cccccccrrr}
\toprule
Model&Norm&&&&Scaled Resid& Homog.?&Act RMS &Signal Prop& Kurtosis\\\cmidrule(r){2-5}
 &
  Post-QK &
  Pre-QK&
  Pre-V&
  Pre-MLP&
  & &
   & \\ \hline
Pre-RMS           & None & RMS  & RMS  & RMS  & Implicit & No  &$5.45_{\pm0.13}$& $0.72_{\pm0.03}$ & $131.8_{\pm21.2}$ \\ 
Scaled Resids         & None & RMS  & RMS  & RMS  & Yes      & No  &$3.97_{\pm0.09}$& $0.47_{\pm0.04}$ & $46.4_{\pm14.0}$  \\ 
All Norms &
  RMS &
  RMS &
  RMS &
  RMS &
  Yes &
  No &$3.92_{\pm0.07}$&
  $0.24_{\pm0.05}$ &
  $12.7_{\pm10.2}$ \\
 Attn Norms only & RMS  & RMS  & RMS  & None & Yes      & No  &$4.38_{\pm0.07}$& $0.29_{\pm0.04}$ & $11.8_{\pm8.03}$   \\ 
 Post-QK+Pre-V & RMS  & None  & RMS  & None & Yes      & No  &$4.40_{\pm0.06}$& $0.27_{\pm0.01}$ & $6.4_{\pm1.32}$   \\ 
QK Norms only      & RMS  & RMS  & None & None & Yes      & Yes &$4.32_{\pm0.06}$& $0.15_{\pm0.01}$ & $2.5_{\pm0.93}$  \\ 
Pre-QK only       & None & RMS  & None & None & Yes      & Yes &$4.38_{\pm0.01}$& $0.37_{\pm0.05}$ & $64.0_{\pm49.5}$  \\ 

OP (ours)     & RMS  & None & None & None & Yes      & Yes &$4.46_{\pm0.09}$& $0.17_{\pm0.01}$ & $4.3_{\pm1.49}$  \\ \bottomrule
\end{tabular}}
\end{table}

\begin{figure}[h]
\minipage{0.32\textwidth}
  \includegraphics[width=\linewidth]{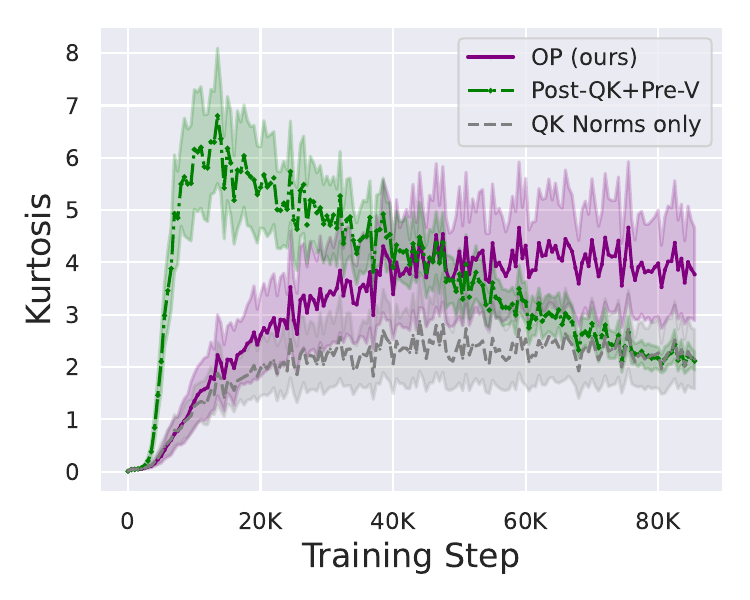}
\endminipage\hfill
\minipage{0.32\textwidth}
  \includegraphics[width=\linewidth]{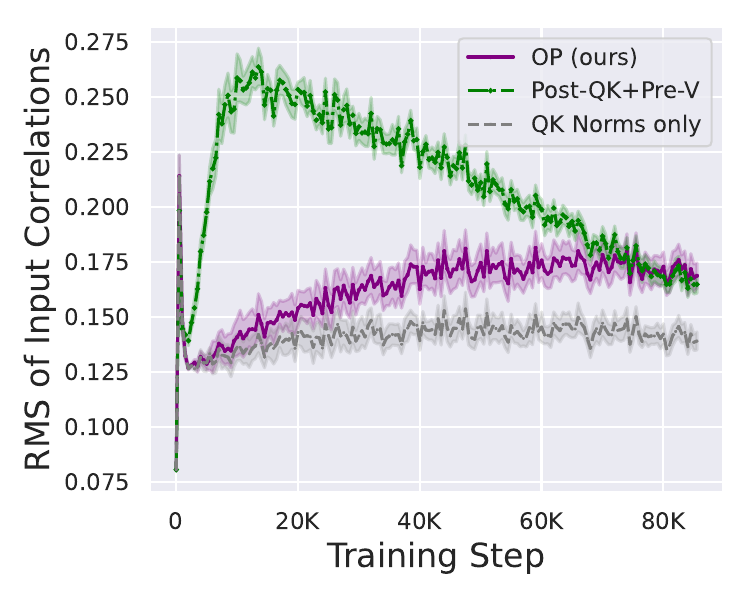}
\endminipage\hfill
\minipage{0.32\textwidth}%
  \includegraphics[width=\linewidth]{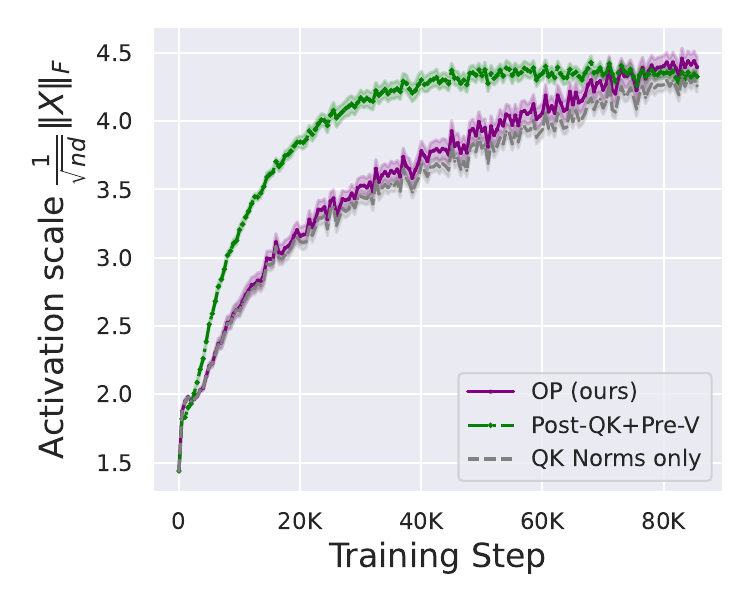}
\endminipage
\caption{Training trajectories of kurtosis, signal propagation and activation scales for the three best configurations in \cref{tab:ablate_norm}. The setting with Pre-V Norm (which is not 1-homogeneous) sees a large initial increase in all metrics, with kurtosis and input correlations peaking within 10K steps before reducing during training.}\label{fig:3_best_norm_ablation}
\end{figure}

\begin{figure}[h]
    \centering
    \includegraphics[width=0.35\linewidth]{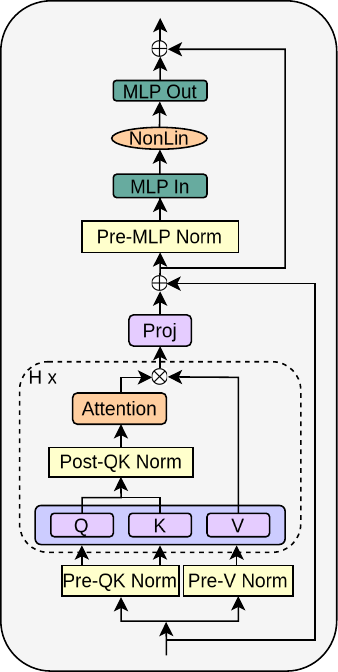}
    \caption{A transformer block with many different Norm layers depicted, to help parse the ablations we consider in \cref{tab:ablate_norm}. Note we break down the standard attention Pre-Norm into Pre-QK Norm and Pre-V Norm because removal of Pre-V Norm makes the attention sub-block homogeneous (i.e. $f(x)$ is homogeneous if $f(kx) = kf(x)$ for some scalar $k>0$), hence acts differently to Pre-QK Norm, which acts as an implicit regulator for attention entropy.}
    \label{fig:norm_blocks}
\end{figure}

\clearpage
\section{Orders of Activation Updates for Kurtosis}\label{app:higher_order_updates}
 To better appreciate the effect of different optimiser hyperparameters on OFs, we now consider how the updates that arise during training to a representation matrix $\rmX\in\mathbb{R}^{n\times d}$ can lead to increasing kurtosis (and OFs). In general, a training step (e.g. with a gradient/Adam update on trainable parameters earlier in the forward pass than $\rmX$) will lead to an update $\rmX\leftarrow\rmX + \DeltaX$.

Recall that $\kurt{\rmX}$ is an defined through comparing the fourth $m_4(\rmX)$ and second $m_2(\rmX)$ moments of neuron RMS $\sqrt{\frac{1}{n}\sum_{\alpha=1}^n \rmX_{\alpha,j}^2}$ for different $j$. As such, it is natural to ask how updating $\rmX\leftarrow\rmX + \DeltaX$ updates these moment statistics. We first study the second moment update $u_2$:
\begin{align}
    u_2 \defeq m_2(\rmX + \DeltaX) - m_2(\rmX) =& \frac{1}{d}\sum_{j=1}^d\bigg(\frac{1}{n} {\sum_{\alpha=1}^n} (\rmX + \DeltaX)_{\alpha,j}^2\bigg) - \frac{1}{d}\sum_{j=1}^d\bigg(\frac{1}{n} {\sum_{\alpha=1}^n} \rmX_{\alpha,j}^2\bigg)\\
    =& \frac{1}{nd}\big(u_{2,1} + u_{2,2}\big), \hspace{1em}\text{with} \\
    u_{2,1} \defeq \sumja& 2\DeltaX_{\alpha,j}\rmX_{\alpha,j}, \hspace{1em} u_{2,2} \defeq \sumja (\DeltaX_{\alpha,j})^2.
\end{align}
Likewise for the fourth moment update $u_4$:
\begin{align}
    u_4 \defeq m_4(\rmX + \DeltaX) - m_4(\rmX)  = \frac{1}{d}\sum_{j=1}^d\bigg(\frac{1}{n} {\sum_{\alpha=1}^n} (\rmX& + \DeltaX)_{\alpha,j}^2\bigg)^2 - \frac{1}{d}\sum_{j=1}^d\bigg(\frac{1}{n} {\sum_{\alpha=1}^n} \rmX_{\alpha,j}^2\bigg)^2\\
    = \frac{1}{n^2d}\big(u_{4,1} + u_{4,2}& + u_{4,3} + u_{4,4}\big), \hspace{1em}\text{with} \\
    u_{4,1} \defeq   \sumjab4\DeltaX_{\alpha,j}\rmX_{\alpha,j}\rmX_{\beta,j}^2, \hspace{1.em} u_{4,2} \defeq\sumjab&2(\DeltaX_{\alpha,j})^2\rmX_{\beta,j}^2+ 4 \DeltaX_{\alpha,j}\rmX_{\alpha,j} \DeltaX_{\beta,j}\rmX_{\beta,j},\\
    u_{4,3}\defeq\sumjab4\rmX_{\alpha,j}\DeltaX_{\alpha,j}(\DeltaX_{\beta,j})^2, \hspace{1em} u_{4,4} \defeq & \sumjab(\DeltaX_{\alpha,j})^2(\DeltaX_{\beta,j})^2.
\end{align}
Above, we have broken down the $p^{\text{th}}$ moment update $u_p$ into $(u_{p,l})_l$, where $u_{p,l}$ denotes the contribution to $u_p$ that is order $l$ in $\DeltaX$. The reason for this is that, typically, a learning rate parameter $\eta$ is used such $\DeltaX$ is linear in $\eta$, and so $u_{p,l}$ is order $l$ in $\eta$.\footnote{For example, if we have $\rmX = \rmH\rmW$ for a previous layer $\rmH$ that is fixed (e.g. embedding layer in a transformer). Then we usually update weights $\rmW + \DeltaW$ linearly in $\eta$, and so $\DeltaX=\rmH\DeltaW$ is also linear in $\eta$. For other layers we need to consider the change in $\rmH$ too, but this will also be linear in $\eta$ to leading order.} Usually, $\eta$ is chosen to be small such that $\DeltaX$ is small element-wise relative to $\rmX$. Note that the quadratic update terms $u_{p,2}$ are always positive,\footnote{This is straightforward to see for $u_{2,2}$. For $u_{4,2}$ the second summand can be factorised as $\sum_j \big(\sum_\alpha \rmX_{\alpha,j}\Delta_{\alpha,j}\big)^2$ which is positive.} whereas the linear terms $u_{p,1}$ are not necessarily positive, so we might expect quadratic terms to drive any increase in the $p^{\text{th}}$ moment $m_p$.

In \cref{fig:kurt_upds_cum=True}, we plot the cumulative sum of these $(u_{p,l})$ terms, for our OP block, a default Pre-LN block, and also three modifications that reduce OFs in Pre-LN (increasing Adam epsilon from $1e-8$ to $1e-4$, reducing maximum LR from $1e-3$ to $3e-4$, and using a non-diagonal preconditioner e.g. SOAP \cite{vyas2024soap}) trained on CodeParrot at 130M scale. We see indeed that the cumulative $u_{4,2}$ quadratic term dominates the update to $u_4$ and drives the increase in $m_4$ in the default Pre-LN model. Reducing LR, increasing Adam $\epsilon$, and using SOAP reduce this term with Pre-LN, which then reduces the growth in fourth moment and kurtosis. For the choice of LR this is intuitive: in the small LR $\eta\rightarrow 0$ limit the linear first order term $u_{4,1}$ will dominate and the effect of quadratic $u_{4,2}$ can be ignored. The impact of sub-leading order terms like $u_{4,2}$ in OFE is related to the discretisation drift between discrete-time gradient descent and continuous-time gradient flow \cite{rosca2023discretisation}. \cref{fig:kurt_upds_cum=False} plots the non-cumulative version of \cref{fig:kurt_upds_cum=True}.

On the other hand, in \cref{fig:kurt_upds_cum=True} the OP block has a relatively large cumulative increase from $u_{4,2}$ that is matched by a decrease in $u_{4,1}$ and a large increase in $u_2$, which means the kurtosis (which is the ratio $m_4/m_2^2$) does not increase as much as Pre-LN. \cref{fig:m4subv2} shows that $u_{4,2}$ dominates the cubic $u_{4,3}$ and quartic $u_{4,4}$ update terms to the fourth moment, so we can focus on studying $u_{4,2}$. We plot the moment updates for the input to the second attention block (out of six).

The models presented in \cref{fig:kurt_upds_cum=True,fig:kurt_upds_cum=False,fig:m4subv2} were trained using Adam or SOAP without momentum, akin to RMSProp \cite{tieleman2012lecture}: we set $\beta_1=0$ and $\beta_2=0.95$ in Adam.\footnote{Recall SOAP \citep{vyas2024soap} is just Adam in a rotated basis (obtained via Shampoo), and still uses $\beta_1, \beta_2$ hyperparameters} The reason for this was to separate out the contribution of individual training steps on the kurtosis updates. If instead we re-introduce momentum with $\beta_1=0.9$, then the different update steps become mixed and the leading order $u_{4,1}$ dominates the updates to the kurtosis for the Pre-LN model, as seen in \cref{fig:kurt_upds_cum=True-momentum}. The choice of $\beta_2=0.95$ highlights that OFs are not specific to the standard choice of $\beta_2=0.999$ in AdamW.

\begin{figure}[h]
    \centering
    \includegraphics[width=0.8\linewidth]{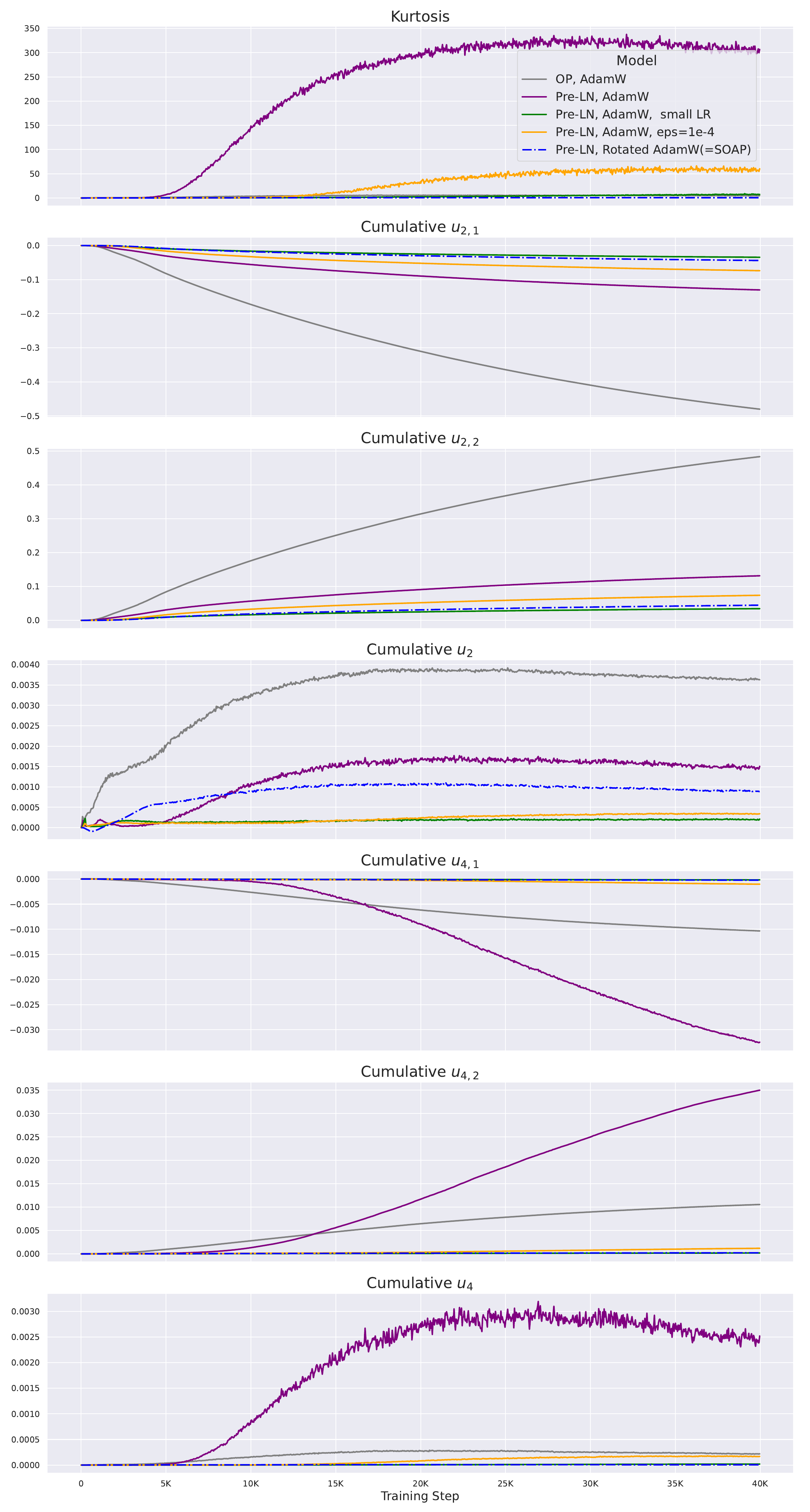}
    \caption{Cumulative metrics to track kurtosis updates. Models were trained without momentum. We see that the quadratic $u_{4,2}$ term dominates updates to the fourth moment.}
    \label{fig:kurt_upds_cum=True}
\end{figure}
\begin{figure}[h]
    \centering
    \includegraphics[width=0.8\linewidth]{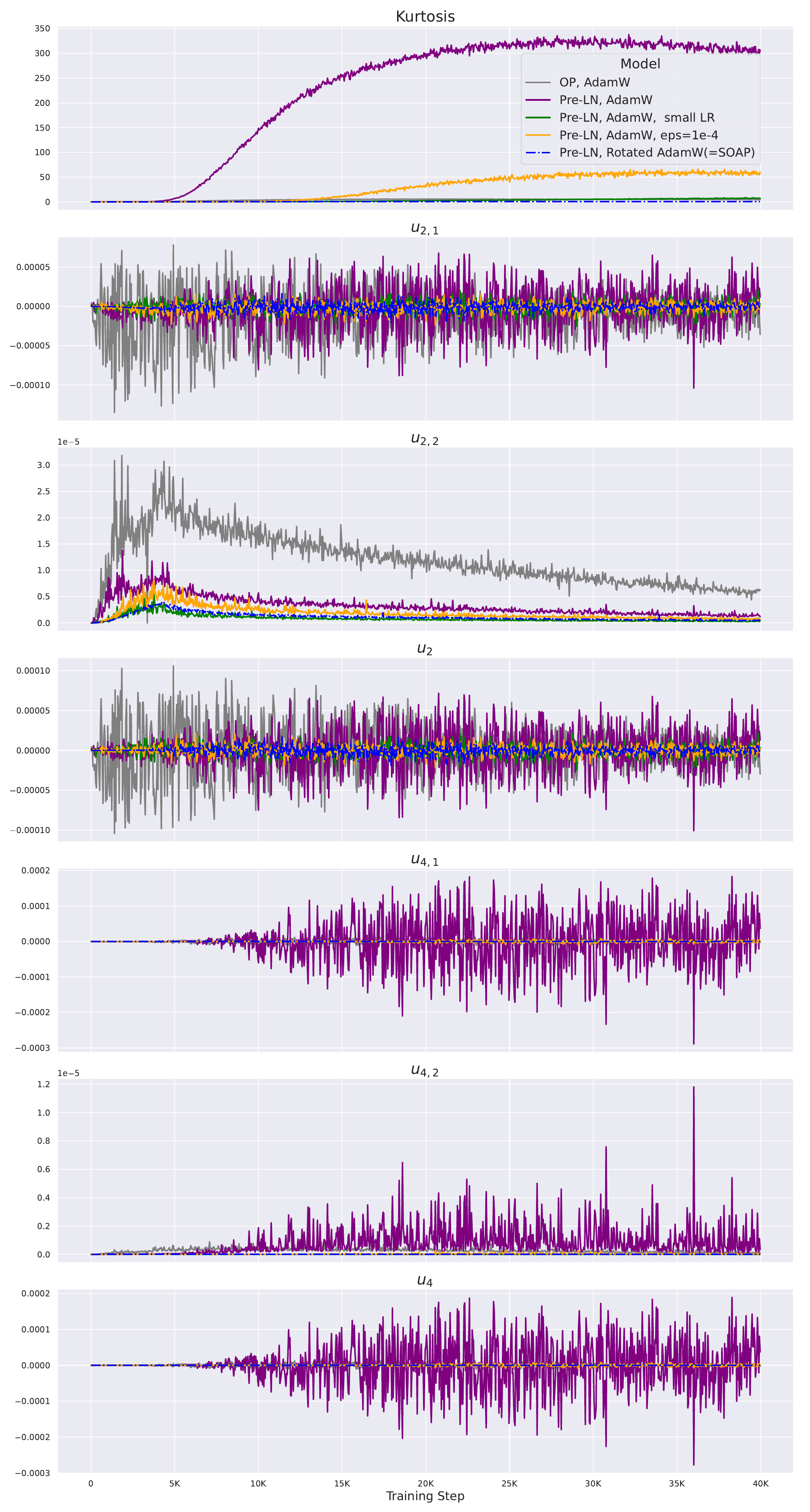}
    \caption{Non-cumulative metrics to track kurtosis updates. Models were trained without momentum.}
    \label{fig:kurt_upds_cum=False}
\end{figure}
\begin{figure}[h]
    \centering
    \includegraphics[width=0.8\linewidth]{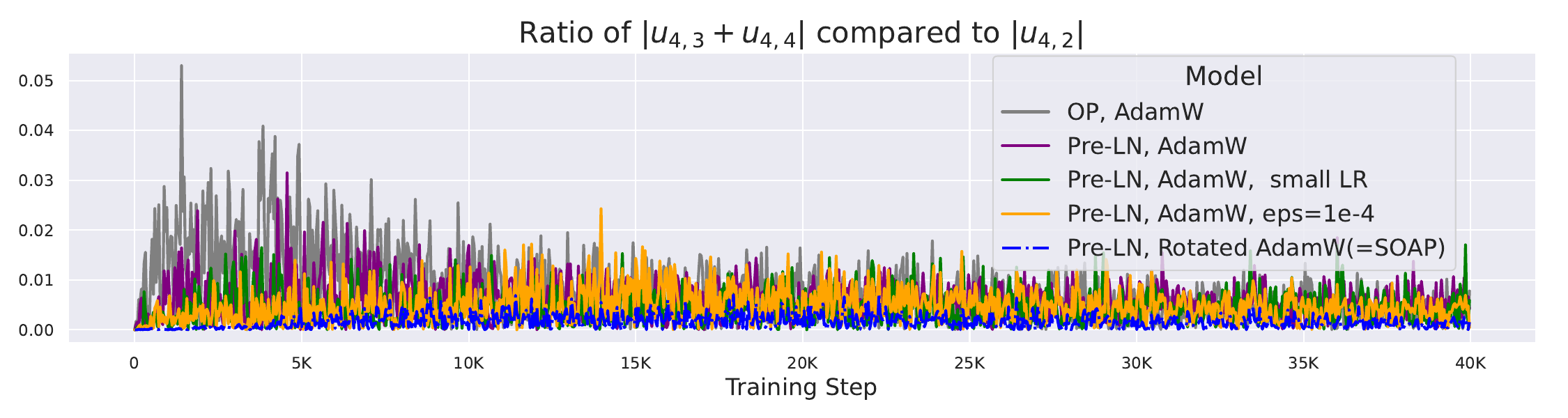}
    \caption{Sub-leading order terms are dominated by $u_{4,2}$.}
    \label{fig:m4subv2}
\end{figure}
\begin{figure}[h]
    \centering
    \includegraphics[width=0.8\linewidth]{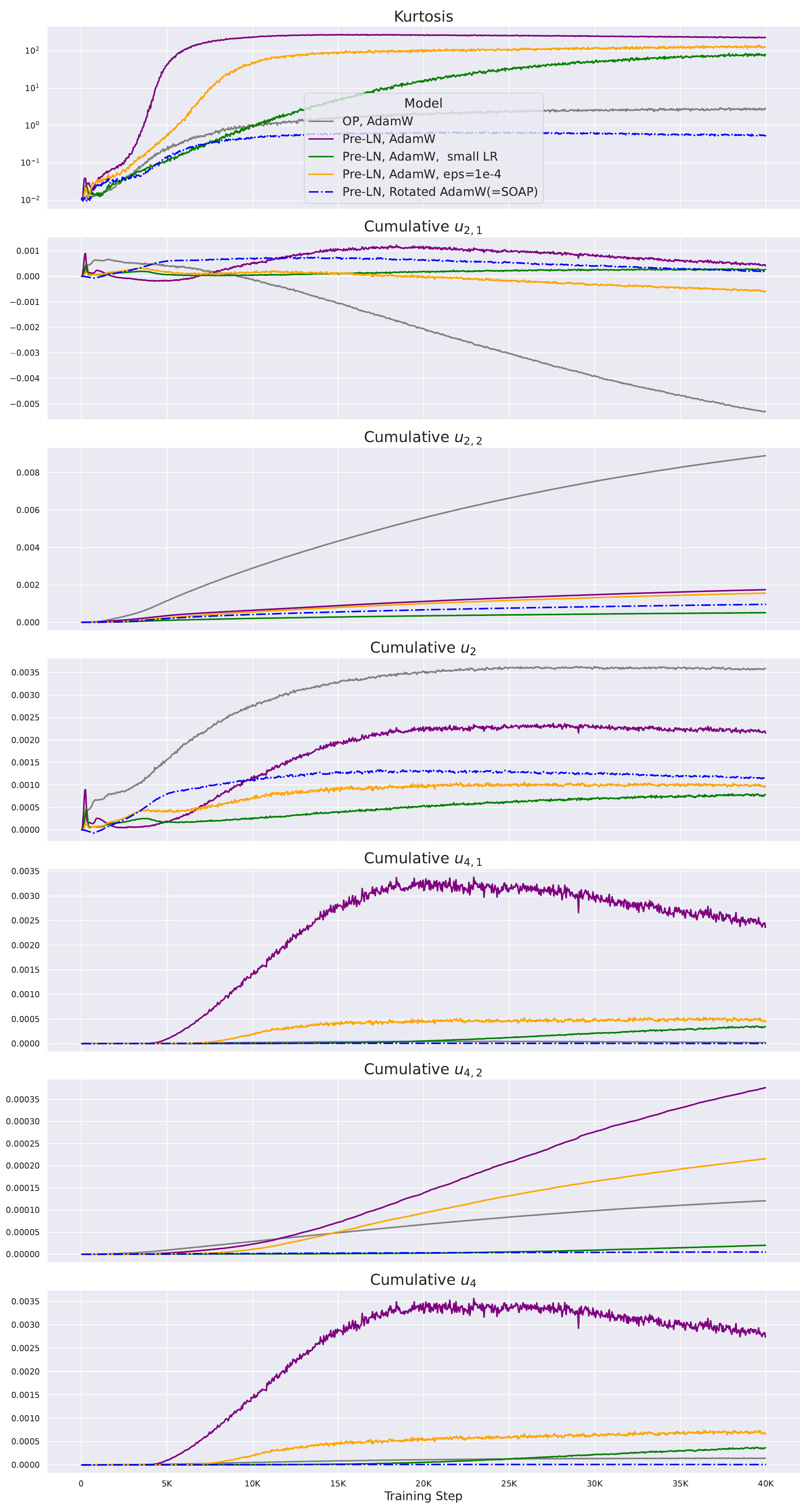}
    \caption{Cumulative metrics to track kurtosis updates. Models trained \textbf{with} momentum. The leading order $u_{4,1}$ term now dominates the updates to the fourth moment.}
    \label{fig:kurt_upds_cum=True-momentum}
\end{figure}
\clearpage

\section{Worse Signal Prop Means Higher Activation Kurtosis in Gaussian Features}

\begin{proposition}[Bad Signal Propagation implies higher kurtosis for Gaussian features]\label{prop:sigprop_ofe}
    Suppose we have $\rmX\in\mathbb{R}^{n\times d}$ zero-mean Gaussian distributed with all inputs uniformly correlated with some $\rho>0$, and independent features (across columns). That is: $\mathbb{E}[\rmX]=\mathbf{0}$ and  $\mathbb{E}[\rmX_{\alpha,j} \rmX_{\beta,k}]= \rho \cdot \mathbf{1}\{j=k\} + (1-\rho)\cdot\mathbf{1}\{j=k\}\cdot\mathbf{1}\{\alpha=\beta\}$.\footnote{Note this covariance gives a ``uniform'' correlation structure $\mathbb{E}[\frac{1}{d}\rmX \rmX^\top]=(1-\rho)\rmI_n + \rho \mathbf{1}_n\mathbf{1}_n^\top$, which has been studied before in \citet{noci2022signal,he2023deep} as a way to study signal propagation in sequences. Rank collapse \cite{dong2021attention} is when $\rho=1$.}

    Then, if we consider the feature-wise Gram matrix $\sigmaf=\frac{1}{n}\rmX^\top \rmX$, we have that the expected squared diagonal entry of $\sigmaf$ is $\mathbb{E}[(\sigmaf)_{1,1}^2]= 1 + 2\rho^2 + o_n(1)$ increases as $\rho$ increases, whereas the expected diagonal entry is $\mathbb{E}[(\sigmaf)_{1,1}]= 1$ is independent of $\rho$.
\end{proposition}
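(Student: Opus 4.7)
The plan is to exploit two facts: columns of $\rmX$ are i.i.d.\ (so we can focus on a single column), and for centred Gaussians the fourth-order joint moments are fully determined by second-order moments via Isserlis' theorem. The diagonal entry of $\sigmaf$ only involves squared entries of column $1$ of $\rmX$, so the entire argument reduces to moment computations for the Gaussian vector $\rmX_{\cdot,1}\in\mathbb{R}^n$ with covariance $(1-\rho)\rmI_n + \rho\mathbf{1}_n\mathbf{1}_n^\top$.

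First I would establish the easy claim. By definition $(\sigmaf)_{1,1} = \frac{1}{n}\sum_{\alpha=1}^n \rmX_{\alpha,1}^2$, and the given covariance gives $\mathbb{E}[\rmX_{\alpha,1}^2] = \rho\cdot 1 + (1-\rho)\cdot 1 = 1$ for every $\alpha$, so $\mathbb{E}[(\sigmaf)_{1,1}] = 1$ independent of $\rho$. This confirms that the mean diagonal entry carries no information about the input correlation.

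Next I would tackle the squared diagonal by expanding
\begin{equation*}
\mathbb{E}[(\sigmaf)_{1,1}^2] = \frac{1}{n^2}\sum_{\alpha,\beta=1}^n \mathbb{E}[\rmX_{\alpha,1}^2\rmX_{\beta,1}^2],
\end{equation*}
and splitting into the diagonal ($\alpha=\beta$) and off-diagonal ($\alpha\neq\beta$) contributions. For the diagonal terms, the standard Gaussian fourth moment gives $\mathbb{E}[\rmX_{\alpha,1}^4] = 3$. For the off-diagonal terms, Isserlis' theorem applied to the zero-mean Gaussian pair $(\rmX_{\alpha,1},\rmX_{\beta,1})$ with $\mathbb{E}[\rmX_{\alpha,1}\rmX_{\beta,1}]=\rho$ yields $\mathbb{E}[\rmX_{\alpha,1}^2\rmX_{\beta,1}^2] = 1\cdot 1 + 2\rho^2 = 1 + 2\rho^2$.

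Combining, the sum equals $\tfrac{1}{n^2}\bigl[3n + n(n-1)(1+2\rho^2)\bigr]$, which simplifies to $1 + 2\rho^2 + \tfrac{2(1-\rho^2)}{n}$. The correction is $O(1/n) = o_n(1)$, yielding the claimed expansion. Monotonicity in $\rho$ then follows immediately from the leading $2\rho^2$ term. There is no real obstacle here; the only thing to be careful about is applying Isserlis' formula with the correct covariance $\rho$ (rather than $1$ or $1-\rho$), which is fixed by re-reading the assumed covariance structure.
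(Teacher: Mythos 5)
Your proof is correct, and it takes a genuinely different (and arguably cleaner) route than the paper's. The paper introduces an explicit latent decomposition $\rmX_{\alpha,j} = \sqrt{1-\rho}\,u_{\alpha,j} + \sqrt{\rho}\,v_j$ with independent standard Gaussians $u,v$, expands $(\rmX^\top\rmX)_{k,k}^2$ into squares and cross-products of these terms, and then evaluates each piece via properties of $\chi^2_1$ and $\chi^2_n$ random variables. You instead expand $\mathbb{E}[(\sigmaf)_{1,1}^2]=\tfrac{1}{n^2}\sum_{\alpha,\beta}\mathbb{E}[\rmX_{\alpha,1}^2\rmX_{\beta,1}^2]$ and evaluate each fourth-order Gaussian moment directly from the covariance via Isserlis/Wick, using $\mathbb{E}[X^2Y^2]=\mathbb{E}[X^2]\mathbb{E}[Y^2]+2\,\mathbb{E}[XY]^2$ for centred jointly Gaussian $X,Y$. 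Both routes are valid, but yours avoids the auxiliary construction and bookkeeping of cross-terms, gets straight to the moments that matter, and naturally produces the exact finite-$n$ answer $1+2\rho^2+\tfrac{2(1-\rho^2)}{n}$ rather than only the leading order (the paper's terms also sum to this, but it never simplifies the $O(n)$ piece). The paper's decomposition has the minor pedagogical advantage of making the ``common factor plus idiosyncratic noise'' structure of uniform correlations explicit, which is a convenient picture for signal propagation, but nothing in the proof depends on it.
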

\begin{proof}
    As Gaussians are determined by their first two moments, let us suppose that $\rmX_{\alpha,j}= \sqrt{1-\rho} u_{\alpha,j} + \sqrt{\rho}v_j$, where $(u_{\alpha,j})_{\alpha,j}$ and $(v_j)_j$ are independent standard Gaussians. Then, for two neuron indices $k,l\leq d$ we have:

    \begin{align}
        \big(\rmX^T \rmX\big)_{k,l} = \hspace{.1em}&(1-\rho)\sum_{\alpha\leq n} u_{\alpha,k}u_{\alpha,l} \label{eq:chi1}\\ 
        &+ \rho n v_{k}v_l       \label{eq:chi2}  \\ 
        &+ \sqrt{\rho(1-\rho)}\sum_{\alpha\leq n}u_{\alpha, k}v_k + u_{\alpha,l}v_l.\label{eq:cross}
    \end{align}

We are interested in the diagonal elements of $\sigmaf=\frac{1}{n}\rmX^\top \rmX$, when $k=l$ above. In this case, we have $(u^2_{\alpha,k})_\alpha$ and $v_k^2$ are all independent chi-squared ${\chi}^2$ distributed with 1 degree of freedom. For $Z\sim \chi^2_1$, we have $\mathbb{E}[Z] = 1 $ and $\mathbb{E}[Z^2]=3$.

For the first moment, we take the expectation above and note that the summands of \cref{eq:cross} are products of independent zero-mean Gaussians (so zero mean). This gives $\mathbb{E}[\rmX^T \rmX_{k,k}] = n$ and hence $\mathbb{E}[(\sigmaf)_{1,1} ]= 1$, as required.

For the second moment, we note that all cross products in $\big(\rmX^T \rmX\big)^2_{k,k}$ will disappear in expectation when we square besides the one involving \cref{eq:chi1,eq:chi2}, as both terms will be $\chi_1^2$ distributed (hence not zero-mean). On the other hand, all cross products involving \cref{eq:cross} will be an odd order in at least one zero-mean independent Gaussian (hence zero-mean).

The square of \cref{eq:chi1} is $(1-\rho)^2 n(n+2)$ in expectation, which can be seen by the fact that $\sum_{\alpha\leq n} u_{\alpha,k}^2$ is actually a $\chi_n^2$ distribution, with mean $n$ and variance $2n$. Hence for $Z\sim \chi_n^2$, we have $\mathbb{E}[Z^2] = \mathbb{E}[Z]^2 + \text{Var}(Z) = n^2 + 2n$.

The square of \cref{eq:chi2} is $3\rho^2 n^2$ in expectation, again by properties of $\chi_1^2$ random variables.

The square of \cref{eq:cross} is $O(n)$ (in fact $4\rho(1-\rho)n$) in expectation and will be dominated by the $O(n^2)$ terms. To see this, we note that \cref{eq:cross} is a sum of $n$ zero mean i.i.d. random variables, so one can use the additive property of variances for independent random variables.

Finally, the cross term between \cref{eq:chi1,eq:chi2} is $2\rho(1-\rho) n^2$ in mean. One factor of $n$ comes from the sum of inputs $\alpha\leq n$ and the other comes from \cref{eq:chi2} already. The product of two independent $\chi_1^2$ random variables is 1 in expectation.

Putting this all together, we have 

\begin{align}\label{eq:final_proof}
\mathbb{E}[\rmX^T \rmX_{k,k}^2] = &(1-\rho)^2 n(n+2) + 3\rho^2n^2 + 4\rho(1-\rho)n + 2\rho(1-\rho)n^2\\
=& \big((1-\rho)^2 + 3\rho^2 + 2\rho-2\rho^2 \big)n^2 + O(n)\\
=& (1+2\rho^2)n^2 + O(n)
\end{align}
As $\sigmaf = \frac{1}{n}\rmX^T \rmX$, we divide \cref{eq:final_proof} by $n^2$, and obtain our desired result.
\end{proof}

Above, we note that $\mathbb{E}[(\sigmaf)_{1,1}^2]$ is equivalent to the fourth moment $m_4$ in our feature-wise kurtosis definition \cref{eq:kurt}, while $\mathbb{E}[(\sigmaf)_{1,1}]$ corresponds to the second moment $m_2$. Hence, \cref{prop:sigprop_ofe} demonstrates that worse signal propagation (in terms of higher $\rho$) leads to higher kurtosis. 

We note that the result is restricted to a Gaussian setting with independent features. This is an accurate description of large-width NN initialisation \cite{alexander2018matthews,lee2018deep,yangtp1}, but does not capture training dynamics as we discuss in the main paper. Indeed, the maximum kurtosis $(1+2\rho^2)$ is $3$ when $\rho=1$, whereas in our experiments we obtain much higher values during training (and the maximum is the width $d$, which is considerably larger than $3$ in practice). This represents a gap in our theoretical understanding and practice, which we leave for future study.

\end{document}